\title[An adaptive stochastic optimization algorithm for resource allocation]{An adaptive stochastic optimization algorithm for resource allocation}
\newtheorem{assumption}{Assumption}
\newcommand{\bN}{\mathbb{N}}
\newcommand{\bP}{\mathbb{P}}
\newcommand{\bR}{\mathbb{R}}
\newcommand{\cF}{\mathcal{F}}
\newcommand{\cJ}{\mathcal{J}}
\newcommand{\sL}{\mathscr{L}}
\newcommand{\lagr}{\sL}
\newcommand{\dR}{\mathds{R}}
\newcommand{\dpart}[2]{\dfrac{\partial #1}{\partial #2}}
\newcommand{\eps}{\varepsilon}
\newcommand{\xt}{x^{(t)}}
\newcommand{\argmin}{\mbox{argmin}}
\newcommand{\argmax}{\mbox{argmax}}
\newcommand{\et}{\mbox{and}}
\newcommand{\sign}{\mbox{sign}}
\newcommand{\la}{\langle}
\newcommand{\ra}{\rangle}
\newcommand{\dd}{\, \mbox{d}}
\newcommand{\norm}[1]{\left\Vert#1\right\Vert}
\newcommand{\normi}[1]{\norm{#1}_{\infty}}
\newcommand{\normun}[1]{\norm{#1}_{1}}
\DeclarePairedDelimiter\abs{\lvert}{\rvert}
\newcommand{\st}{^{\star}}
\newcommand{\bRpp}{\bR^*_+}
\renewcommand{\t}{^{\top}}
\newcommand{\point}{\cdot\,}
\newcommand{\bigo}{\mathcal{O}}
\newcommand{\bigot}{\widetilde{\mathcal{O}}}
\newcommand{\DK}{\Delta^K}
\newcommand{\logd}{\log_2}
\newcommand{\jm}{j_{\max}}
\newcommand{\jum}{j_{1,\min}}
\newcommand{\jdm}{j_{2,\max}}
\newcommand{\rmax}{r_{\max}}
\newcommand{\smax}{s_{\max}}
\newcommand{\ie}{\textit{i.e.,}}
\newcommand{\eg}{\textit{e.g.,}}
\newcommand\numberthis{\addtocounter{equation}{1}\tag{\theequation}}
\newcommand{\sj}{\sum_{j=1}^{\jm}}
\newcommand{\sJ}{\sum_{j \in \cJ_1}}
\newcommand{\sJJ}{\sum_{j \in \cJ_2}}
\newcommand{\hgj}{\hat{g}_j}
\newcommand{\hnabla}{\widehat{\nabla}}
\newcommand{\FF}{\cF(g_1,\dots,g_{\jm})}
\newcommand{\D}[2]{\mathscr{D}^{(#1)}_{#2}}
\newcommand{\F}[2]{F^{(#1)}_{#2}}
\newcommand{\G}[2]{G^{(#1)}_{#2}}
\newcommand{\g}[2]{g^{(#1)}_{#2}}
\renewcommand{\H}[2]{H^{(#1)}_{#2}}
\newcommand{\R}[2]{R^{(#1)}_{#2}}
\newcommand{\T}[2]{T^{(#1)}_{#2}}
\newcommand{\z}[2]{z^{(#1)}_{#2}}
\newcommand{\Dij}{\D{i}{j}}
\newcommand{\Fij}{\F{i}{j}}
\newcommand{\Gij}{\G{i}{j}}
\newcommand{\gij}{\g{i}{j}}
\newcommand{\Hij}{\H{i}{j}}
\newcommand{\Rij}{\R{i}{j}}
\newcommand{\Tij}{\T{i}{j}}
\newcommand{\zij}{\z{i}{j}}
\newcommand{\LL}{\L ojasiewicz~}
\begin{document}

\maketitle


\begin{abstract}%
We consider the classical problem of sequential resource allocation where a decision maker must repeatedly divide a budget between several resources, each with diminishing returns. This can be recast as a specific stochastic optimization problem where the objective is to maximize the cumulative reward, or equivalently to minimize the regret. We construct an algorithm that is {\em adaptive} to the complexity of the problem, expressed in term of the regularity of the returns of the resources, measured by the exponent in the \L ojasiewicz inequality (or by their universal concavity parameter). Our parameter-independent algorithm recovers the optimal rates for strongly-concave functions and the classical fast rates of multi-armed bandit (for linear reward functions). Moreover, the algorithm improves existing results on stochastic optimization in this regret minimization setting for intermediate cases.
\end{abstract}

\begin{keywords}%
Stochastic optimization, online learning, adaptive algorithms, resource allocation
\end{keywords}

\section{Introduction}
In the classical resource allocation problem, a decision maker has a fixed amount of budget (money, energy, work, etc.) to divide between several resources. Each of these resources is assumed to produce a positive return for any amount of budget allocated to them, and zero return if no budget is allocated to them \citep{samuelson2005macroeconomics}. The resource allocation problem is an age-old problem that has been theoretically investigated by~\citet{koopman} and that has attracted much attention afterwards~\citep{stochastic_ra, online_ra} due to its numerous applications (\eg~production planning or portfolio selection) described for example by~\citet{gross} and~\citet{resource_allocation}. Other applications include cases of computer scheduling, where concurrent processes compete for common and shared resources. This is the exact same problem encountered in load distribution or in project management where several tasks have to be done and a fixed amount of money/time/workers has to be distributed between those tasks. Flexible Manufacturing Systems (FMS) are also an example of application domain of our problem~\citep{fms} and motivate our work. Resource allocation problems arise also in the domain of wireless communications systems, for example in the new 5G networks, due to the exponential growth of wireless data~\citep{noma}. Finally utility maximization in economics is also an important application of the resource allocation problem, which explains that this problem has been particularly studied in economics, where classical assumptions have been made for centuries~\citep{smith1776}. One of them is the \textit{diminishing returns assumption} that states that ``adding more of one factor of production, while holding all others constant, will at some point yield lower incremental per-unit returns''\footnote{See \url{https://en.wikipedia.org/wiki/Diminishing_returns}}. This natural assumption means that the reward or utility per invested unit decreases, and can be linked to submodular optimization~\citep{welfare}.

In this paper we consider the online resource allocation problem with diminishing returns. A decision maker has to partition, at each stage, \$1 between $K$ resources. Each resource has an unknown reward function which is assumed to be concave and increasing. As the problem is repeated in time,  the decision maker can gather information about the reward functions and sequentially learn the optimal allocation. We assume that the reward itself is not observed precisely, but rather a noisy version of the gradient is observed. As usually in sequential learning -- or  bandit -- problems~\citep{MAL-024}, the natural objective is to maximize the cumulative reward, or equivalently, to minimize the difference between the obtained allocation, namely the regret. 

This problem is a generalization of linear resource allocation problems,  widely studied in the last decade~\citep{NIPS2015_5931,pmlr-v83-dagan18a}, where the reward functions are assumed to be linear, instead of being  concave. Those approaches borrowed ideas from linear bandits~\citep{Dani2008StochasticLO,Abbasi-Yadkori:2011}. Several UCB-style algorithms with nearly optimal regret analysis have been proposed for the linear case. More general algorithms were also developed to optimize an unknown convex function with bandit feedback
~\citep{agarwal2011, Agrawal:2014, Agrawal:2015} to get a generic  $\bigot(\sqrt{T})$\footnote{The $\bigot(\cdot)$ notation is used to hide poly-logarithmic factors.} regret bound which is actually unavoidable with bandit feedback~\citep{Shamir13}. We consider instead that the decision maker has a noisy gradient feedback, so that the regularity of the reward mappings can be leveraged to recover faster rates (than $\sqrt{T}$) of convergence when possible.

There are several recent works dealing with (adaptive) algorithms for first order stochastic convex optimization. On the contrary to classical gradient-based methods, these algorithms are agnostic and adaptive to some complexity parameters of the problem, such as the smoothness or strong convexity parameters. For example,~\citet{judistky} proposed an adaptive algorithm to optimize uniformly convex functions and~\citet{RS15} generalized it with an epoch-based Gradient Descent algorithm using active learning techniques, also to minimize uniformly convex functions. Both obtain optimal bounds in $\bigot\left(T^{-\rho/(2 \rho-2)} \right)$ for the function-error $\norm{f(x_t)-f^*}$ where $f$ is supposed to be $\rho$-uniformly convex (see Subsection~\ref{ssec:complexity} for a reminder on this regularity concept).
However those algorithms would only achieve a $\sqrt{T}$ regret (or even a linear regret) because they rely on a structure of phases of unnecessary lengths.  So in that setting, regret minimization appears to be  much more challenging  than function-error minimization. To be precise, we  actually consider an even weaker concept of regularity than uniform convexity:  the \LL inequality~\citep{PMIHES_1988__67__5_0,Bolte}. Our objective is to devise an algorithm that can leverage this assumption, without the prior knowledge of the \LL exponent, \ie~to construct an adaptive algorithm unlike precedent approaches~\citep{Kar16}.

\medskip
\noindent \textbf{High-level description of the algorithms and organization of the paper.}
The algorithm we are going to introduce is based on the concept of dichotomy, or  binary search, which has already been slightly investigated in stochastic optimization~\citep{BZ74, minimax, RS13}. The specific case of $K=2$ resources is studied in Section~\ref{sec:k2}. The algorithm proposed is quite simple: it queries a point repeatedly, until it learns the sign of the gradient of the reward function, or at least with arbitrarily high probability. Then it proceeds to the next step of a standard binary search.

We will then consider, in Section~\ref{sec:k4}, the case of $K\geq3$ resources by defining a binary tree of the $K$ resources and handling each decision using the $K=2$ algorithm as a black-box. Our main result can be stated as follows: if the base reward mappings of the resources are $\beta$-\LL functions, then our algorithm has a  $\bigot(T^{-\beta/2})$ regret bound if $\beta \leq 2$ and $\bigot(T^{-1})$ otherwise. We notice that for $\beta \leq 2$ we recover existing bounds (but for the more demanding regret instead of function-error minimization)~\citep{judistky, RS15} since a $\rho$-uniformly convex function can be proven to be $\beta$-\LL with $\beta=\rho/(\rho-1)$. We complement our results with a lower bound that indicates the tightness of these bounds. Finally we corroborate  our theoretical findings with some experimental results, postponed to Appendix \ref{Appendix:Expe}.

Our main contributions are the design of an efficient algorithm to solve the resource allocation problem with concave reward functions. We show that our algorithm is adaptive to the unknown complexity parameters of the reward functions. Moreover we propose a unified analysis of this algorithm for a large class of functions. It is interesting to notice that our algorithm can be seen as a first-order convex minimization algorithm for separable loss functions. The setting of separable loss functions is still common in practice, though not completely general. Furthermore we prove that our algorithm outperforms other convex minimization algorithms for a broad class of functions. Finally we exhibit links with bandit optimization and we recover classical bandit bounds within our framework, highlighting the connection between bandits theory and convex optimization.

First, let us  introduce in Section \ref{sec:Model} the following general model and the different regularity assumptions mentioned above.

\section{Model and Assumptions}
\label{sec:Model}

\subsection{Problem Setting}
Assume a decision maker has access to $K \in \bN^*$ different resources. We assume naturally that the number of resources $K$ is not too large (or infinite). At each time step $t \in \bN^*$, the agent has to split a total budget of weight $1$ and to allocate $x_k^{(t)}$ to each resource $k \in [K]$ which generates the reward $f_k(x_k(t))$. Overall, at this stage, the reward of the decision maker is then
\[
F(x^{(t)}) = \sum_{k \in [K]} f_k(x_k^{(t)}) \quad \text{ with } \ x^{(t)} = (x_1^{(t)}, \ldots, x_K^{(t)}) \in \DK,
\]
where the simplex $\DK = \big\{ (p_1,\ldots,p_K) \in \bR^K_+; \,  \sum_k p_k =1\big\}$ is the set of possible convex weights.

We note $x^\star \in \DK$ the optimal allocation that maximizes $F$ over $\DK$; the objective of the decision maker is to maximize the cumulated reward, or equivalently to  minimize the regret $R(T)$, defined as the difference between the optimal reward $F(x^\star)$ and the average reward over $T \in \bN^*$ stages:
\[
R(T) = F(x^\star) - \frac{1}{T} \sum_{t=1}^T  \sum_{k=1}^K f_k(x_k^{(t)}) = \max_{x \in \DK} F(x)  - \frac{1}{T} \sum_{t=1}^T F(x^{(t)}).
\]
The following diminishing return assumption on the reward functions $f_k$ is natural and  ensures that $F$ is concave and continuous, ensuring the existence of $x^\star$.

\begin{assumption}
The reward functions $f_k : [0,1] \to \bR$ are concave, non-decreasing and $f_k(0)=0$. Moreover we assume that they are differentiable, $L$-Lipschitz continuous and $L'$-smooth.
\end{assumption}

This assumption means that the more the decision maker invest in a resource, the greater the revenue. Moreover,  investing $0$ gives nothing in return. Finally  the marginal increase of revenue decreases.

We now describe the feedback model. At each time step the decision maker observes a noisy version of $\nabla F(x^{(t)})$, which is equivalent here to observing each $\nabla f_k(x_k^{(t)}) + \zeta_k^{(t)}$, where $\zeta_k^{(t)} \in \bR$ is some white bounded noise. 
The assumption of noisy gradients is classical in stochastic optimization and is similarly relevant for our problem: this assumption is quite natural as the decision maker can evaluate, locally and with some noise, how much a small increase/decrease of an allocation $x_k^{(t)}$ affects the reward. 

Consequently, the decision maker faces the problem of stochastic optimization of a concave and separable function over the simplex (yet with a cumulative regret minimization objective). Classical stochastic gradient methods from stochastic convex optimization would guarantee that the average regret decreases as $\bigot\left((K/T)^{1/2}\right)$ in general and as $\bigot\left(K/T\right)$ if the $f_k$ are known to be strongly concave. However, even without strong concavity, we claim that it is possible to obtain better regret bounds than $\bigot\left((K/T)^{1/2}\right)$ and, more importantly, to be adaptive to some complexity parameters.

The overarching objective is then to leverage the specific structure of this natural problem to provide a generic algorithm that is naturally \textbf{adaptive} to some complexity measure of the problem. It will, for instance, \textbf{interpolate} between the non-strongly concave and the strongly-concave rates without depending on the strong-concavity parameter, and \textbf{recover} the fast rate of classical multi-armed bandit (corresponding more or less to the case where the $f_k$ functions are linear). Existing algorithms for adaptive stochastic convex optimization~\citep{RS15, judistky} are not applicable in our case since they work for function-error minimization and not regret minimization (because of the prohibitively large stage lengths they are using).

\subsection{The complexity class}
\label{ssec:complexity}

As mentioned before, our algorithm will be adaptive to some general complexity parameter of the set of functions ${\mathcal{F}}=\{f_1,\ldots,f_K\}$, which relies on the \LL inequality~\citep{PMIHES_1988__67__5_0,Bolte} that we state now, for concave functions (rather than convex).

\begin{definition}
A function $f: \bR^d \to \bR$ satisfies the \LL inequality with respect to $\beta \in [1,+\infty)$ on its domain $\mathcal{X} \subset \bR^d$ if there exists a constant $c >0$ such that
\[
\forall x \in \mathcal{X}, \ \max_{x^* \in \mathcal{X}} f(x^*)   - f(x)  \leq c \| \nabla f(x)\|^\beta .
\]
\end{definition}

Given two functions $f ,g : [0,1] \to \bR$, we say that they satisfy \textbf{pair-wisely} the \LL inequality with respect to $\beta \in [1,+\infty)$ if the function $\left(z \mapsto f(z) + g(x-z) \right)$ satisfies the \LL inequality on $[0,x]$ with respect to $\beta$ for every $x \in [0,1]$. 

It remains to define the finest class of complexity of a set of functions $\mathcal{F}$. It is defined with respect to binary trees, whose nodes and leaves are labeled by functions. The trees we consider are constructed as follows. Starting from a finite binary tree of depth $\lceil\log_2(|\mathcal{F}|)\rceil$, its leaves are labeled with the different functions in $\mathcal{F}$ (and 0 for the remaining leaves if $|\mathcal{F}|$ is not a power of 2). The  parent node of $f_{\mathrm{left}}$ and $f_{\mathrm{right}}$  is then labeled by  the function $x \mapsto \max_{z \leq x} f_{\mathrm{left}}(z) + f_{\mathrm{right}}(x-z)$. 

We say now that $\mathcal{F}$ satisfies \textbf{inductively} the \LL inequality for $\beta \geq 1$ if in any binary tree labeled as above, any two siblings\footnote{To be precise, we could only require that this property holds for any siblings that are not children of the root. For those two, we only need that the mapping $f_{\mathrm{left}}(z) + f_{\mathrm{right}}(1-z)$ satisfies the local \LL inequality.} satisfy pair-wisely the \LL inequality for $\beta$.

Since the previous definition is quite intricate we can focus on some easier insightful sub-cases:

\begin{description}
\item[Uniformly concave functions~\citep{judistky}] A function $f:\bR^d\to \bR$ is uniformly-concave with parameters $\rho\geq 2$ and $\mu>0$ if and only if for all $x,y \in \bR^d$ and for all $\alpha \in [0,1]$,
\[
f(\alpha x + (1-\alpha)y) \geq \alpha f(x)+(1-\alpha)f(y)+\dfrac{\mu}{2}\alpha(1-\alpha)\left[\alpha^{\rho-1}+(1-\alpha)^{\rho-1}\right]\norm{x-y}^{\rho}.
\]
 If all functions $f_k$ are $(\rho_k,\mu_k)$-uniformly convex, then the relevant complexity parameter (for the rate of convergence) is $\beta_{\mathcal{F}}=\frac{\rho_{\mathcal{F}}}{\rho_{\mathcal{F}}-1}$ where $\rho_{\mathcal{F}}:=\max_k \rho_k$.
\item[Tsybakov Noise Condition (TNC)~\citep{RS15}] A function $f:\bR^d\to \bR$  satisfies the global TNC if with parameters $\kappa \geq 2$ and $\mu>0$ if and only if for all $x,y \in \bR^d$,
\[|f(x)-f(y)| \geq  \mu \| x-y\|^\kappa. \]
 If all functions $f_k$ satisfies $(\kappa_k,\mu_k)$-TNC, then the relevant complexity parameter (for the rate of convergence) is $\beta_{\mathcal{F}}=\frac{\kappa_{\mathcal{F}}}{\kappa_{\mathcal{F}}-1}$ where $\kappa_{\mathcal{F}}:=\max_k \kappa_k$.
\end{description}
 
More details about the \LL inequality (as well as examples and counter-examples) and its links with uniform convexity can be found in Appendix~\ref{app:loj}. Additional examples of class of functions satisfying inductively the \LL inequality can be found in Appendix~\ref{SE:complex}.

One could ask why the class of \LL functions is interesting. A result of~\citet{loja} shows that all analytic functions satisfy the (local) \LL inequality with a parameter $\beta > 1$. This is a strong result motivating our interest for the class of functions satisfying the \LL inequality. More precisely we prove the following proposition in Appendix~\ref{SE:complex}.

\begin{proposition}
\label{prop:analytic}
If the functions $\{f_1, \dots, f_K\}$ are real analytic and strictly concave then the class $\mathcal{F}$ satisfy inductively the \LL inequality with a parameter $\beta_{\mathcal{F}}>1$.
\end{proposition}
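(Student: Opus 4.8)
The plan is to leverage that the whole tree stays within the class of \emph{subanalytic} functions, for which \L ojasiewicz-type inequalities are available (this is precisely the setting of the reference \citep{PMIHES_1988__67__5_0}), and then to upgrade the \L ojasiewicz exponent from ``some finite value'' to ``some value strictly larger than~$1$'' using analyticity and strict concavity of the leaves.

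\textbf{Step 1: the tree is $C^1$, strictly concave and subanalytic.} I would prove by induction from the leaves to the root that every node function $g$ of any admissible tree is non-decreasing, concave, of class $C^1$, strictly concave, and subanalytic on $[0,1]$, with $\norm{g'}_\infty\le L$. The constant-$0$ leaves only pad the tree: a node with children $g$ and $0$ has for parent $x\mapsto\max_{z\le x}\{g(z)+0\}=g(x)$ since $g$ is non-decreasing, so one may assume the tree is grown from the $f_k$ alone, which are analytic and strictly concave, hence subanalytic on the compact interval $[0,1]$. For the inductive step with $g=\max_{z\in[0,x]}\{g_L(z)+g_R(x-z)\}$: concavity and monotonicity are elementary; strict concavity of $g$ follows from that of $g_L,g_R$ (the maximiser of a strictly concave max-convolution is unique, which forces a strict inequality in the definition of concavity); the maximiser $z^{\star}(x)$ is then unique, continuous, and in fact strictly increasing, so $g'(x)=g_L'(z^{\star}(x))$ is continuous, $g\in C^1$ and $\norm{g'}_\infty\le\norm{g_L'}_\infty\le L$; finally $g$ is subanalytic because subanalytic functions are stable under taking the supremum over a bounded subanalytic family (equivalently, $g_L,g_R$ and $g$ all lie in the o-minimal structure $\bR_{\mathrm{an}}$ of restricted analytic functions).

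\textbf{Step 2: a fibrewise \L ojasiewicz inequality.} Fix a pair of siblings $(g_L,g_R)$ and $x\in[0,1]$, and set $h_x(z)=g_L(z)+g_R(x-z)$ on $[0,x]$; it is $C^1$, strictly concave and subanalytic, with a unique maximiser $\hat z(x)$. Being subanalytic and strictly concave, $h_x$ is not locally constant near $\hat z(x)$, so it admits a Puiseux-type expansion there and its flatness order $p(x)<\infty$ is finite; moreover $\max_{[0,x]}h_x-h_x(z)\asymp\abs{z-\hat z(x)}^{p(x)}$ while $\norm{\nabla h_x(z)}=\abs{g_L'(z)-g_R'(x-z)}\asymp\abs{z-\hat z(x)}^{p(x)-1}$ (if $\hat z(x)$ is an endpoint at which $\nabla h_x$ does not vanish, the inequality below is trivial with any exponent). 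Since away from $\hat z(x)$ the gradient is bounded below while $\max_{[0,x]}h_x-h_x$ is bounded, the local estimate extends to all of $[0,x]$: $h_x$ satisfies the \L ojasiewicz inequality on $[0,x]$ with exponent $\beta_x:=p(x)/(p(x)-1)>1$ (and some constant, not necessarily uniform in $x$, which is all the definition requires).

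\textbf{Step 3: a uniform exponent, and conclusion.} It remains to bound $p(x)$ uniformly. I would show by induction that $\{g''=0\}$ is finite for every node function $g$: for the analytic leaves this holds because $f_k''\not\equiv0$ is analytic on the compact $[0,1]$; for the inductive step, differentiating the first-order condition $g_L'(z^{\star}(x))=g_R'(x-z^{\star}(x))$ yields $g''(x)=g_L''(z^{\star}(x))\,g_R''(x-z^{\star}(x))/\bigl(g_L''(z^{\star}(x))+g_R''(x-z^{\star}(x))\bigr)$, which vanishes only when $z^{\star}(x)\in\{g_L''=0\}$ or $x-z^{\star}(x)\in\{g_R''=0\}$, and both $x\mapsto z^{\star}(x)$ and $x\mapsto x-z^{\star}(x)$ are monotone and piecewise strictly monotone, so these are finite sets of $x$. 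Hence $\partial^2_{zz}h_x(\hat z(x))=g_L''(\hat z(x))+g_R''(x-\hat z(x))$ vanishes only for finitely many $x$, so $p(x)=2$ off a finite set and $p(x)<\infty$ on it, giving $p_{\max}:=\sup_{x\in[0,1]}p(x)<\infty$. Then $\beta_x\ge p_{\max}/(p_{\max}-1)>1$ for every $x$, and since the gradients are bounded, the \L ojasiewicz inequality with exponent $\beta_x$ implies the one with any smaller exponent (at the cost of a worse constant). Taking $\beta_{\mathcal F}:=\min p_{\max}^{(i)}/(p_{\max}^{(i)}-1)$ over the finitely many sibling pairs arising in the finitely many admissible trees gives $\beta_{\mathcal F}>1$, and every such pair satisfies pair-wisely the \L ojasiewicz inequality for $\beta_{\mathcal F}$; by definition $\mathcal F$ satisfies inductively the \L ojasiewicz inequality with parameter $\beta_{\mathcal F}>1$.

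\textbf{Main obstacle.} The crux is Step~3: obtaining an exponent that is strictly above~$1$ \emph{and} uniform in $x$. Finiteness of the flatness order is exactly where analyticity of the base functions enters (a non-constant analytic function is nowhere infinitely flat); the real work is showing this finiteness both propagates through the max-convolutions and stays \emph{uniformly} bounded as $x$ ranges over $[0,1]$, for which the finiteness of $\{g''=0\}$ together with the strict monotonicity of the maximiser is the key mechanism, while subanalyticity / o-minimality rules out pathologies (infinitely many oscillations, fractal zero sets, etc.) that could otherwise obstruct the argument.
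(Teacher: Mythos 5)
Your argument is correct in substance but follows a genuinely different route from the paper's. The paper proves (Lemma~\ref{lemma:analytic}) that the max-convolution of two strictly concave real analytic functions is again strictly concave and real analytic, via the analytic implicit function theorem applied to the first-order condition $f'(z_x)=g'(x-z_x)$ (invertibility coming from strict concavity); analyticity then propagates up the tree and the conclusion is read off from \citet{loja}. You instead keep only subanalyticity/o-minimality of the node functions, obtain the fibrewise inequality from a Puiseux expansion at the unique maximiser, and then explicitly bound the flatness order $p(x)$ uniformly in $x$ via finiteness of $\{g''=0\}$. That uniformity in $x$ is exactly what the paper's proof glosses over: each sibling pair generates a continuum of functions $h_x$, one per $x\in[0,1]$, each with its own local \LL exponent, and the paper's ``set $\beta$ to the maximum of all exponents in the tree'' does not address why a single $\beta_{\mathcal F}>1$ works for all $x$ simultaneously. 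Your Step~3 supplies the missing mechanism, so your route is, on this point, more complete than the paper's.

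One loose end you should close: several of your statements implicitly assume the maximiser $\hat z(x)$ is interior. If, say, $g_L'(0)<g_R'(x)$ on an interval, then $z^\star\equiv 0$ there, so $z^\star$ is not strictly increasing, the envelope formula is $g'(x)=g_R'(x-z^\star(x))$ rather than $g_L'(z^\star(x))$, and your finiteness argument for $\{g''=0\}$ (preimage of a finite set under $z^\star$) breaks down because a constant map can pull a single point back to a whole interval. The fix is easy---on such a piece $g$ coincides with one child up to an additive constant, so $g''$ equals that child's second derivative and has finitely many zeros by the induction hypothesis, and there are only finitely many such pieces by monotonicity of $z^\star$ and $x-z^\star$---but it needs to be said, both for the $C^2$ regularity of $g$ away from finitely many transition points and for the zero-counting. (The paper's Lemma~\ref{lemma:analytic} has the same blind spot: its proof only treats $z_x\in(0,x)$, and $H$ is in general only piecewise analytic.)
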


In the following section, we introduce a \textbf{generic, parameter free algorithm} that is \textbf{adaptive to the complexity $\beta_\mathcal{F}\in [1,+\infty)$ of the problem}. Note that $\beta_{\cF}$ is not necessarily known by the agent and therefore the fact that the algorithm is adaptive to the parameter is particularly interesting.
The simplest case $K=2$ provides many insights and will be used as a sub-routine for more resources. Therefore, we will first focus on this case.

\section{Stochastic Gradient Feedback for $K=2$}
\label{sec:k2}
We first focus on only $K=2$ resources. In this case, we  rewrite the reward function $F$ as
\[F(x)=f_1(x_1)+f_2(x_2)=f_1(x_1)+f_2(1-x_1).\] For the sake of clarity we simply note $x=x_1$ and we define $g(x)\triangleq F(x)-F(x\st)$. Note that $g(x\st)=0$ and that $g$ is a non-positive concave function. Using these notations, at each time step $t$ the agent chooses $\xt \in [0,1]$, suffers $\abs{g(\xt)} $ and observes $g'(\xt)+\eps_t$ where $\eps_t \in [-1,1]$ i.i.d.

\subsection{Description of the main algorithm}

The basic  algorithm we follow to optimize $g$ is a binary search. Each query point $x$ (for example $x=1/2$) is sampled repeatedly and sufficiently enough (as long as 0 belongs to some confidence interval) to guarantee that the sign of $g'(x)$ is known with arbitrarily high probability, at least $1-\delta$.

\begin{algorithm}[h]
\caption{Binary search algorithm \label{algo}}
\algsetup{indent=2em}
\begin{algorithmic}[1]
\REQUIRE $T$ time horizon, $\delta$ confidence parameter
\STATE Search interval $I_0\leftarrow [0,1]$ ;  $t \leftarrow 1$ ; $j \leftarrow 1$
\WHILE{$t \leq T$}
\STATE $x_j \leftarrow \textrm{center}(I_{j-1})$; $S_j \leftarrow 0$; $N_j \leftarrow 0$
\WHILE{$0 \in \Big[\frac{S_j}{N_j}  \pm \sqrt{\frac{2\log(\frac{2T}{\delta})}{N_j}}\Big]
$}
\STATE Sample $x_j$ and get $X_t$, noisy value of $\nabla g(x_j)$
\STATE $S \leftarrow S_j+ X_t$, $N_j \leftarrow N_j+1$
\ENDWHILE
\IF{$\frac{S_j}{N_j}  > \sqrt{\frac{2\log(\frac{2T}{\delta})}{N_j}}$}
\STATE $I_j\leftarrow [x_j,\max(I_{j-1})]$
\ELSE
\STATE $I_j\leftarrow [\min(I_{j-1}),x_j]$
\ENDIF
\STATE $t \leftarrow t+N_j$ ;  $j \leftarrow j+1$
\ENDWHILE
\RETURN $x_j$
\end{algorithmic}
\label{Algo:K2}
\end{algorithm}

Algorithm~\ref{Algo:K2}  is not conceptually difficult (but its detailed analysis of performances is however): it is just a binary search where each query point is sampled enough time to be sure on which ``direction'' the search should proceed next. Indeed, because of the concavity and monotone assumptions on $f_1$ and $f_2$, if $x < x\st$ then 
\[
x < x\st \Longleftrightarrow \nabla g(x) = \nabla f_1(x) - \nabla f_2(1-x) <0 \ .
\]
By getting enough noisy samples of $\nabla g(x)$, it is possible to decide, based on its sign, whether $x\st$ lies on the right or the left of $x$. If $x_j$ is the $j$-th point queried by the binary search (and letting $\jm$ be the total number of different queries), we  get  that the binary search is successful with high probability, i.e.,  that with probability at least $1-\delta T$ for each $j \in \{1,\ldots,\jm\}$, $\abs*{x_j-x\st}\leq 2^{-j}$. We also call $N_j$ the actual number of samples of $x_j$ which is bounded by $8\log(2T/\delta)/\abs{g'(x_j)^2}$ by Lemma~\ref{lemma:precision}, whose proof can be found in Appendix~\ref{Appendix:Tech}.
\begin{lemma}
\label{lemma:precision}
Let $x \in [-1,1]$ and $\delta \in (0,1)$. For any random variable $X \in [x-1,x+1]$ of expectation $x$, at most $N_x=\dfrac{8}{x^2}\log\left(2T/\delta \right)$ i.i.d.\ samples $X_1,X_2,\ldots,X_n$ are needed to figure out the sign of $x$ with probability at least $1-\delta $. Indeed, one just need stop sampling as soon as 
\[
0 \not\in \left[\frac{1}{n} \sum_{t=1}^n X_t \pm \sqrt{\frac{2\log(2T/\delta)}{n}}\right]
\]
and determine the sign of $x$ is positive if $\frac{1}{n} \sum_{t=1}^n X_t \geq \sqrt{\frac{2\log(2T/\delta)}{n}}$ and negative otherwise.
\end{lemma}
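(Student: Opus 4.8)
The plan is to read Lemma~\ref{lemma:precision} as a statement about a sequential confidence-interval test, and to control, on one single high-probability event, both the correctness of the declared sign and the stopping time. Write $\bar X_n = \frac1n\sum_{t=1}^n X_t$ and $w_n = \sqrt{2\log(2T/\delta)/n}$, and introduce the good event
\[
\mathcal E \;=\; \Big\{\, \abs{\bar X_n - x} < w_n \ \text{ for all } n \in \{1,\dots,T\}\,\Big\}.
\]
Each $X_t$ takes values in an interval of length $2$ and has mean $x$, so Hoeffding's inequality gives, for every fixed $n$, $\bP\big(\abs{\bar X_n - x}\geq w_n\big) \leq 2\exp(-2n w_n^2/4) = 2\exp(-\log(2T/\delta)) = \delta/T$. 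A union bound over $n=1,\dots,T$ then yields $\bP(\mathcal E)\geq 1-\delta$.

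It remains to show that $\mathcal E$ implies both conclusions, which is now a deterministic argument. Assume $x>0$ (the case $x<0$ is symmetric, and if $x=0$ the sign is irrelevant). For correctness, let $\tau\leq T$ be the first index with $0\notin[\bar X_\tau\pm w_\tau]$: on $\mathcal E$ the possibility $\bar X_\tau + w_\tau < 0$ cannot occur, since it would force $x < \bar X_\tau + w_\tau < 0$; hence necessarily $\bar X_\tau - w_\tau > 0$, i.e.\ $\bar X_\tau \geq w_\tau$, and the rule correctly outputs the positive sign. For the sample complexity, note that for every integer $n > N_x = 8\log(2T/\delta)/x^2$ one has $w_n^2 = 2\log(2T/\delta)/n < x^2/4$, hence $w_n < x/2$, and therefore, on $\mathcal E$, $\bar X_n > x - w_n > x/2 > w_n$; in particular $0\notin[\bar X_n\pm w_n]$, so the loop must have stopped no later than step $\lceil N_x\rceil$. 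Combining the two, with probability at least $1-\delta$ the procedure terminates after at most $N_x$ samples and returns $\sign(x)$.

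The one genuinely important point is that the stopping time $\tau$ is random and data-dependent, so Hoeffding cannot be invoked at $\tau$ directly; this is precisely what the uniform-in-$n$ good event $\mathcal E$ circumvents, at the cost of a union bound over $n\leq T$ — and the truncation at the horizon is what keeps that bound summable (a peeling/doubling argument over dyadic blocks of $n$ would also work for unbounded $n$, at the price of an extra logarithmic factor, but it is unnecessary here since no query point is ever sampled more than $T$ times). The remaining details are routine, including the range-$2$ version of Hoeffding, which is what produces the constant $8$ rather than $2$ in the expression for $N_x$.
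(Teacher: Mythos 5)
Your proof is correct and follows essentially the same route as the paper's: Hoeffding's inequality at each stage with failure probability $\delta/T$, a union bound over the at most $T$ stages, and then a deterministic argument on the good event showing both that the declared sign is correct and that the interval must exclude $0$ once $n\geq N_x$ (since then $w_n\leq \abs{x}/2$). Your treatment is somewhat more explicit than the paper's about the union bound and the data-dependence of the stopping time, but the underlying argument is identical.
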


The regret of the algorithm then rewrites  as
\[
\label{eq:master}
R(T)=\dfrac{1}{T}\sum_{t=1}^T \abs{g(\xt)}=\dfrac{1}{T}\sum_{j=1}^{\jm} N_j \abs{g(x_j)}\leq\dfrac{8}{T}\log(2T/\delta)\sum_{j=1}^{\jm}\dfrac{\abs{g(x_j)}}{g'(x_j)^2}.
\numberthis
\]
Our analysis of the algorithm performances are based on the control of the last sum  in Equation~\eqref{eq:master}.
\subsection{Strongly concave functions}

First, we consider the case  where the functions $f_1$ and $f_2$ are strongly concave. 

\begin{theorem}
\label{TH:K2Strongly}
If the algorithm is run with $\delta=2/T^2$ and if $g$ is a $L'$-smooth and $\alpha$-strongly concave function on $[0,1]$, then there exists a universal positive constant $\kappa$ such that
\[
\mathds{E} R(T) \leq  \dfrac{\kappa}{\alpha}\dfrac{\log(T)}{T}.
\]
\end{theorem}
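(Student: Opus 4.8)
The plan is to bound the master sum $\sum_{j=1}^{\jm} \abs{g(x_j)}/g'(x_j)^2$ in Equation~\eqref{eq:master} using the structure provided by strong concavity and smoothness. First I would recall that on the high-probability event (of probability at least $1-\delta T = 1 - 2/T$) the binary search is successful, so that $\abs{x_j - x\st} \leq 2^{-j}$ for every $j \in \{1,\dots,\jm\}$. On the complementary event the regret is trivially bounded by a constant (since $\abs{g}$ is bounded on $[0,1]$ by Lipschitzness), and this contributes only an $\bigo(1/T)$ term after multiplying by $\delta = 2/T^2$ — so the whole analysis reduces to the good event.

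On the good event, the two ingredients are: $\alpha$-strong concavity, which via the \LL-type inequality gives $\abs{g(x_j)} \leq \frac{1}{2\alpha} g'(x_j)^2$ (equivalently $g(x\st) - g(x) \leq \frac{1}{2\alpha}\norm{\nabla g(x)}^2$ for an $\alpha$-strongly concave $g$), so that $\abs{g(x_j)}/g'(x_j)^2 \leq \frac{1}{2\alpha}$ for each $j$; and $L'$-smoothness together with $g'(x\st)=0$, which gives the lower bound $\abs{g'(x_j)} = \abs{g'(x_j) - g'(x\st)} \leq L'\abs{x_j - x\st} \leq L' 2^{-j}$. The first inequality already caps each summand by $\frac{1}{2\alpha}$, so I need to control $\jm$, the number of queried points. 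The point is that once $\abs{x_j - x\st}$ is so small that $N_j$ would exceed the remaining budget, the algorithm stops; combining $N_j \leq \frac{8\log(2T/\delta)}{g'(x_j)^2}$ with the strong-concavity lower bound $\abs{g'(x_j)} \geq \alpha\abs{x_j - x\st}$ is not directly available (strong concavity gives an \emph{upper} bound on $\abs{g(x_j)}$, and a lower bound $\abs{g'(x)} \geq \alpha\, \dist(x,\argmax)$ does hold for strongly concave $g$), so $N_j \leq \frac{8\log(2T/\delta)}{\alpha^2 4^{-j}} = \frac{8 \cdot 4^j \log(2T/\delta)}{\alpha^2}$. Summing $\sum_{j=1}^{\jm} N_j \leq T$ then forces $4^{\jm} \lesssim \alpha^2 T / \log(2T/\delta)$, i.e. $\jm \leq \frac{1}{2}\logd(T) + \bigo(1)$ (up to $\log\log$ and $\log\alpha$ terms).

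Putting the pieces together: $R(T) \leq \frac{8}{T}\log(2T/\delta)\sum_{j=1}^{\jm}\frac{\abs{g(x_j)}}{g'(x_j)^2} \leq \frac{8}{T}\log(2T/\delta)\cdot \frac{\jm}{2\alpha} \leq \frac{8}{T}\log(2T/\delta)\cdot \frac{\logd(T)+\bigo(1)}{4\alpha}$. With $\delta = 2/T^2$ we have $\log(2T/\delta) = \log(T^3) = 3\log T$, so $\log(2T/\delta)\cdot \logd(T) = \bigo(\log^2 T)$, which gives a $\log^2(T)/T$ bound rather than the claimed $\log(T)/T$. To recover the sharper rate I would instead argue more carefully: rather than bounding every summand by $\frac{1}{2\alpha}$, combine it with the smoothness lower bound $g'(x_j)^2 \geq$ (something) only for the \emph{last few} indices and use $N_j$ directly. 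Actually the clean route is: $\sum_j N_j \abs{g(x_j)}$, and $\abs{g(x_j)} \leq \frac{L'}{2}\abs{x_j-x\st}^2 \leq \frac{L'}{2}4^{-j}$ by smoothness (since $g'(x\st)=0$), while $N_j \leq \frac{8\log(2T/\delta)}{g'(x_j)^2} \leq \frac{8\log(2T/\delta)}{\alpha^2 4^{-j}}$; multiplying, $N_j\abs{g(x_j)} \leq \frac{4 L' \log(2T/\delta)}{\alpha^2}$, a constant per index, and again $\jm = \bigo(\log T)$ — still $\log^2$. So the genuinely sharp argument must avoid paying $\log(2T/\delta)$ inside the per-index bound: one uses $N_j \abs{g(x_j)} \leq N_j \cdot \frac{1}{2\alpha} g'(x_j)^2 \leq \frac{1}{2\alpha}\big(N_j g'(x_j)^2\big)$ and controls $N_j g'(x_j)^2 \leq 8\log(2T/\delta)$ only once, then uses $\sum_j N_j \leq T$ with the \emph{geometric growth} of the $N_j$'s (since $N_j \asymp 4^j$, the last term dominates, $\sum_j N_j \asymp N_{\jm} \asymp T$) to conclude $\sum_j \abs{g(x_j)} \cdot N_j \lesssim \frac{1}{2\alpha}\sum_j g'(x_j)^2 N_j$ and telescope. \textbf{The main obstacle} is exactly this bookkeeping: getting the single (not squared) $\log(T)$ factor requires exploiting that the sample counts $N_j$ grow geometrically so that only $\bigo(1)$ of the last phases actually matter, and carefully tracking the stopping condition (the algorithm runs out of budget mid-phase) so that $\jm$ and the total allocation $\sum N_j \leq T$ are consistent; the strong-concavity and smoothness inequalities themselves are routine.
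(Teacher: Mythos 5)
Your overall route is the paper's: bound each ratio $\abs{g(x_j)}/g'(x_j)^2$ by $O(1/\alpha)$ via strong concavity, bound $\jm$ by $O(\log T)$, and plug into Equation~\eqref{eq:master}. Two issues. First, a direction error: you derive the \emph{upper} bound $N_j \leq 8\cdot 4^j\log(2T/\delta)/\alpha^2$ from strong concavity and then claim that $\sum_j N_j \leq T$ ``forces'' $4^{\jm}\lesssim \alpha^2 T/\log(2T/\delta)$; an upper bound on each $N_j$ cannot cap $\jm$. What is needed is a \emph{lower} bound on $N_{\jm}$, which comes from the smoothness inequality $\abs{g'(x_j)}\leq L'\abs{x_j-x\st}\leq L'2^{-j}$ that you state but do not deploy here: since the stopping rule makes $N_j\approx 8\log(2T/\delta)/g'(x_j)^2\geq 8\log(2T/\delta)\,4^{j}/L'^2$, the budget constraint $T\geq N_{\jm}$ yields $\jm\lesssim \log T$. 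This is exactly how the paper argues.

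Second, and more substantively: your honest accounting lands on $\log^2(T)/T$, you correctly identify that the claimed $\log(T)/T$ requires removing one logarithm, and none of the repair strategies you sketch achieves this. The ``geometric growth / only the last $O(1)$ phases matter'' idea fails because the per-phase contribution to the \emph{cumulative} regret does not decay: on $g(x)=-\tfrac{\alpha}{2}(x-x\st)^2$ with generic $x\st$ one has $N_j\abs{g(x_j)}\asymp \log(2T/\delta)/\alpha$ for essentially every $j$, so the $\Theta(\log T)$ phases contribute $\Theta(\log^2 (T)/\alpha)$ in total and no telescoping can help. You should know that the paper's own proof obtains the single logarithm only by writing $R(T)\leq \jm/(T\alpha)$ immediately after establishing $\abs{g(x_j)}/g'(x_j)^2\leq 1/\alpha$, i.e., by silently dropping the $8\log(2T/\delta)$ prefactor of Equation~\eqref{eq:master}; restoring it gives $24\log(T)\,\jm/(T\alpha)=O(\log^2(T)/(\alpha T))$. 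So the $\log^2(T)/T$ bound you reach is what this argument (and, as far as I can tell, this algorithm) actually delivers, and the obstacle you flagged is a genuine gap in the stated result rather than a failure of your bookkeeping.
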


This results shows that our algorithm reaches the same rates as the stochastic gradient descent in the smooth and strongly concave case. The proof is delayed to Appendix \ref{Appendix:K2} for the sake of fluency.

\subsection{Analysis in the non-strongly concave case}

We now consider the case where  $g$ is only concave, without being necessarily strongly concave.  
\begin{theorem}\label{TH:K2Beta}
Assume that $g$ satisfies the local \LL inequality w.r.t.\ $\beta \geq 1$ and $c>0$ and that the algorithm  is run with $\delta= 2/T^2$. Then there exists a universal constant $\kappa>0$ such that
\begin{description}
\item[\hspace{0.5cm} in the case where $\beta >2$,]  $\mathds{E}[R(T)]\leq \kappa \dfrac{c^{2/\beta} L^{1-2/\beta}}{1-2^{2/\beta-1}}\dfrac{\log(T)}{T}$;
\item[\hspace{0.5cm} in the case where $\beta\leq2$,]   $\mathds{E}[R(T)]\leq \kappa\cdot c \left(\dfrac{\log(T)^2}{T}\right)^{\beta/2}$.
\end{description}
\end{theorem}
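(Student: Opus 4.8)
The plan is to feed the \LL inequality and two elementary regularity estimates into the regret identity~\eqref{eq:master}, and then to treat the regimes $\beta>2$ and $\beta\leq 2$ separately. Throughout one works on the \emph{good event} $\mathcal{G}$ on which the binary search never branches the wrong way (so that $\abs{x_j-x\st}\leq 2^{-j}$ for every phase $j\leq\jm$) and on which every sample count obeys Lemma~\ref{lemma:precision}, i.e.\ $N_j\leq 8\log(2T/\delta)/g'(x_j)^2$. A union bound over the at most $T$ phases gives $\bP(\stcomp{\mathcal{G}})=\bigo(\delta T)$; since $\abs{g}$ is bounded by an absolute constant (the $f_k$ are $L$-Lipschitz with $f_k(0)=0$), the contribution of $\stcomp{\mathcal{G}}$ to $\mathds{E}[R(T)]$ is $\bigo(\delta T)=\bigo(1/T)$ when $\delta=2/T^2$, negligible against both announced rates. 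On $\mathcal{G}$ I will use, at each query point $x_j$: (i) the \LL inequality, $\abs{g(x_j)}=F(x\st)-F(x_j)\leq c\,\abs{g'(x_j)}^\beta$ (recall $g(x\st)=0$); (ii) Lipschitzness, $\abs{g(x_j)}\leq L\,\abs{x_j-x\st}\leq L\,2^{-j}$; and (iii) smoothness together with $g'(x\st)=0$ at an interior optimum, $\abs{g'(x_j)}=\abs{g'(x_j)-g'(x\st)}\leq L'\,2^{-j}$ (a boundary optimum makes $g'(x\st)\neq 0$, which reduces the problem to the easier bandit-like regime and is handled separately). Absolute constants relating the Lipschitz/smoothness moduli of $g$ and of the $f_k$ are absorbed into $\kappa$.

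For $\beta>2$ the plan is a direct geometric summation in~\eqref{eq:master}. Interpolating (i) and (ii), $\abs{g(x_j)}=\abs{g(x_j)}^{2/\beta}\abs{g(x_j)}^{1-2/\beta}\leq c^{2/\beta}\abs{g'(x_j)}^{2}\,L^{1-2/\beta}\,2^{-j(1-2/\beta)}$, so that $\abs{g(x_j)}/g'(x_j)^2\leq c^{2/\beta}L^{1-2/\beta}2^{-j(1-2/\beta)}$. Since $1-2/\beta>0$, summing over $j\geq 1$ yields the convergent series $\sum_{j\geq 1}2^{-j(1-2/\beta)}\leq (1-2^{2/\beta-1})^{-1}$; substituting this into~\eqref{eq:master} with $\delta=2/T^2$ (so $\log(2T/\delta)=3\log T$) gives exactly $\mathds{E}[R(T)]\leq \kappa\,c^{2/\beta}L^{1-2/\beta}(1-2^{2/\beta-1})^{-1}\log(T)/T$.

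For $\beta\leq 2$ the exponent $1-2/\beta$ is non-positive and that series diverges, so I switch to a truncated argument on $R(T)=\tfrac1T\sum_j N_j\abs{g(x_j)}$ with a threshold $A>0$ to be optimized, splitting the phases into $\cJ_1=\{j:\abs{g'(x_j)}\geq A\}$ and $\cJ_2=\{j:\abs{g'(x_j)}<A\}$. By (iii), $j\in\cJ_1$ forces $2^{-j}\geq A/L'$, hence $\abs{\cJ_1}\leq \log_2(L'/A)+1$; and combining Lemma~\ref{lemma:precision} with (i), for $j\in\cJ_1$ one has $N_j\abs{g(x_j)}\leq 8c\log(2T/\delta)\abs{g'(x_j)}^{\beta-2}\leq 8c\log(2T/\delta)A^{\beta-2}$ (using $\beta-2\leq 0$). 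For $j\in\cJ_2$, (i) gives $\abs{g(x_j)}\leq cA^\beta$, so $\sum_{j\in\cJ_2}N_j\abs{g(x_j)}\leq cA^\beta\sum_{j\in\cJ_2}N_j\leq cA^\beta T$. Altogether
\[
R(T)\;\leq\;\frac{8c\log(2T/\delta)\,(\log_2(L'/A)+1)}{T}\,A^{\beta-2}\;+\;c\,A^\beta,
\]
and optimizing — taking $A^2$ of order $\log(2T/\delta)(\log_2(L'/A)+1)/T$, i.e.\ $A$ of order $\sqrt{\log(2T/\delta)\log(T)/T}$ (for which $\log_2(L'/A)=\bigo(\log T)$) — balances the two terms at order $c\,(\log(2T/\delta)\log(T)/T)^{\beta/2}$; with $\delta=2/T^2$ this is $\kappa\,c\,(\log(T)^2/T)^{\beta/2}$, the $\beta$-dependent prefactors remaining bounded over $\beta\in[1,2]$.

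The delicate point is the case $\beta\leq 2$: a query point may land so close to $x\st$ that $\abs{g'(x_j)}$ is minuscule and $N_j$ as large as the whole remaining budget, so the final, truncated phase must be absorbed — which is precisely what the crude bound $\sum_{j\in\cJ_2}N_j\leq T$ achieves — and one must check that the random stopping times interact cleanly with $\mathcal{G}$, in particular that Lemma~\ref{lemma:precision} controls every $N_j$ including the truncated one, and that the split $\cJ_1\cup\cJ_2$ covers all phases. The remaining work — passing from the conditional bound on $\mathcal{G}$ to $\mathds{E}[R(T)]$, and disposing of the degenerate boundary-optimum sub-case — is routine.
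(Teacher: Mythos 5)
Your argument is correct, and it splits naturally into two halves of different character. For $\beta>2$ you do exactly what the paper does: interpolate $\abs{g(x_j)}=\abs{g(x_j)}^{2/\beta}\abs{g(x_j)}^{1-2/\beta}$ between the \LL inequality and the Lipschitz bound $\abs{g(x_j)}\leq L\,2^{-j}$, and sum the resulting geometric series $\sum_j 2^{-j(1-2/\beta)}\leq (1-2^{2/\beta-1})^{-1}$ inside Equation~\eqref{eq:master}. For $\beta\leq 2$, however, your route is genuinely different from the paper's. The paper bounds $\sum_j \abs{g(x_j)}/h_j^2$ by treating the gradient magnitudes $g_j$ as free variables, maximizing the functional $\sum_j \min(c g_j^{\beta}, g_j 2^{-j})/h_j^2$ under the budget constraint $\sum_j h_j^{-2}=T'$ via Lagrangian/KKT conditions, and analyzing the two active branches of the $\min$ separately (with a separate subsection for $\beta=2$). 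You instead fix a single threshold $A$ on $\abs{g'(x_j)}$: on $\cJ_1$ you combine $N_j\abs{g(x_j)}\leq 8c\log(2T/\delta)\abs{g'(x_j)}^{\beta-2}\leq 8c\log(2T/\delta)A^{\beta-2}$ with the smoothness-based cardinality bound $\abs{\cJ_1}=\bigo(\log(L'/A))$, and on $\cJ_2$ you use the crude but essential budget bound $\sum_{j\in\cJ_2}N_j\leq T$ together with $\abs{g(x_j)}\leq cA^{\beta}$, then optimize $A\asymp\sqrt{\log(2T/\delta)\log(T)/T}$. This is more elementary, treats $\beta<2$ and $\beta=2$ uniformly, and correctly absorbs the dangerous near-optimal phases where $N_j$ can be as large as the remaining budget; the price is a mild, asymptotically harmless dependence on $L'$ through $\log_2(L'/A)$ (absorbed for $T$ large) and the need to dispatch the boundary-optimum case where $g'(x^\star)\neq 0$ separately, which you do by observing it falls into the fast ($\beta>2$-type) regime, exactly as the paper's multi-armed-bandit subsection does. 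Both approaches yield the same rates, and your handling of the good event and of the expectation is consistent with (indeed slightly more explicit than) the paper's.
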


The proof of Theorem~\ref{TH:K2Beta} relies on bouding the sum in Equation~\eqref{eq:master}, which can be recast as a constrained minimization problem. It is postponed to Appendix~\ref{Appendix:K2} for clarity reasons.

\subsection{Lower bounds}

We now provide a lower bound for our problem that indicates that our rates of convergence are optimal up to $\mathrm{poly}(\log(T))$ terms.  For $\beta \geq 2$, it is trivial to see that no algorithm can have a regret smaller than $\Omega(1/T)$, hence we shall focus on $\beta \in [1,2]$.

\begin{theorem}\label{TH:lower} Given the horizon $T$ fixed, for any algorithm, there exists a pair of functions  $f_1$ and $f_2$ that are concave, non-decreasing and such that $f_i(0)=0$, such that 
\[
\mathds{E} R(T) \geq c_\beta T^{-\frac{\beta}{2}}
\]
where $c_\beta>0$ is some constant independent of $T$.
\end{theorem}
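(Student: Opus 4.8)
The plan is to use the classical two-point (or $\Omega$-type) argument from lower bounds in stochastic optimization and bandits, tailored to the separable structure of our problem. I would construct, for a given horizon $T$, a small family of pairs $(f_1, f_2)$ indexed by a sign $\epsilon \in \{-1,+1\}$, such that (i) each pair is concave, non-decreasing, vanishes at $0$, and satisfies the \LL inequality with exponent $\beta$, (ii) the two instances have optimal allocations $x^\star_\epsilon$ that are a distance $\Delta$ apart, with $\Delta$ to be tuned as a function of $T$, and (iii) the gradients that the algorithm observes differ between the two instances only by an amount proportional to $\Delta^{\beta-1}$ near the ambiguous region. The reward gap incurred by playing the ``wrong'' allocation on either instance should be of order $\Delta^{\beta}$, matching the exponent in the \LL inequality (since $f(x^\star) - f(x) \asymp \|\nabla f(x)\|^{\beta}$ and $\|\nabla f(x)\| \asymp \Delta^{\beta-1}$, so the gap is $\asymp \Delta^{\beta}$). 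Concretely I would take $g_\epsilon$ to behave like $-|x - x^\star_\epsilon|^{\beta/(\beta-1)}$ near its maximum (the uniformly-concave profile), smoothed and truncated to satisfy Assumption~1; this is exactly the profile that is tight for the \LL exponent $\beta$.

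The core of the argument is then an information-theoretic inequality. Let $\bP_+$ and $\bP_-$ be the laws of the observations (and internal randomness of the algorithm) under the two instances over $T$ rounds, with Gaussian or uniform-type noise $\eps_t$ of constant variance. By the chain rule for KL divergence, $\mathrm{KL}(\bP_+ \,\|\, \bP_-) \leq \sum_{t=1}^T \bE_+\big[ D_t \big]$ where $D_t$ is the per-round divergence contributed at round $t$; since the two gradient fields differ by $\bigo(\Delta^{\beta-1})$ wherever the algorithm might reasonably query, this is at most of order $T \cdot \Delta^{2(\beta-1)}$. Using Pinsker's inequality together with a standard ``testing'' reduction — any algorithm with small regret on both instances would yield a test distinguishing $\bP_+$ from $\bP_-$ with error bounded away from $1/2$ — I would force $T \Delta^{2(\beta-1)} \gtrsim 1$, i.e. $\Delta \asymp T^{-1/(2(\beta-1))}$. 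Plugging this back, the unavoidable regret on at least one of the two instances is $\gtrsim \Delta^{\beta} \asymp T^{-\beta/(2(\beta-1))}$.

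There is a mismatch to reconcile: the target exponent in the statement is $T^{-\beta/2}$, not $T^{-\beta/(2(\beta-1))}$. The resolution is that the relevant per-query information in this gradient-feedback model scales differently — querying a point $x$ at distance $\eta$ from $x^\star_\epsilon$ reveals a gradient of magnitude $\asymp \eta^{\beta-1}$, so to even detect the sign of $g'$ the algorithm needs $\asymp \eta^{-2(\beta-1)}$ samples there, but meanwhile it pays regret $\asymp \eta^{\beta}$ per sample. Optimizing the separation and sample budget directly on the regret functional (rather than on a single KL bound) — equivalently, running the two-point construction at the scale $\eta$ where the sampling cost $\eta^{-2(\beta-1)}$ exhausts the horizon $T$ and summing the regret contributions $N_j |g(x_j)| \asymp \eta^{-2(\beta-1)} \cdot \eta^{\beta}$ over the binary-search levels — gives the $T^{-\beta/2}$ rate, and is precisely the lower-bound counterpart of the upper-bound computation in Equation~\eqref{eq:master}. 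I would therefore phrase the argument as a reduction to the worst single level of the dichotomy: fix the level so that $2^{-j} \asymp T^{-1/\beta}$, plant two instances indistinguishable above that level, and apply Le Cam's two-point method there.

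The main obstacle I anticipate is the explicit construction of the hard instances: one must simultaneously (a) keep $f_1, f_2$ globally concave, non-decreasing, $L$-Lipschitz and $L'$-smooth (Assumption~1), (b) make $z \mapsto f_1(z) + f_2(x-z)$ genuinely $\beta$-\LL with the right constant on $[0,x]$ for all $x$ — not merely near its maximum — and (c) retain enough indistinguishability, i.e. the two instances must agree on gradients everywhere except in a tiny neighborhood of the ambiguous allocation, yet differ enough there to shift $x^\star$ by $\Delta$. Balancing smoothing radius against the \LL constant and against the Lipschitz/smoothness budgets is the delicate part; everything downstream (KL computation, Pinsker, Le Cam) is routine. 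A clean way around part of this is to work with piecewise-polynomial ``bump'' perturbations of a fixed quadratic base, so that concavity and the boundary conditions are automatic and only the \LL constant needs checking.
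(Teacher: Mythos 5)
Your overall strategy is the right one and is essentially the paper's: a Le Cam two-point argument with two instances whose optima are separated by a tunable distance and whose gradient fields are $\bigo(1/\sqrt{T})$-close in sup norm, hence statistically indistinguishable from $T$ noisy gradient observations; the hard profile is indeed $-\abs{x-x^\star}^{\beta/(\beta-1)}$. However, there is a genuine error in your exponent bookkeeping that derails the quantitative conclusion. For the profile $-\abs{x-x^\star}^{\beta/(\beta-1)}$ (with $\beta$ the \L ojasiewicz exponent, $\beta\in[1,2]$), the gradient at distance $\Delta$ from the optimum has magnitude $\asymp \Delta^{1/(\beta-1)}$ and the function gap is $\asymp \Delta^{\beta/(\beta-1)}$ --- not $\Delta^{\beta-1}$ and $\Delta^{\beta}$ as you write (those would correspond to taking $\beta$ as the uniform-concavity/TNC exponent $\rho$, related to the \L ojasiewicz exponent by $\beta=\rho/(\rho-1)$; note also that your own two displayed relations, $\mathrm{gap}\asymp\norm{\nabla f}^\beta$ and $\norm{\nabla f}\asymp\Delta^{\beta-1}$, are mutually inconsistent with $\mathrm{gap}\asymp\Delta^\beta$). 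With the correct exponents the standard KL/Pinsker computation closes immediately: indistinguishability forces $T\,\Delta^{2/(\beta-1)}\lesssim 1$, i.e.\ $\Delta\asymp T^{-(\beta-1)/2}$, and the unavoidable per-round gap is $\Delta^{\beta/(\beta-1)}\asymp T^{-\beta/2}$, which is exactly the claimed rate and exactly the paper's choice $\gamma=T^{(1-\beta)/2}$.

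Consequently, the ``mismatch'' you set out to reconcile is an artifact of this slip, and the patch you propose in its place is not a valid lower-bound argument: the discussion of per-query information and of ``running the construction at the level where the sampling cost exhausts the horizon'' mirrors the upper-bound computation of Equation~\eqref{eq:master} rather than bounding what an arbitrary algorithm can do, and the scale you land on ($2^{-j}\asymp T^{-1/\beta}$) is not the correct separation (it agrees with $T^{-(\beta-1)/2}$ only at $\beta=2$). Once the exponents are fixed, no such secondary argument is needed. Your worry (c) about making the instances agree outside a tiny neighborhood is also heavier than necessary: it suffices that the gradients be uniformly $\bigo(1/\sqrt{T})$-close everywhere on $[0,1]$, which the explicit piecewise construction in the paper (power-law near the two optima, then linear with a common slope, plus a correction term $h$ to keep each $f_i$ individually concave, non-decreasing and zero at $0$) achieves directly.
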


The proof  and arguments are rather classical now \citep{Shamir13,BacPer16}: we exhibit two pairs of functions whose gradients are $1/\sqrt{T}$-close with respect to the uniform norm. As no algorithm can distinguish between them with arbitrarily high probability, the regret will scale more or less as the difference between those functions which is as expected of the order of $T^{-\beta/2}$. More details can be found in Appendix~\ref{Appendix:K2}.

\subsection{The specific case of linear (or dominating) resources - the Multi-Armed Bandit case}

We focus in this section on the specific case where the resources have linear efficiency, meaning that $f_i(x) = \alpha_i x$ for some unknown parameter $\alpha_i \geq 0$. In that case, the optimal allocation of resource consists in putting all the weights to the resource with the highest parameter $\alpha_i$.

More generally, if $f_1'(1) \geq f_2'(0)$, then one can easily check that the optimal allocation consists in putting again all the weight to the first resource (and, actually, the converse statement is also true).

It happens that in this specific case, the learning is fast as it can be seen as a particular instance of Theorem~\ref{TH:K2Beta} in the case where $\beta>2$. Indeed, let us assume that $\argmax_{x \in \bR} g(x)>1$, meaning that $\max_{x\in[0,1]} g(x)=g(1)$, so that, by concavity of $g$ it holds that $g'(x)\geq g'(1)>0$ thus $g$ is increasing on $[0,1]$. 
In particular, this implies that for every $\beta>2$:
\[
\forall x \in [0,1], \,g(1)-g(x)=\abs{g(x)} \leq g(0) \leq \dfrac{g(0)}{g'(1)^{\beta}}g'(1)^{\beta} \leq \dfrac{g(0)}{g'(1)^{\beta}} g'(x)^{\beta}=c\abs{g'(x)}^{\beta},
\]
showing that $g$ verifies the \LL inequality for every $\beta>2$ and with constant $c=g(0)/g'(1)^{\beta}$. As a consequence, Theorem~\ref{TH:K2Beta} applies and we obtain fast rates of convergence in $\bigo\left(\log(T)/T\right)$. 

However, we propose in the following an alternative analysis of the algorithm for that specific case. Recall that regret can be bounded as
\[
R(T)=\dfrac{8}{T}\log(2T/\delta)\sum_{j=1}^{j_{\max}} \dfrac{\abs{g(x_j)}}{g'(x_j)^2} =\dfrac{8}{T}\log(2T/\delta) \sum_{j=1}^{j_{\max}} \dfrac{\abs{g(1-1/2^j)}}{g'(1-1/2^j)^2}.
\]

We now notice that
\[\Big|g\left(1-2^{-j}\right)\Big|=g(1)-g\left(1-2^{-j}\right)=\int_{1-1/2^j}^1 g'(x)\dd x \leq 2^{-j} g'\left(1-2^{-j}\right).\]

And finally we obtain the following bound on the regret:
\[
R(T)\leq \dfrac{8}{T}\log(2T/\delta)\sum_{j=1}^{j_{\max}} \frac{1}{2^j}\frac{1}{g'(1)} \leq \dfrac{8}{T}\dfrac{\log(2T/\delta)}{g'(1)}\leq\dfrac{24}{\Delta}\dfrac{\log(T)}{T}\]
since $g'(1-1/2^j) > g'(1)$ and with the choice of $\delta=2/T^2$. We have noted $\Delta \triangleq g'(1)$ in order to enlighten the similarity with the multi-armed bandit problems with $2$ arms. We have indeed $g'(1)=f_1'(1)-f_2'(0)>0$ which can be seen as the gap between both arms. It is  especially true in the linear case where $f_i(x) =\alpha_i x$ as $\Delta=|\alpha_1-\alpha_2|$ and the gap between arms is by definition of the multi-armed bandit problem $|f(1)-f(0)| = |\alpha_1-\alpha_2|$.


\section{Stochastic gradient feedback and $K\geq 3$ resources}
\label{sec:k4}
We now consider the case with more than $2$ resources. The generic algorithm still relies on binary searches as in the previous section with $K=2$ resources, but we have to imbricate them in a tree-like structure to be able to leverage the \LL inequality assumption. The goal of this section is to present our algorithm and to prove the following theorem, which is a generalization of Theorem~\ref{TH:K2Beta}.

\begin{theorem}\label{TH:K4Beta}
Assume that $\mathcal{F}=\{f_1,f_2,\ldots,f_k\}$ satisfies inductively the \LL inequality w.r.t.\ the parameters $\beta_\mathcal{F} \geq 1$ and $c>0$.  Then there exists a universal constant $\kappa>0$ such that our algorithm, run with with  $\delta=2/T^2$, ensures
\begin{description}
\item[\hspace{0.5cm} in the case $\beta_\mathcal{F} >2$,] then  $\mathds{E}[R(T)]\leq\kappa\dfrac{c^{2/\beta_\mathcal{F}}L^{1-2/\beta_\mathcal{F}}}{1-2^{2/\beta_\mathcal{F}-1}} K \dfrac{\log(T)^{\log_2(K)}}{T}$;
\item[\hspace{0.5cm}in the case $\beta_\mathcal{F}\leq2$,] then  $\mathds{E}[R(T)] \leq \kappa \cdot c K \left(\dfrac{\log(T)^{\log_2(K)+1}}{T}\right)^{\beta_\mathcal{F}/2}$.
\end{description}
\end{theorem}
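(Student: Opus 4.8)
The plan is to peel the regret apart node by node along the binary tree attached to $\mathcal{F}$, rather than to run a coarse induction on $K$. At time $t$ let $m_v^{(t)}$ be the total budget the algorithm routes into the subtree rooted at an internal node $v$, and let $H_{v_L},H_{v_R}$ label its two children, so that $H_v(x)=\max_{z\le x}H_{v_L}(z)+H_{v_R}(x-z)$ and $m_v^{(t)}=m_{v_L}^{(t)}+m_{v_R}^{(t)}$. Because $H_v(m)\ge H_{v_L}(z)+H_{v_R}(m-z)$ for every feasible $z$, peeling this inequality from the root down — the leaf terms cancel since $H_\ell=f_\ell$ and the mass reaching a leaf $\ell$ is exactly $x_\ell^{(t)}$ — gives the \emph{exact} telescoping identity
\[
F(x\st)-F(\xt)=\sum_{v\text{ internal}}\Big(H_v(m_v^{(t)})-H_{v_L}(m_{v_L}^{(t)})-H_{v_R}(m_{v_R}^{(t)})\Big)=\sum_{v\text{ internal}}\big(-g_v(z_v^{(t)})\big),
\]
where, for the current mass $m_v^{(t)}$, one sets $G_v(z):=H_{v_L}(z)+H_{v_R}(m_v^{(t)}-z)$, $g_v:=G_v-\max G_v\le 0$, and $z_v^{(t)}:=m_{v_L}^{(t)}$ is the split $v$ plays. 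As the tree has at most $K-1$ internal nodes, it suffices to bound each local regret $R_v(T):=\tfrac1T\sum_{t=1}^T\big(-g_v(z_v^{(t)})\big)$ and sum.

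Fix an internal node $v$ at depth $d_v\le\lceil\log_2 K\rceil-1$. The algorithm treats $v$ by running Algorithm~\ref{Algo:K2} on the one-dimensional map $G_v$; since $H_{v_L}$ and $H_{v_R}$ are siblings in the tree, the inductive \LL assumption makes them satisfy pair-wisely the \LL inequality, so $G_v$ obeys the local \LL inequality on $[0,m_v^{(t)}]$ with parameters $\beta_{\mathcal{F}}$ and $c$, and the analysis behind Theorem~\ref{TH:K2Beta} applies to it. The one subtlety is that $v$'s objective changes — and $v$ restarts its binary search — whenever a \emph{proper ancestor} of $v$ moves to a new query point; capping each binary search at depth $\log_2 T$ (a further $1/T$ of precision being worth only $\bigo(1/T)$ of regret), every run of a node uses at most $\log_2 T$ distinct query points, so $v$ undergoes at most $(\log_2 T)^{d_v}$ runs over the horizon. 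Writing $M_r$ for the length of the $r$-th run of $v$, so $\sum_r M_r=T$, the computation of Equation~\eqref{eq:master} with effective horizon $M_r$ gives, on the event that every invoked binary search decides correctly,
\[
\sum_j N_j^{v,r}\,\big|g_v(x_j^{v,r})\big|\ \le\ \kappa_0\,c\,(\log T)^{\beta}\,M_r^{\,1-\beta/2}\ \ (\beta\le 2),\qquad \le\ \kappa_0\,\frac{c^{2/\beta}L^{1-2/\beta}}{1-2^{2/\beta-1}}\,\log T\ \ (\beta>2),
\]
$\kappa_0$ being the universal constant of Theorem~\ref{TH:K2Beta}.

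Summing over the at most $(\log_2 T)^{d_v}$ runs of $v$: when $\beta\le 2$, concavity of $x\mapsto x^{1-\beta/2}$ and $\sum_r M_r=T$ give $\sum_r M_r^{1-\beta/2}\le(\log_2 T)^{d_v\beta/2}T^{1-\beta/2}$, hence $R_v(T)\le\kappa_0\,c\,(\log(T)^{d_v+2}/T)^{\beta/2}$; when $\beta>2$ one just adds $(\log_2 T)^{d_v}$ copies of a bound that does not depend on $M_r$, hence $R_v(T)\le\kappa_0\,\tfrac{c^{2/\beta}L^{1-2/\beta}}{1-2^{2/\beta-1}}\,\log(T)^{d_v+1}/T$. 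Summing over the $\le K-1$ internal nodes and using $d_v+2\le\lceil\log_2 K\rceil+1$ (respectively $d_v+1\le\lceil\log_2 K\rceil$) reproduces the two announced bounds, with a leading factor that is $\bigo(K)$ — genuinely \emph{linear} in $K$, because the node contributions are added, not compounded. The complement of the good event is handled by a union bound over the at most $K(\log_2 T)^{\lceil\log_2 K\rceil}$ binary searches, each deciding incorrectly with probability at most $\delta T=2/T$ for $\delta=2/T^2$, which (the $g_v$ being uniformly bounded) contributes only a lower-order term to $\mathbb{E}[R(T)]$.

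The delicate point — and the reason the tree structure is needed at all — is that the binary search at an internal node $v$ does not observe an unbiased estimate of $G_v'(z)=H_{v_L}'(z)-H_{v_R}'(m_v^{(t)}-z)$: all it can form is $\big(\nabla f_k(x_k^{(t)})+\zeta_k^{(t)}\big)-\big(\nabla f_{k'}(x_{k'}^{(t)})+\zeta_{k'}^{(t)}\big)$ from currently active leaves $k,k'$ lying below $v_L$ and $v_R$, and by the envelope theorem these marginal values coincide with $H_{v_L}'(z)$ and $H_{v_R}'(m_v^{(t)}-z)$ only once the corresponding sub-allocations are optimal. By $L'$-smoothness the resulting bias is at most $L'$ times the diameter of the descendants' current search regions, and therefore decays geometrically as those deeper searches progress. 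The technical core of the proof is to interleave the sampling so that, by the time $v$ branches, this bias is dominated by $|G_v'(z)|$ — keeping the sign test of Lemma~\ref{lemma:precision} correct with high probability and the number of samples at a query point still $\bigo\big(\log(T/\delta)/G_v'(z)^2\big)$ — while the warm-up time this demands is charged to the local regret $R_w(T)$ of the descendants $w$, already bounded above. I expect this error-propagation bookkeeping, not the aggregation just described, to be where essentially all of the work lies.
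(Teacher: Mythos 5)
Your telescoping decomposition of the instantaneous regret over the internal nodes of the tree is correct, and it is a genuinely different organization from the paper's: the paper runs a top-down recursion on subregrets (Proposition~\ref{prop:subregret2}, with the doubling sequence $A_p$), whereas you make the node contributions additive from the start, which is where your clean ``linear in $K$'' factor comes from. The run-counting over ancestor configurations and the Jensen step $\sum_r M_r^{1-\beta/2}\le \big((\log_2 T)^{d_v}\big)^{\beta/2}T^{1-\beta/2}$ are also fine. However, there is a genuine gap in the per-run bound: you assert that a query point $w$ of an internal node $v$ occupies only $\bigo\big(\log(2T/\delta)/\nabla G_v(w)^2\big)$ time steps, ``still'' as in Lemma~\ref{lemma:precision}. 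This is false above the bottom level of the tree. To certify the sign of $\nabla G_v(w)=\nabla H_{v_L}(w)-\nabla H_{v_R}(m-w)$ the algorithm must drive both child searches until their own gradient gaps fall below $\abs{\nabla G_v(w)}$; each child search visits up to $\log_2 T$ query points, each costing (recursively, using the monotonicity of gradient gaps down the tree, Lemma~\ref{lemma:gradient_order}) up to $8\log(2T/\delta)\log(T)^{p-1}/\abs{\nabla G_v(w)}^2$ samples, so the correct count is $8\log(2T/\delta)\log(T)^{p}/\abs{\nabla G_v(w)}^2$ where $p$ is the distance from $v$ to the leaves (Lemma~\ref{lemma:gradient_samples}). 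Node $v$ suffers $-g_v(w)$ at every one of those time steps, and this cost cannot be shifted onto the descendants, whose own local regrets $-g_u$ are separate terms of your telescoping identity. Hence your per-run inequality $\sum_j N_j^{v,r}\abs{g_v(x_j^{v,r})}\le\kappa_0\,c\,(\log T)^{\beta}M_r^{1-\beta/2}$ is not established: the right-hand side must carry an extra $(\log T)^{p\beta/2}$ factor (resp.\ $(\log T)^{p}$ when $\beta>2$). The theorem survives because $p+d_v=\log_2(K)-1$ for every internal node, so the corrected per-node bound is uniformly $\big(\log(T)^{\log_2(K)+1}/T\big)^{\beta/2}$ --- the slack you introduce in the step $d_v+2\le\log_2(K)+1$ is exactly the factor you dropped --- but as written the intermediate step is wrong.

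Separately, you explicitly defer the error-propagation bookkeeping: how the child searches produce a certified sign of $\nabla G_v(w)$ from biased leaf observations, i.e.\ the construction of the estimator $\hnabla\Gij$, the stopping rule $\alpha<\abs{\nabla \Gij(v;u)}$, and Lemma~\ref{lemma:gradient_order} itself. You correctly identify this as ``where essentially all of the work lies,'' and it is indeed the technical core of the paper's Appendix~\ref{Appendix:K4}; without it (and without the corrected sample count above, which depends on it), the proposal is an insightful outline rather than a complete proof.
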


Let us first mention why the following natural extension of the algorithm for $K=2$ does not work. Assume that the algorithm would sample repeatedly a point $x \in \Delta^K$ until the different confidence intervals around the gradient $\nabla f_k(x_k)$ do not overlap. When this happens with only $2$ resources, then it is known that the optimal $x\st$ allocates more weight to the resource with the highest gradient and less weight to the resource with the lowest gradient. This property only holds partially for $K\geq 3$ resources. Given $x \in \DK$, even if we have a (perfect) ranking of gradient $\nabla f_1(x_1) > \ldots > \nabla f_K(x_K)$ we can only infer that $x^\star_1 \geq x_1$ and $x^\star_K \leq x_K$. For intermediate gradients we cannot (without additional assumptions) infer the relative position of $x^\star_j$ and $x_j$. 

To circumvent this issue, we are going to build a  binary tree, whose leaves are labeled arbitrarily from $\{f_1,\ldots,f_K\}$ and we are going to run inductively the algorithm for $K=2$ resources at each node, \ie~between its children $f_{\mathrm{left}}$ and $f_{\mathrm{right}}$. The main difficulty is that we no longer have unbiased samples of the gradients of those functions (but only those located at the leaves).

\subsection{Insights on the main algorithm} \label{SE:insights}
To be more precise, recall we aim at maximizing the mapping (and controlling the regret)
\[
F(x)=\sum_{k=1}^K f_k(x_k) \quad \textrm{with } x=(x_1,\ldots,x_K) \in \DK.
\]
As we have a working procedure to handle only $K=2$ resources, we will adopt a divide-and-conquer strategy by diving the mapping $F$ into two sub-mapping  $\F{1}{1}$ and $\F{1}{2}$ defined by
\[
\F{1}{1}(x)=\sum_{k=1}^{\lceil K/2 \rceil} f_k(x_k) \quad \et \quad \F{1}{2}(x)=\sum_{k=\lceil K/2 \rceil+1}^{K} f_k(x_k).
\]
Since the original mapping $F$ is separable, we can reduce the optimization of $F$ over the simplex $\DK$ to the optimization of a sum of two functions over the simplex of dimension 1 (thus going back to the case of $K =2$ resources). Indeed, 
\begin{align*}
\max_{\normun{x}=1}F(x) & =\max_{z \in [0,1]} \left(\max_{\normun{x}=z} \F{1}{1}(x)+\max_{\normun{x}=1-z} \F{1}{2}(x)\right)\\
&\triangleq \max_{z \in [0,1]} \H{1}{1}(z) + \H{1}{2}(1-z)\ .
\end{align*}
Now we aim to apply the machinery of   $K=2$ resources to the reward mappings $ \H{1}{1}$ and $ \H{1}{2}$. The major issue  is that we do not have directly access to the gradients $\nabla \H{1}{1}(z)$ and $\nabla \H{1}{2}(1-z)$ of those functions because they are defined via an optimization problem. However, can apply again the divide-and-conquer approach to $\H{1}{1}$ and compute its gradient using the envelope theorem~\citep{envelope}. Indeed, divide again $ \F{1}{1}$ into the two following mappings  $\F{2}{1}$ and $\F{2}{2}$ defined by
\[
\F{2}{1}(x)=\sum_{k=1}^{\lceil K/4 \rceil} f_k(x_k) \quad \et \quad \F{2}{2}(x)=\sum_{k=\lceil K/4 \rceil+1}^{\lceil K/2 \rceil} f_k(x_k).
\]
Then as above, we can rewrite the optimization problem defining $ \H{1}{1}$  as another optimization problem over $[0,z]$ by noting that
\begin{align*}
\H{1}{1}(z) = \max_{\normun{x}=z} \F{1}{1}(x) & = \max_{\omega \in [0,z]} \left(\max_{\normun{x}=\omega} \F{2}{1}(x)+\max_{\normun{x}=z-\omega} \F{2}{2}(x)\right)\\
&\triangleq \max_{\omega \in [0,z]} \H{2}{1}(\omega) + \H{2}{2}(z-\omega)\ .
\end{align*}
The envelope theorem now gives the following lemma (whose proof is immediate and omitted).
\begin{lemma}\label{Envelop} Let $\omega^*_z \in [0,z]$ be the maximizer of $\H{2}{1}(\omega) + \H{2}{2}(z-\omega)$, then
\[
\nabla \H{1}{1}(z) =\left\{\begin{array}{ll} \nabla \H{2}{1}(\omega^*_z)= \nabla \H{2}{2}(z-\omega^*_z)  & \text{if } \omega^*_z \in (0,z) \\   
 \nabla \H{2}{2}(z)  & \text{if } \omega^*_z =0 \\
  \nabla \H{2}{1}(z)& \text{if } \omega^*_z =z \end{array} \right. .
\]
\end{lemma}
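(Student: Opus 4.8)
The statement to prove is Lemma~\ref{Envelop}, the envelope-theorem computation for $\nabla \H{1}{1}$.

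\medskip
\noindent\textbf{Proof proposal.}
The plan is to apply the envelope theorem (Danskin-type result) to the value function $\H{1}{1}(z) = \max_{\omega \in [0,z]} \varphi(z,\omega)$ where $\varphi(z,\omega) \triangleq \H{2}{1}(\omega) + \H{2}{2}(z-\omega)$. First I would record the regularity one needs: each $\H{2}{i}$ is, by the same separability argument used to define them, concave (a sup-convolution of concave functions), non-decreasing, and differentiable on its domain — this inherits from Assumption~1 on the $f_k$ and propagates up the tree, so $\varphi(\cdot,\cdot)$ is jointly continuous and concave, and $z \mapsto \H{1}{1}(z)$ is concave as the partial maximum of a concave function over a convex constraint set moving with $z$. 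Concavity of $\H{1}{1}$ guarantees it is differentiable except possibly at countably many points; the envelope theorem will in fact give differentiability everywhere the maximizer is characterized as below, and since $g$-type arguments downstream only use $\nabla \H{1}{1}$ where it exists, this is enough.

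\medskip
The core step is the case analysis on the location of the maximizer $\omega^*_z$. I would argue as follows. Fix $z$ and let $\omega^*_z$ attain the max. \emph{Interior case} $\omega^*_z \in (0,z)$: first-order optimality in $\omega$ gives $\partial_\omega \varphi(z,\omega^*_z) = \nabla \H{2}{1}(\omega^*_z) - \nabla \H{2}{2}(z-\omega^*_z) = 0$, i.e.\ the two marginal gains coincide; call this common value $\lambda$. The envelope theorem then yields $\nabla \H{1}{1}(z) = \partial_z \varphi(z,\omega^*_z) = \nabla \H{2}{2}(z-\omega^*_z) = \lambda = \nabla \H{2}{1}(\omega^*_z)$, which is the first line. \emph{Boundary case} $\omega^*_z = 0$: here all the mass $z$ goes to the second sub-block, $\H{1}{1}(z) = \H{2}{2}(z)$ identically in a neighborhood — more carefully, optimality forces $\nabla \H{2}{1}(0) \leq \nabla \H{2}{2}(z)$, and monotonicity plus concavity make $\omega = 0$ remain optimal for $z'$ near $z$, so $\H{1}{1}(z') = \H{2}{2}(z')$ locally and $\nabla \H{1}{1}(z) = \nabla \H{2}{2}(z)$, the second line. \emph{Boundary case} $\omega^*_z = z$ is symmetric and gives the third line. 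To make the interior case fully rigorous one should either invoke a standard envelope theorem (e.g.\ Milgrom–Segal, or Danskin's theorem for concave max-functions) whose hypotheses — $\varphi$ differentiable in $z$, the maximizer set nonempty — are met, or give the two-sided bound directly: for $h>0$, feasibility of $\omega^*_z$ for parameter $z+h$ (since $\omega^*_z \le z \le z+h$) gives $\H{1}{1}(z+h) \ge \varphi(z+h, \omega^*_z) = \H{2}{2}(z-\omega^*_z + h) + \H{2}{1}(\omega^*_z)$, hence $\liminf_{h\downarrow 0} (\H{1}{1}(z+h)-\H{1}{1}(z))/h \ge \nabla \H{2}{2}(z-\omega^*_z)$; the reverse inequality and the left derivative follow by picking the maximizer at $z+h$ (resp.\ $z-h$) and using continuity of the argmax correspondence, which holds by strict-ish concavity / Berge's theorem.

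\medskip
The main obstacle is not the formula itself — once the envelope theorem is invoked it is a line — but justifying that $\H{1}{1}$ (and more generally every node function) is differentiable where the lemma asserts a gradient, and that the downstream algorithm only ever evaluates gradients at such points; this requires propagating differentiability and strict concavity (or at least uniqueness of the optimal split) up the binary tree from Assumption~1. I would handle this once and for all in a preliminary remark: sup-convolution of two concave, non-decreasing, differentiable functions on an interval is again concave, non-decreasing, and differentiable, with gradient given by the common marginal value at the optimal split — this is exactly the content of Lemma~\ref{Envelop} specialized, so the lemma both uses and certifies the inductive step. Given that the paper explicitly says "the proof is immediate and omitted", in the text I would simply note that it is the envelope theorem applied to the separable sup-convolution structure, with the three cases corresponding to an interior optimal split versus the two corner splits, and that joint concavity guarantees the value function is differentiable a.e., which suffices for the regret analysis.
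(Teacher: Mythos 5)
Your argument is correct and is exactly the one the paper intends: the paper omits the proof as ``immediate'' from the envelope theorem, and you supply the standard Danskin-type case analysis (first-order condition at an interior split, local optimality of the corner splits, and the two-sided difference-quotient bound), together with the right observation that concavity, monotonicity, and differentiability propagate up the tree so that $\nabla \H{1}{1}$ exists where claimed. Nothing essential is missing.
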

Recall that gradients of  $\H{1}{1}(z)$ and  $\H{1}{2}(1-z)$ were needed to apply the $K=2$ machinery to the optimization of $F$ once this problem is rewritten as $\max_z \H{1}{1}(z)+\H{1}{2}(1-z)$. Lemma \ref{Envelop} provides them, as the gradient of yet other functions $\H{2}{1}$ and/or $\H{2}{2}$. Notice that if $K=4$, then those two functions are actually the two basis functions $f_1$ and $f_2$, so the agent has direct access  to their gradient (up to some noise). It only remains to find the point $\omega^*_z$ which is done with the binary search introduced in the previous section.

If $K>4$, the gradient of $\H{2}{1}$ (and, of course, of $\H{2}{2}$) is not directly accessible, but we can again divide $\H{2}{1}$ into two other functions $\H{3}{1}$ and $\H{3}{2}$. Then the gradient of $\H{2}{1}$ will be expressed, via Lemma~\ref{Envelop}, as gradients of $\H{3}{1}$ and/or $\H{3}{2}$ at some specific point (again, found by binary searches as in $K=2$). We can repeat this process as long as $\H{k}{1}$ and $\H{k}{2}$ are not basis functions in $\mathcal{F}$ and $\cF$ can be ``divided'' to compute recursively the gradients of each $\H{k}{j}$ up to $\H{1}{1}$ and $\H{1}{2}$, up to the noise and some estimation errors that must be controlled.

\subsection{The Generic Algorithm}

A more detailed version of our generic algorithm, with the notations required for the proof can be found in Appendix~\ref{Appendix:K4}. To give some intuitions, consider a binary tree whose root is labeled by the function $F(x) = \sum_{k=1}^K f_k(x_k)\triangleq\F{0}{1}(x)$ that we want to maximize. We are going to label recursively the nodes of this tree by functions and the leaves of this tree are going to be the elements of $\mathcal{F}$. Denote  by $\F{i}{j}$ the function created at the nodes of depth $i$, with $j$ an increasing index from the left to the right of the tree. If $\F{i}{j}(x)$ is not a leaf, \ie~not an element of $\mathcal{F}$, then there exist two indices $k_1 < k_2 $ such that $\F{i}{j}(x)=\sum_{k=k_1}^{k_2} f_k(x_k)$ and we define
\[
\F{i+1}{2j-1}(x)=\sum_{k=k_1}^{\lfloor (k_1+k_2)/2 \rfloor} f_k(x_k) \quad \et \quad \F{i+1}{2j}(x)=\sum_{k=\lfloor (k_1+k_2)/2 \rfloor+1	}^{k_2}f_k(x_k).
\]
If $K$ is not a power of $2$ we can add artificial functions with value $0$ in order to obtain a balanced tree. The optimization of $\F{i}{j}$ can be done recursively since 
\[
\max_{\normun{x}=z_n}\F{i}{j}(x)=\max_{z_{n+1} \in [0,z_n]} \left(\max_{\normun{x}=z_{n+1}} \F{i+1}{2j-1}(x)+\max_{\normun{x}=z_n-z_{n+1}} \F{i+1}{2j}(x)\right).
\]
In order to clarify notations we also define the following mappings as in Section~\ref{SE:insights}:
\begin{definition}
\label{def:gh4}
For every $i$ and $j$ in the constructed binary tree of functions, we define
\begin{align*}
\H{i}{j}(z) \triangleq \max_{\normun{x}=z} \F{i}{j}(x) \quad\et\quad
\G{i}{j}(z;y) \triangleq \H{i+1}{2j-1}(z)+\H{i+1}{2j}(y-z).
\end{align*}
We also note $\Dij(v)$ the binary search whose goal is to optimize the function $\Gij(\point;v)$.
With these notations, it holds that for all $z_n \in [0,1]$,
\[
\H{i}{j}(z_n)=\max_{z_{n+1} \in [0,z_n]} \G{i}{j}(z_{n+1};z_n)=\max_{z_{n+1} \in [0,z_n]} \H{i+1}{2j-1}(z_{n+1})+\H{i+1}{2j}(z_n-z_{n+1}),
\]
\end{definition}
and the gradient of $\H{i}{j}(z)$ can be expressed in terms of those of $\H{i+1}{2j-1}$ and $\H{i+1}{2j}$ by Lemma~\ref{Envelop}.

As a consequence, the gradients of $\H{i}{j}$ can be \textbf{recursively} approximated using estimates of the gradients of their children (in the binary tree). Indeed, assume that one has access to $\varepsilon$-approximations of $\nabla \H{i+1}{2j-1}$ and $\nabla \H{i+1}{2j}$. Then Lemma~\ref{Envelop} directly implies that a $\varepsilon$-approximation of its gradient $\nabla \H{i}{j}(z)$ can be computed by a binary search on $[0,z]$. Moreover, notice that if a binary search is optimizing  $\H{i}{j}$ on $[0,z]$ and is currently querying the point $\omega$, then the level of approximation required (and automatically set to) is equal to $|\nabla \H{i+1}{2j-1}(\omega)- \nabla \H{i+1}{2j}(z-\omega)|$. This is the crucial property that allows a control on the regret.

The main algorithm can now be simply summarized as performing a binary search for the maximization of $\H{1}{1}(z)+\H{1}{2}(1-z)$ using recursive estimates of $\nabla \H{1}{1}$ and $\nabla \H{1}{2}$.

\subsection{Main ideas of  the proof of Theorem~\ref{TH:K4Beta}}
The full analysis of Theorem~\ref{TH:K4Beta} is too involved to be detailed thoroughly here. The detailed proof is provided in Appendix~\ref{Appendix:K4}. We provide nevertheless a very natural intuition in the case of strongly concave mappings or $\beta >2$, as well as the main ingredients of the general proof. 

Recall that in the case where $\beta>2$, the average regret of the algorithm for $K=2$ scales as $\log(T)/T$. As a consequence, running a binary search induces a cumulative regret of the order of $\log(T)$. The generic algorithm is defined recursively over a binary tree of depth $\log_2(K)$ and each function in the tree is defined by a binary search over its children.  So at the end, to perform a binary search over  $\H{1}{1}(z)+\H{1}{2}(1-z)$, the algorithm imbricates  $\log_2(K)$ binary searches to compute gradients. The error made by these binary searches cumulate (multiplicatively) ending up in a cumulative regret term of the order of $\log(T)^{\log_2(K)}$.

For $\beta <2$, the analysis is more intricate, but the main idea is the same one; to compute a gradient, $\log_2(K)$ binary searches must be imbricated and their errors cumulate to give Theorem~\ref{TH:K4Beta}. We give here some of the main ingredients of the proof. As explained in Appendix~\ref{Appendix:K4} we can associate a regret for each binary search, and we call $\Rij(v)$ the regret associated to the binary search $\Dij(v)$. Since we have more than $2$ resources we have to imbricate the binary searches in a recursive manner in order to get access to the gradients of the functions $\Hij$. This will lead to a regret $\Rij(v)$ for the binary search $\Dij(v)$ that will recursively depend on the regrets of the binary searches corresponding to the children (in the tree) of $\Dij(v)$. An important part of the proof of Theorem~\ref{TH:K4Beta} is therefore devoted to proving the following proposition.
\begin{proposition}
\label{prop:subregret}
The regret $\Rij(v)$ of the binary search $\Dij(v)$ is bounded by:
\[
\Rij(v) \leq \sum_{r=1}^{\rmax}8\log(2T/\delta) \dfrac{\abs*{\gij(w_r;v)}}{\abs*{\nabla \gij(w_r;v)}^2} \log(T)^{\log_2(K)-1-i} + \R{i+1}{2j-1}(w_r)+\R{i+1}{2j}(v-w_r),
\]
where $\{w_1, \dots, w_{\rmax} \}$ are the different samples of $\Dij(v)$ and $\gij(\point;v)\doteq\Gij(\point;v)-\max_z \Gij(z;v)$.
\end{proposition}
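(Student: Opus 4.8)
The plan is to establish Proposition~\ref{prop:subregret} by induction on the co-depth $\log_2(K)-i$ of the node $(i,j)$ in the binary tree, peeling off one layer of binary search at a time. First I would fix the node $(i,j)$ and the target value $v$ and unfold the definition of the binary search $\Dij(v)$: it is exactly an instance of Algorithm~\ref{Algo:K2} applied to the function $z \mapsto \Gij(z;v) = \Hij[i+1,2j-1](z) + \Hij[i+1,2j](v-z)$ on $[0,v]$, whose (shifted) version is $\gij(\cdot\,;v)$. The points it queries are $w_1,\dots,w_{\rmax}$, and at query $w_r$ the algorithm needs to learn the sign of $\nabla\gij(w_r;v) = \nabla\Hij[i+1,2j-1](w_r) - \nabla\Hij[i+1,2j](v-w_r)$. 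Crucially, the algorithm does not have direct access to this gradient: it obtains an $\varepsilon_r$-approximation of it by running the two child binary searches $\D{i+1}{2j-1}(w_r)$ and $\D{i+1}{2j}(v-w_r)$, which is precisely where the recursive regret terms $\R{i+1}{2j-1}(w_r)$ and $\R{i+1}{2j}(v-w_r)$ come from.

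The key step is then to decompose $\Rij(v)$ into two contributions: (i) the regret ``directly'' attributable to the binary search at level $i$, namely the cost of sampling $w_r$ enough times to resolve the sign of $\nabla\gij(w_r;v)$, and (ii) the regret inherited from the child binary searches that are invoked to produce those noisy gradient estimates. For (i), I would invoke Lemma~\ref{lemma:precision}: resolving the sign of a quantity of magnitude $\abs*{\nabla\gij(w_r;v)}$ with the required confidence costs at most $\tfrac{8}{\nabla\gij(w_r;v)^2}\log(2T/\delta)$ samples, each incurring an instantaneous regret of order $\abs*{\gij(w_r;v)}$ — this gives the $8\log(2T/\delta)\,\abs*{\gij(w_r;v)}/\abs*{\nabla\gij(w_r;v)}^2$ factor, exactly as in Equation~\eqref{eq:master} for the $K=2$ case. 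The extra multiplicative factor $\log(T)^{\log_2(K)-1-i}$ accounts for the fact that every single ``sample'' at level $i$ is not one gradient observation but an entire cascade of nested binary searches of combined length $O(\log(T)^{\log_2(K)-1-i})$ reaching down to the leaves of $\mathcal{F}$; this bound on the length of a sub-tree of binary searches should itself follow from the $\beta>2$ / strongly concave regret estimate (average regret $\log(T)/T$, i.e. cumulative regret $\log(T)$ per binary search) applied recursively, or be recorded as a separate lemma in Appendix~\ref{Appendix:K4}. For (ii), the child binary searches $\D{i+1}{2j-1}(w_r)$ and $\D{i+1}{2j}(v-w_r)$ contribute, by definition, exactly $\R{i+1}{2j-1}(w_r) + \R{i+1}{2j}(v-w_r)$, and these are summed over the $\rmax$ queries. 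Adding (i) and (ii) and summing over $r=1,\dots,\rmax$ yields the claimed bound.

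The main obstacle I anticipate is bookkeeping the approximation errors consistently: the child binary searches return only $\varepsilon_r$-approximations of the children's gradients (not exact values), so one must verify that the sign-resolution at level $i$ is still correct with high probability despite this bias, and that the self-tuned precision — the fact noted in the text that the required accuracy at query $\omega$ is automatically $\abs*{\nabla\Hij[i+1,2j-1](\omega) - \nabla\Hij[i+1,2j](z-\omega)}$ — propagates correctly through all $\log_2(K)$ levels. Concretely, one needs that the error budget $\varepsilon_r$ passed to the children is tied to $\abs*{\nabla\gij(w_r;v)}$ in such a way that the children's regrets $\R{i+1}{2j-1}(w_r)$ and $\R{i+1}{2j}(v-w_r)$, when expanded via the induction hypothesis, remain controlled and do not blow up the final $\log(T)^{\log_2(K)}$ rate. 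I would handle this by making the recursion invariant explicit (an $\varepsilon$-approximation of a child gradient costs a child binary search whose regret is governed by that same $\varepsilon$), applying Lemma~\ref{Envelop} to pass from $\nabla\Hij$ to the children's gradients at the maximizer, and then invoking the induction hypothesis on the two children, whose co-depth is one less. The base case $i = \log_2(K)-1$ is where the children are genuine leaves $f_k \in \mathcal{F}$, the gradient estimates are the raw noisy samples, the recursive terms vanish, the exponent $\log_2(K)-1-i$ equals $0$, and the bound reduces exactly to the $K=2$ regret decomposition of Equation~\eqref{eq:master}.
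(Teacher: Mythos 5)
Your overall decomposition is the same as the paper's: the paper splits $\Rij(v)$ into the instantaneous regret accumulated at each queried point $w_r$ plus the sub-regrets $\R{i+1}{2j-1}(w_r)+\R{i+1}{2j}(v-w_r)$ of the child searches (Lemma~\ref{lemma:subregret}), and then multiplies the instantaneous regret $\abs*{\gij(w_r;v)}$ by a bound on the total number of leaf-level samples consumed while the search sits at $w_r$ (Lemma~\ref{lemma:gradient_samples}). Your base case ($i=\log_2(K)-1$, children are leaves, exponent $0$, reduces to Equation~\eqref{eq:master}) and your use of $\rmax\leq \log_2(T)$ are also as in the paper.

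The genuine gap is in how you justify the factor $\log(T)^{\log_2(K)-1-i}$. You attribute the length of the cascade of nested searches to the ``$\beta>2$ / strongly concave regret estimate (cumulative regret $\log(T)$ per binary search) applied recursively.'' This conflates regret with sample counts and does not give the needed bound: a single binary search with cumulative regret $O(\log T)$ can still consume up to $T$ samples (e.g.\ when the gradient at a query point is tiny), so no regret estimate controls how many leaf observations the cascade under $w_r$ uses. The correct argument, which is the content of Lemma~\ref{lemma:gradient_samples}, is a pure counting induction on the distance $p$ to the leaves, and it hinges on Lemma~\ref{lemma:gradient_order}: because each child search $\D{i+1}{2j-1}(w_r)$ is stopped as soon as its own gradient magnitude drops below $\abs*{\nabla\Gij(w_r;v)}$, \emph{every} point $x_s$ it queries satisfies $\abs*{\nabla\G{i+1}{2j-1}(x_s;w_r)}\geq\abs*{\nabla\Gij(w_r;v)}$, so by the induction hypothesis each such point costs at most $8\log(2T/\delta)\log(T)^{p-1}/\abs*{\nabla\Gij(w_r;v)}^2$ samples, and multiplying by the at most $\log_2(T)$ distinct points per child search yields the $\log(T)^{p}$ factor with the \emph{parent's} gradient in the denominator. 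You correctly identify the self-tuned precision as the delicate point, but you leave it as an ``anticipated obstacle'' rather than supplying this monotonicity property, and without it the per-query sample count cannot be expressed in terms of $\abs*{\nabla\gij(w_r;v)}^2$ and the stated bound does not follow.
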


The goal of the remaining of the proof of Theorem~\ref{TH:K4Beta} is to bound $\R{0}{1}(1)$. The very natural way to do it is to use the previous proposition with the \LL inequality to obtain a simple recurrence relation between the successive values of $\Rij$. The end of the proof is then similar to the proofs done in the case $K=2$. Besides we can note that the statement of Proposition~\ref{prop:subregret} shows clearly that adding more levels to the tree results in an increase of the exponent of the $\log(T)$ factor.


\section{Conclusion} 

We have considered the problem of multi-resource allocation under the classical assumption of diminishing returns. This appears to be a concave optimization problem and we proposed an algorithm based on imbricated binary searches to solve it. Our algorithm is particularly interesting in the sense that it is fully adaptive to all parameters of the problem (strong convexity, smoothness, \LL exponent, etc.). Our analysis provides meaningful upper bound for the regret that matches the lower bounds, up to logarithmic factors. The experiments we conducted (see Appendix~\ref{Appendix:Expe}) validate as expected the theoretical guarantees of our algorithm, as empirically regret seems to decrease polynomially with $T$ with the right exponent.

\acks{X.\ Fontaine was supported by grants from R{\'e}gion
Ile-de-France. This work was supported by a public grant as part of the Investissement d'avenir project, reference ANR-11-LABX-0056-LMH, LabEx LMH, in a joint call with Gaspard Monge Program for optimization, operations research and their interactions with data sciences. V.\ Perchet also acknowledges the support of the ANR under the grant ANR-19-CE23-0026.
}

\bibliography{monotone_bandits}

\begin{thebibliography}{31}
\providecommand{\natexlab}[1]{#1}
\providecommand{\url}[1]{\texttt{#1}}
\expandafter\ifx\csname urlstyle\endcsname\relax
  \providecommand{\doi}[1]{doi: #1}\else
  \providecommand{\doi}{doi: \begingroup \urlstyle{rm}\Url}\fi

\bibitem[Abbasi-Yadkori et~al.(2011)Abbasi-Yadkori, P\'{a}l, and
  Szepesv\'{a}ri]{Abbasi-Yadkori:2011}
Yasin Abbasi-Yadkori, D\'{a}vid P\'{a}l, and Csaba Szepesv\'{a}ri.
\newblock Improved algorithms for linear stochastic bandits.
\newblock In \emph{Proceedings of the 24th International Conference on Neural
  Information Processing Systems}, NIPS'11, pages 2312--2320, USA, 2011. Curran
  Associates Inc.
\newblock ISBN 978-1-61839-599-3.
\newblock URL \url{http://dl.acm.org/citation.cfm?id=2986459.2986717}.

\bibitem[Afriat(1971)]{envelope}
SN~Afriat.
\newblock Theory of maxima and the method of lagrange.
\newblock \emph{SIAM Journal on Applied Mathematics}, 20\penalty0 (3):\penalty0
  343--357, 1971.

\bibitem[Agarwal et~al.(2011)Agarwal, Foster, Hsu, Kakade, and
  Rakhlin]{agarwal2011}
Alekh Agarwal, Dean~P Foster, Daniel~J Hsu, Sham~M Kakade, and Alexander
  Rakhlin.
\newblock Stochastic convex optimization with bandit feedback.
\newblock In J.~Shawe-Taylor, R.~S. Zemel, P.~L. Bartlett, F.~Pereira, and
  K.~Q. Weinberger, editors, \emph{Advances in Neural Information Processing
  Systems 24}, pages 1035--1043. Curran Associates, Inc., 2011.
\newblock URL
  \url{http://papers.nips.cc/paper/4475-stochastic-convex-optimization-with-bandit-feedback.pdf}.

\bibitem[Agrawal and Devanur(2014)]{Agrawal:2014}
Shipra Agrawal and Nikhil~R. Devanur.
\newblock Bandits with concave rewards and convex knapsacks.
\newblock In \emph{Proceedings of the Fifteenth ACM Conference on Economics and
  Computation}, EC '14, pages 989--1006, New York, NY, USA, 2014. ACM.
\newblock ISBN 978-1-4503-2565-3.
\newblock \doi{10.1145/2600057.2602844}.
\newblock URL \url{http://doi.acm.org/10.1145/2600057.2602844}.

\bibitem[Agrawal and Devanur(2015)]{Agrawal:2015}
Shipra Agrawal and Nikhil~R. Devanur.
\newblock Fast algorithms for online stochastic convex programming.
\newblock In \emph{Proceedings of the Twenty-sixth Annual ACM-SIAM Symposium on
  Discrete Algorithms}, SODA '15, pages 1405--1424, Philadelphia, PA, USA,
  2015. Society for Industrial and Applied Mathematics.
\newblock URL \url{http://dl.acm.org/citation.cfm?id=2722129.2722222}.

\bibitem[Bach and Perchet(2016)]{BacPer16}
Francis Bach and Vianney Perchet.
\newblock Highly-smooth zero-th order online optimization.
\newblock In Vitaly Feldman, Alexander Rakhlin, and Ohad Shamir, editors,
  \emph{29th Annual Conference on Learning Theory}, volume~49 of
  \emph{Proceedings of Machine Learning Research}, pages 257--283, Columbia
  University, New York, New York, USA, 23--26 Jun 2016. PMLR.

\bibitem[Berger(1977)]{berger}
Melvyn~S Berger.
\newblock \emph{Nonlinearity and functional analysis: lectures on nonlinear
  problems in mathematical analysis}, volume~74.
\newblock Academic press, 1977.

\bibitem[Bierstone and Milman(1988)]{PMIHES_1988__67__5_0}
Edward Bierstone and Pierre Milman.
\newblock Semianalytic and subanalytic sets.
\newblock \emph{Publications Math\'ematiques de l'IH\'ES}, 67:\penalty0 5--42,
  1988.
\newblock URL \url{http://www.numdam.org/item/PMIHES_1988__67__5_0}.

\bibitem[Bolte et~al.(2010)Bolte, Daniilidis, Ley, and Mazet]{Bolte}
Jerome Bolte, Aris Daniilidis, Olivier Ley, and Laurent Mazet.
\newblock Characterizations of lojasiewocz inequalities: Subgradient flows,
  talweg, convexity.
\newblock \emph{Transactions of the American Mathematical Society},
  362\penalty0 (6):\penalty0 3319--3363, 2010.

\bibitem[Bubeck and Cesa-Bianchi(2012)]{MAL-024}
Sébastien Bubeck and Nicolò Cesa-Bianchi.
\newblock Regret analysis of stochastic and nonstochastic multi-armed bandit
  problems.
\newblock \emph{Foundations and Trends® in Machine Learning}, 5\penalty0
  (1):\penalty0 1--122, 2012.
\newblock ISSN 1935-8237.
\newblock \doi{10.1561/2200000024}.
\newblock URL \url{http://dx.doi.org/10.1561/2200000024}.

\bibitem[Burnashev and Zigangirov(1974)]{BZ74}
Marat~Valievich Burnashev and Kamil'Shamil'evich Zigangirov.
\newblock An interval estimation problem for controlled observations.
\newblock \emph{Problemy Peredachi Informatsii}, 10\penalty0 (3):\penalty0
  51--61, 1974.

\bibitem[Castro and Nowak(2008)]{minimax}
Rui~M Castro and Robert~D Nowak.
\newblock Minimax bounds for active learning.
\newblock \emph{IEEE Transactions on Information Theory}, 54\penalty0
  (5):\penalty0 2339--2353, 2008.

\bibitem[Colom(2003)]{fms}
J.~M. Colom.
\newblock The resource allocation problem in flexible manufacturing systems.
\newblock In Wil M.~P. van~der Aalst and Eike Best, editors, \emph{Applications
  and Theory of Petri Nets 2003}, pages 23--35, Berlin, Heidelberg, 2003.
  Springer Berlin Heidelberg.

\bibitem[Dagan and Crammer(2018)]{pmlr-v83-dagan18a}
Yuval Dagan and Koby Crammer.
\newblock A better resource allocation algorithm with semi-bandit feedback.
\newblock In Firdaus Janoos, Mehryar Mohri, and Karthik Sridharan, editors,
  \emph{Proceedings of Algorithmic Learning Theory}, volume~83 of
  \emph{Proceedings of Machine Learning Research}, pages 268--320. PMLR, 07--09
  Apr 2018.
\newblock URL \url{http://proceedings.mlr.press/v83/dagan18a.html}.

\bibitem[Dani et~al.(2008)Dani, Hayes, and Kakade]{Dani2008StochasticLO}
Varsha Dani, Thomas~P. Hayes, and Sham~M. Kakade.
\newblock Stochastic linear optimization under bandit feedback.
\newblock In \emph{COLT}, 2008.

\bibitem[Devanur et~al.(2019)Devanur, Jain, Sivan, and Wilkens]{online_ra}
Nikhil~R Devanur, Kamal Jain, Balasubramanian Sivan, and Christopher~A Wilkens.
\newblock Near optimal online algorithms and fast approximation algorithms for
  resource allocation problems.
\newblock \emph{Journal of the ACM (JACM)}, 66\penalty0 (1):\penalty0 7, 2019.

\bibitem[Gross(1956)]{gross}
O~Gross.
\newblock A class of discrete-type minimization problems.
\newblock Technical report, RAND CORP SANTA MONICA CA, 1956.

\bibitem[Iouditski and Nesterov(2014)]{judistky}
Anatoli Iouditski and Yuri Nesterov.
\newblock Primal-dual subgradient methods for minimizing uniformly convex
  functions.
\newblock \emph{arXiv preprint arXiv:1401.1792}, 2014.

\bibitem[Karimi et~al.(2016)Karimi, Nutini, and Schmidt]{Kar16}
Hamed Karimi, Julie Nutini, and Mark Schmidt.
\newblock Linear convergence of gradient and proximal-gradient methods under
  the polyak-{\l}ojasiewicz condition.
\newblock In \emph{Joint European Conference on Machine Learning and Knowledge
  Discovery in Databases}, pages 795--811. Springer, 2016.

\bibitem[Katoh and Ibaraki(1998)]{resource_allocation}
Naoki Katoh and Toshihide Ibaraki.
\newblock Resource allocation problems.
\newblock In \emph{Handbook of combinatorial optimization}, pages 905--1006.
  Springer, 1998.

\bibitem[Koopman(1953)]{koopman}
Bernard~O Koopman.
\newblock The optimum distribution of effort.
\newblock \emph{Journal of the Operations Research Society of America},
  1\penalty0 (2):\penalty0 52--63, 1953.

\bibitem[Korula et~al.(2018)Korula, Mirrokni, and Zadimoghaddam]{welfare}
Nitish Korula, Vahab Mirrokni, and Morteza Zadimoghaddam.
\newblock Online submodular welfare maximization: Greedy beats 1/2 in random
  order.
\newblock \emph{SIAM Journal on Computing}, 47\penalty0 (3):\penalty0
  1056--1086, 2018.

\bibitem[Lattimore et~al.(2015)Lattimore, Crammer, and
  Szepesvari]{NIPS2015_5931}
Tor Lattimore, Koby Crammer, and Csaba Szepesvari.
\newblock Linear multi-resource allocation with semi-bandit feedback.
\newblock In C.~Cortes, N.~D. Lawrence, D.~D. Lee, M.~Sugiyama, and R.~Garnett,
  editors, \emph{Advances in Neural Information Processing Systems 28}, pages
  964--972. Curran Associates, Inc., 2015.
\newblock URL
  \url{http://papers.nips.cc/paper/5931-linear-multi-resource-allocation-with-semi-bandit-feedback.pdf}.

\bibitem[{\L}ojasiewicz(1965)]{loja}
Stanislaw {\L}ojasiewicz.
\newblock Ensembles semi-analytiques.
\newblock preprint, IHES, July 1965.

\bibitem[Ramdas and Singh(2013{\natexlab{a}})]{RS13}
Aaditya Ramdas and Aarti Singh.
\newblock Optimal rates for first-order stochastic convex optimization under
  tsybakov noise condition.
\newblock In \emph{Proceedings of the 30th International Conference on
  International Conference on Machine Learning}, 2013{\natexlab{a}}.

\bibitem[Ramdas and Singh(2013{\natexlab{b}})]{RS15}
Aaditya Ramdas and Aarti Singh.
\newblock Algorithmic connections between active learning and stochastic convex
  optimization.
\newblock In \emph{International Conference on Algorithmic Learning Theory},
  pages 339--353. Springer, 2013{\natexlab{b}}.

\bibitem[Salehi et~al.(2016)Salehi, Smith, Maciejewski, Siegel, Chong, Apodaca,
  Briceno, Renner, Shestak, Ladd, et~al.]{stochastic_ra}
Mohsen~Amini Salehi, Jay Smith, Anthony~A Maciejewski, Howard~Jay Siegel,
  Edwin~KP Chong, Jonathan Apodaca, Luis~D Briceno, Timothy Renner, Vladimir
  Shestak, Joshua Ladd, et~al.
\newblock Stochastic-based robust dynamic resource allocation for independent
  tasks in a heterogeneous computing system.
\newblock \emph{Journal of Parallel and Distributed Computing}, 97:\penalty0
  96--111, 2016.

\bibitem[Samuelson and Nordhaus(2005)]{samuelson2005macroeconomics}
P.A. Samuelson and W.D. Nordhaus.
\newblock \emph{Macroeconomics}.
\newblock McGraw-Hill international editions. Irwin McGraw-Hill, 2005.
\newblock ISBN 9780071111881.
\newblock URL \url{https://books.google.co.il/books?id=3S9gPgAACAAJ}.

\bibitem[Shamir(2013)]{Shamir13}
Ohad Shamir.
\newblock On the complexity of bandit and derivative-free stochastic convex
  optimization.
\newblock In Shai Shalev-Shwartz and Ingo Steinwart, editors, \emph{Proceedings
  of the 26th Annual Conference on Learning Theory}, volume~30 of
  \emph{Proceedings of Machine Learning Research}, pages 3--24, Princeton, NJ,
  USA, 12--14 Jun 2013. PMLR.

\bibitem[Smith(1776)]{smith1776}
Adam Smith.
\newblock \emph{An Inquiry into the Nature and Causes of the Wealth of
  Nations}.
\newblock McMaster University Archive for the History of Economic Thought,
  1776.
\newblock URL \url{https://EconPapers.repec.org/RePEc:hay:hetboo:smith1776}.

\bibitem[Zhang et~al.(2018)Zhang, Fang, Cheng, Long, Wang, and Leung]{noma}
Haijun Zhang, Fang Fang, Julian Cheng, Keping Long, Wei Wang, and Victor~CM
  Leung.
\newblock Energy-efficient resource allocation in noma heterogeneous networks.
\newblock \emph{IEEE Wireless Communications}, 25\penalty0 (2):\penalty0
  48--53, 2018.

\end{thebibliography}

\newpage

\renewcommand{\theHsection}{A\arabic{section}}
\appendix

\section{Additional results on the \LL inequality}
\label{app:loj}

We give here more detailed about the \LL inequality. In this section we state all the results for convex functions. Their equivalents for concave functions are easily obtained by symmetry. The first one is the fact that every uniformly convex function verifies the \LL inequality.

\begin{definition}
\label{def:loj}
A function $f:\bR^d \to \bR$ satisfies the \LL inequality if
\[
\forall x \in \mathcal{X},\    f(x) - \min_{x^* \in \mathcal{X}} f(x^*)  \leq \mu \| \nabla f(x)\|^\beta .
\]
\end{definition}

\begin{definition}
\label{def:UC}
A function $f:\bR^d\to \bR$ is uniformly-convex with parameters $\rho\geq 2$ and $\mu>0$ if and only if for all $x,y \in \bR^d$ and for all $\alpha \in [0,1]$,
\[
f(\alpha x + (1-\alpha)y) \leq \alpha f(x)+(1-\alpha)f(y)-\dfrac{\mu}{2}\alpha(1-\alpha)\left[\alpha^{\rho-1}+(1-\alpha)^{\rho-1}\right]\norm{x-y}^{\rho}.
\]
\end{definition}
\begin{proposition}
\label{prop:ucloj}
If $f$ is a differentiable $(\rho, \mu)$-uniformly convex function then it satisfies the \LL inequality with parameters $\beta=\rho/(\rho-1)$ and $c=\left(\dfrac{2}{\mu}\right)^{1/(\rho-1)} \dfrac{\rho-1}{\rho^{\rho/(\rho-1)}}$.
\end{proposition}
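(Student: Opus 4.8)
The plan is to derive the \LL inequality from the defining inequality of $(\rho,\mu)$-uniform convexity by a standard two-step argument: first extract a lower bound on the suboptimality gap $f(x) - f(x^*)$ in terms of $\|x - x^*\|$, then bound $\|x - x^*\|$ (and hence the gap) in terms of $\|\nabla f(x)\|$, and finally optimize the resulting free parameter. I would work throughout with a fixed minimizer $x^* \in \argmin_{\mathcal{X}} f$ and a fixed $x \in \mathcal{X}$.

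\emph{Step 1: a growth inequality.} Apply Definition~\ref{def:UC} with $y = x^*$ and a generic $\alpha \in (0,1)$, then rearrange. Since $f(\alpha x + (1-\alpha)x^*) \geq f(x^*)$ (as $x^*$ is a global minimizer), subtracting $f(x^*)$ from both sides and dividing by $\alpha$ gives
\[
f(x) - f(x^*) \geq \frac{\mu}{2}(1-\alpha)\left[\alpha^{\rho-1} + (1-\alpha)^{\rho-1}\right]\|x - x^*\|^{\rho}.
\]
Letting $\alpha \to 1$ (or simply keeping the $(1-\alpha)^{\rho-1}$ term and the factor $(1-\alpha)$ small while noting the $\alpha^{\rho-1} \to 1$ piece dominates) yields the clean growth bound $f(x) - f(x^*) \geq \tfrac{\mu}{2}\|x - x^*\|^{\rho}$. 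Equivalently $\|x - x^*\| \leq (2(f(x)-f(x^*))/\mu)^{1/\rho}$.

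\emph{Step 2: relating the gap to the gradient.} By convexity, $f(x) - f(x^*) \leq \langle \nabla f(x), x - x^*\rangle \leq \|\nabla f(x)\|\,\|x - x^*\|$. Combining with Step 1,
\[
f(x) - f(x^*) \leq \|\nabla f(x)\|\left(\frac{2(f(x)-f(x^*))}{\mu}\right)^{1/\rho}.
\]
Write $D \triangleq f(x) - f(x^*) \geq 0$. If $D = 0$ the \LL inequality is trivial; otherwise divide by $D^{1/\rho}$ to get $D^{1 - 1/\rho} \leq (2/\mu)^{1/\rho}\|\nabla f(x)\|$, i.e. $D^{(\rho-1)/\rho} \leq (2/\mu)^{1/\rho}\|\nabla f(x)\|$. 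Raising both sides to the power $\rho/(\rho-1)$ gives $D \leq (2/\mu)^{1/(\rho-1)}\|\nabla f(x)\|^{\rho/(\rho-1)}$, which is the \LL inequality with $\beta = \rho/(\rho-1)$ and constant $(2/\mu)^{1/(\rho-1)}$.

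\emph{Step 3: sharpening the constant.} The only remaining work is to recover the tighter constant $c = (2/\mu)^{1/(\rho-1)}\,(\rho-1)/\rho^{\rho/(\rho-1)}$ claimed in the statement; this requires not throwing away the $\alpha$-dependence in Step 1. Instead of sending $\alpha\to 1$, keep the full factor $\phi(\alpha) \triangleq (1-\alpha)[\alpha^{\rho-1}+(1-\alpha)^{\rho-1}]$, chain it with the convexity bound of Step 2, and optimize over $\alpha \in [0,1]$. Concretely one gets $D \leq \|\nabla f(x)\|\|x-x^*\|$ and $D \geq \tfrac{\mu}{2}\phi(\alpha)\|x-x^*\|^{\rho}$ for every $\alpha$; eliminating $\|x-x^*\|$ and choosing $\alpha$ to maximize $\phi$ produces the stated constant. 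I expect the sign-bookkeeping in Step 1 and the elementary optimization of $\phi$ in Step 3 to be the only mildly delicate points — everything else is a direct substitution. The main (very minor) obstacle is simply being careful that the minimizer is attained and lies in $\mathcal{X}$ so that $x^*$ is available; if $\mathcal{X}$ is the whole space this is immediate from uniform convexity, and on a convex domain it follows from coercivity, so no real difficulty arises.
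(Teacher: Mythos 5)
Your Steps 1--2 are essentially sound and do establish the \LL inequality with the correct exponent $\beta=\rho/(\rho-1)$, but with the constant $(2/\mu)^{1/(\rho-1)}$ rather than the sharper constant $c=(2/\mu)^{1/(\rho-1)}\,(\rho-1)/\rho^{\rho/(\rho-1)}$ claimed in the statement (note that $(\rho-1)/\rho^{\rho/(\rho-1)}<1$ for all $\rho\geq 2$; e.g.\ it equals $1/4$ for $\rho=2$, matching the paper's remark that strongly convex functions satisfy the inequality with $c=1/(2\mu)$). Two issues. First, a minor slip: after dividing by $\alpha$ you obtain $f(x)-f(x^*)\geq \tfrac{\mu}{2}\phi(\alpha)\|x-x^*\|^{\rho}$ with $\phi(\alpha)=(1-\alpha)\bigl[\alpha^{\rho-1}+(1-\alpha)^{\rho-1}\bigr]$; the prefactor $(1-\alpha)$ kills this as $\alpha\to 1$, so the clean growth bound is obtained in the limit $\alpha\to 0$, not $\alpha\to 1$. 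Second, and more seriously, Step 3 cannot work as described: $\sup_{\alpha\in[0,1]}\phi(\alpha)=1$, attained at $\alpha=0$, so optimizing over $\alpha$ gains nothing, and eliminating $\|x-x^*\|$ between $D\leq\|\nabla f(x)\|\,\|x-x^*\|$ and $D\geq\tfrac{\mu}{2}\phi(\alpha)\|x-x^*\|^{\rho}$ yields $D\leq (2/(\mu\phi(\alpha)))^{1/(\rho-1)}\|\nabla f(x)\|^{\rho/(\rho-1)}$; matching the stated constant would require $\phi(\alpha)=\rho^{\rho}/(\rho-1)^{\rho-1}>1$, which is impossible.

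The missing ingredient is the first-order characterization of uniform convexity, $f(y)\geq f(x)+\langle\nabla f(x),y-x\rangle+\tfrac{\mu}{2}\|y-x\|^{\rho}$, which is what the paper uses: minimizing the right-hand side over $y$ (equivalently, taking $y=x^*$ and maximizing $\|\nabla f(x)\|r-\tfrac{\mu}{2}r^{\rho}$ over $r=\|x-x^*\|$) produces exactly the factor $(\rho-1)/\rho^{\rho/(\rho-1)}$. Your route decouples the modulus term from the gradient term (growth bound plus plain convexity), and that decoupling is precisely where the sharp constant is lost. If only the exponent matters for downstream use, your argument suffices with the larger constant; to prove the proposition as stated you should replace Steps 2--3 by the strengthened gradient inequality and the one-variable optimization above.
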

\begin{proof}
A characterization of differentiable uniformly convex function (see for example~\citep{judistky}) gives that for all $x,y \in \bR^d$
\[
f(y) \geq f(x) + \la \nabla f(x), y-x \ra + \dfrac{1}{2}\mu \norm{x-y}^{\rho}.
\]
Consequently, noting $f(x^*)=\inf f(x)$,
\[
f(x^*) \geq \inf_y \underbrace{\left\lbrace f(x) + \la \nabla f(x), y-x \ra + \dfrac{1}{2}\mu \norm{x-y}^{\rho} \right\rbrace}_{g(y)}.
\]
We now want to minimize the function $g$ which is a strictly convex function. We have
\begin{align*}
\nabla g(y)=\nabla f(x)+ \dfrac{\mu}{2} \rho \norm{x-y}^{\rho-2}(y-x).
\end{align*}
$g$ reaches its minimum for $\nabla g(y)=0$ and $\nabla f(x)=-\dfrac{\mu}{2} \rho \norm{x-y}^{\rho-2}(y-x)$. This gives
\[
f(x^*)\geq f(x)+\dfrac{\mu}{2}\norm{x-y}^{\rho}(1-\rho).
\]
Since $\norm{\nabla f(x)}=\dfrac{\mu\rho}{2}\norm{x-y}^{\rho-1}$ we obtain
\begin{align*}
f(x)-f(x\st) &\leq (\rho-1)\dfrac{\mu}{2}\left(\dfrac{2}{\mu\rho}\norm{\nabla f(x)}\right)^{\rho/(\rho-1)}\\
&\leq \left(\dfrac{2}{\mu}\right)^{1/(\rho-1)} \dfrac{\rho-1}{\rho^{\rho/(\rho-1)}} \norm{\nabla f(x)}^{\rho/(\rho-1)}.
\end{align*}
\end{proof}
In particular a $\mu$-strongly convex function verifies the \LL inequality with $\beta=2$ and $c=1/(2\mu)$.

We now prove a similar link between the Tsybakov Noise condition (TNC) and the \LL equation.

\begin{proposition}
\label{prop:tncloj}
If $f$ is a convex differentiable function locally satisfying the TNC with parameters $\kappa$ and $\mu$ then it satisfies the \LL equation with parameters $\kappa/(\kappa-1)$ and $\mu^{-1/(\kappa-1)}$.
\end{proposition}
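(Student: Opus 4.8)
The plan is to mirror the proof of Proposition~\ref{prop:ucloj}, replacing the uniform-convexity lower bound on $f(y)-f(x)$ by the one-sided quadratic-type bound that the TNC provides around the minimizer. First I would fix $x \in \mathcal{X}$ and let $x^\star$ be a minimizer of $f$ on $\mathcal{X}$. The TNC at the pair $(x,x^\star)$ gives $f(x) - f(x^\star) = |f(x)-f(x^\star)| \geq \mu\,\|x-x^\star\|^{\kappa}$, i.e. a \emph{lower} bound on the suboptimality gap in terms of the distance to the optimum. To turn this into a \LL inequality we need an \emph{upper} bound on the same gap in terms of $\|\nabla f(x)\|$; for this I would use convexity of $f$, which yields $f(x) - f(x^\star) \leq \langle \nabla f(x), x - x^\star\rangle \leq \|\nabla f(x)\|\,\|x-x^\star\|$ by Cauchy–Schwarz.

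Next I would combine the two inequalities to eliminate $\|x-x^\star\|$. Write $\Delta \triangleq f(x)-f(x^\star) \geq 0$. From the TNC bound, $\|x-x^\star\| \geq (\Delta/\mu)^{1/\kappa}$, hence $\|x-x^\star\|^{-1} \leq (\mu/\Delta)^{1/\kappa}$ (assuming $\Delta>0$; the case $\Delta=0$ is trivial since then the \LL inequality holds automatically). Plugging this into the convexity bound rearranged as $\|\nabla f(x)\| \geq \Delta/\|x-x^\star\|$, we get
\[
\|\nabla f(x)\| \geq \Delta \cdot \left(\frac{\mu}{\Delta}\right)^{1/\kappa} = \mu^{1/\kappa}\,\Delta^{1-1/\kappa} = \mu^{1/\kappa}\,\Delta^{(\kappa-1)/\kappa}.
\]
Raising both sides to the power $\kappa/(\kappa-1)$ gives $\|\nabla f(x)\|^{\kappa/(\kappa-1)} \geq \mu^{1/(\kappa-1)}\,\Delta$, that is,
\[
f(x) - f(x^\star) \;\leq\; \mu^{-1/(\kappa-1)}\,\|\nabla f(x)\|^{\kappa/(\kappa-1)},
\]
which is exactly the \LL inequality with $\beta = \kappa/(\kappa-1)$ and $c = \mu^{-1/(\kappa-1)}$ as claimed. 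Since $x$ was arbitrary in $\mathcal{X}$, this establishes the proposition.

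The only genuine subtlety — and the point I would be careful about — is the "locally" qualifier in the hypothesis and the matching locality in the conclusion: the TNC is only assumed to hold for pairs $(x,y)$ in some neighborhood, so one must check that the relevant pair $(x, x^\star)$ stays in the region where the TNC is valid, or restrict the domain $\mathcal{X}$ accordingly so that both the TNC and the convexity inequality apply simultaneously. Everything else is a two-line manipulation; there is no real obstacle beyond bookkeeping the domain and handling the degenerate case $\nabla f(x) = 0$ (which forces $\Delta \le 0$ by convexity, hence $\Delta = 0$, so the inequality is trivially satisfied). I would state the argument for the global version as written above and then remark that the local version follows verbatim by restricting attention to the neighborhood in question.
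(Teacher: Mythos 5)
Your argument is essentially the paper's own proof: convexity plus Cauchy--Schwarz gives $f(x)-f(x^\star)\leq \norm{\nabla f(x)}\,\norm{x-x^\star}$, the TNC bounds $\norm{x-x^\star}$ by $\mu^{-1/\kappa}\left(f(x)-f(x^\star)\right)^{1/\kappa}$, and combining and rearranging yields the \LL inequality with exponent $\kappa/(\kappa-1)$ and constant $\mu^{-1/(\kappa-1)}$. One small transcription slip: the TNC $\Delta\geq\mu\norm{x-x^\star}^\kappa$ gives $\norm{x-x^\star}\leq(\Delta/\mu)^{1/\kappa}$, not $\geq$ as you wrote in the intermediate step --- the direction you actually need (and implicitly use) when plugging into $\norm{\nabla f(x)}\geq \Delta/\norm{x-x^\star}$ is the correct one, so the conclusion stands.
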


\begin{proof}
Let $x,y \in \bR^d$. Since $f$ is convex we have, noting $x^*=\argmin f$,
\begin{align*}
f(y) &\geq f(x)+\la \nabla f(x), y-x \ra \\
f(x)-f(x^*)&\leq \la \nabla f(x), x-x^* \ra \\
f(x)-f(x^*)&\leq \norm{\nabla f(x)}\norm{x-x^*}.
\end{align*}
The TNC gives $f(x)-f(x^*) \geq \mu \norm{x-x^*}^\kappa$, which means that $\norm{x-x^*} \leq \mu^{-1/\kappa}\left(f(x)-f(x^*)\right)^{1/\kappa}$ and consequently,
\begin{align*}
f(x)-f(x^*) &\leq \norm{\nabla f(x)} \mu^{-1/\kappa}\left(f(x)-f(x^*)\right)^{1/\kappa} \\
\left(f(x)-f(x^*)\right)^{1-1/\kappa}&\leq \mu^{-1/\kappa}\norm{\nabla f(x)} \\
\left(f(x)-f(x^*)\right)&\leq \mu^{-1/(\kappa-1)}\norm{\nabla f(x)}^{\kappa/(\kappa-1)}.
\end{align*}
This concludes the proof.
\end{proof}

We now show that the two classes of uniformly convex functions and \LL functions are distinct by giving examples of functions that verify the \LL inequality and that are not uniformly convex.

\begin{example}
The function $f:(x,y) \in \bR^2 \mapsto (x-y)^2$ verifies the \LL inequality but is not uniformly convex on $\bR^2$.
\end{example}
\begin{proof}
$\nabla f(x,y)=2(x-y, y-x)\t$ and $\norm{\nabla f(x,y)}^2=8(x-y)^2=8f(x,y)$. Consequently, since $f$ is minimal at $0$, $f$ verifies the \LL inequality for $\beta=2$ and $c=1/8$.

Let $a=(0,0)$ and $b=(1,1)$. If $f$ is uniformly convex on $\bR^2$ with parameters $\rho$ and $\mu$ then, for $\alpha=1/2$,
\begin{align*}
f(a/2+b/2) &\leq f(a)/2+f(b)/2-\mu/4(2^{1-\rho})\norm{a-b}^{\rho} \\
0 &\leq - \mu/4(2^{1-\rho}) \sqrt{2}^{\rho}.
\end{align*}
This is a contradiction since $\mu>0$ and $\rho \geq 2$.
\end{proof}

\begin{example}
The function $g:(x,y,z) \in \Delta^3 \mapsto (x-1)^2+2(1-y)+2(1-z)$ is not uniformly convex on the simplex $\Delta^3$ but verifies the \LL inequality.
\end{example}

\begin{proof}
$g$ is constant on the set $\{x=0\}$ (since $y+z=1$). And therefore $g$ is not uniformly convex (take two distinct points in $\{x=0\}$).

We have $\nabla g(x,y,z)=(2x-2,-2,-2)\t$ and $\norm{\nabla g(x,y,z)}^2=4((x-1)^2+2) \geq 8$.
Since $y+z=1-x$ on $\Delta^3$, we have $g(x,y,z)=(x-1)^2+4-2(1-x)=x^2+3$. Consequently $\min g=3$.
Hence $g(x,y,z)-\min g = x^2 \leq 1 \leq \norm{\nabla g(x,y,z)}^2$ and $g$ verifies the \LL inequality on $\Delta^3$.
\end{proof}

We conclude this section by giving additional examples of functions verifying the \LL inequality.

\begin{example}\label{ex:alpha}
If $h:x \in \bR^K \mapsto\|x-x\st\|^{\alpha}$ with $\alpha \geq 1$. Then $h$ verifies the \LL inequality with respect to the parameters $\beta=\alpha/(\alpha-1$) and $c=\sqrt{K}$.
\end{example}

The last example is stated in the concave case because it is an important case of application of our initial problem.
\begin{example}
\label{ex:sum}
Let $f_1, \dots, f_K$ be such that $f_k(x)=-a_k x^2+b_k x$ with $b_k \geq 2a_k \geq 0$. Then $F=\sum_k f_k(x_k)$ satisfies the \LL inequality with $\beta=2$ if at least one $a_k$ is positive. Otherwise, the inequality is satisfied on $\DK$ for any $\beta \geq 1$ (with a different constant for each $\beta$).
\end{example}

\begin{proof}
Indeed, let $x \in \DK$.
If there exists at least one positive $a_k$, then $F$ is quadratic, so if we denote by $x^\star$ its maximum and $H$ its Hessian (it is the diagonal matrix with $-a_k$ on coordinate $k$),  we have
\[
F(x)-F(x^\star)=(x-x^\star)^\top H (x-x^\star)\ \text{ and  }\ \nabla F(x) = 2H(x-x^\star).\] 
Hence $F$ satisfies the \LL conditions with $\beta=2$ and $c=1/(4 \min_{k} a_k)$. If all  $f_k$ are linear, then $F(x^*)-F(x)\leq \max_j b_j - \min_j b_j$ and $\|\nabla F(x)\| = \|b\|$. Given any $\beta \geq 1$, it holds that
\[
F(x^*)-F(x) \leq c_\beta \|\nabla F(x)\|^\beta=c_\beta \|b\|^\beta \quad \text{with } c_\beta = ( \max_j b_j - \min_j b_j)/\|b\|^\beta\ .
\]
\end{proof}

\section{Additional results on the complexity class}\label{SE:complex}

In this appendix we want to give more precisions on the class of functions we are considering. We give more intuition and we prove some results on examples of classes satisfying our assumption. Finally we will state some properties of the functions of this tree.

\subsection{Motivations and examples of sets of functions $\cF$ satisfying inductively the \LL inequality}

First, we recall the definition of the local TNC inequality, around the minimum $x^*$  of a function $f$ with vanishing gradient. More precisely, $f$ satisfies \textbf{locally} the TNC if 
\[\forall x \in \mathcal{X}, \quad f(x)- \min_{x^* \in \mathcal{X}}f(x^*) \geq  \mu \| x-x^*\|^\kappa, \]
where in the above the  $x^*$ on the r.h.s.\ is the minimizer of $f$ the closer to $x$ (in case where $f$ has non-unique minimizer).

Uniform convexity, TNC and \LL inequality are connected since it is well known that if a function $f$ is uniformly convex, it satisfies both the local TNC and the \LL inequality. Those two concepts are actually equivalent for convex mappings.

The most precise complexity parameter is therefore induced by the \LL inequality; however, the mappings $f_k$ considered are increasing on $[0,1]$ hence $\nabla f(x^*)$ might be non-zero and the concept of \LL inequality is not appropriate; this is the reason why we need to define the concept of funtions that satisfy \textit{pair-wisely} the \LL inequality (see Subsection~\ref{ssec:complexity}).

We provide now examples of functions that satisfy inductively the \LL inequality. In particular, a set of functions of cardinality $2$ satisfies inductively the \LL inequality if and only if these functions satisfy it pair-wisely. Another crucial property of our construction is that if $f_{\mathrm{left}}$ and $f_{\mathrm{right}}$ are concave, non-decreasing and zero at $0$, then these three properties also hold for  their parent $x \mapsto \max_{z \leq x} f_{\mathrm{left}}(z) + f_{\mathrm{right}}(x-z)$. As a consequence, if these three properties hold at  the leaves, they will hold at all nodes of the tree. See Proposition \ref{PR:Model} for similar alternative statements.

\begin{proposition}\label{PR:Model}
Assume that $\mathcal{F}=\{f_1,\ldots,f_K\}$ is finite then $\mathcal{F}$ satisfies inductively the \LL inequality with respect to  some $\beta_\mathcal{F} \in [1,+\infty)$. Moreover,
\begin{enumerate}
\item if  $f_k$ are all concave, non-decreasing and $f_k(0)=0$, then all functions created inductively in the tree satisfy the same assumption.
\item If $f_k$ are all $\rho$-uniformly concave, then so are all the functions created and $\mathcal{F}$ satisfies inductively the \LL inequality for $\beta_\mathcal{F} \geq \frac{\rho}{\rho-1}$. 
\item If $f_k$ satisfies the  global $\kappa$-TNC, then so are all the functions created and $\mathcal{F}$ satisfies inductively the \LL inequality for $\beta_\mathcal{F} \geq \frac{\kappa}{\kappa-1}$.
\item If $f_k$ satisfies the  global $\beta$-\LL inequality, then so are all the functions created and $\mathcal{F}$ satisfies inductively the \LL inequality for $\beta_\mathcal{F} \geq \beta$. 
\item If $f_k$ are concave, then $\mathcal{F}$ satisfies inductively the \LL inequality w.r.t.\   $\beta_\mathcal{F} = 1$. 
\item If  $f_k$ are linear then $\mathcal{F}$ satisfies inductively the \LL inequality w.r.t.\  any $\beta_\mathcal{F} \geq 1$.
\item More specifically, if $\mathcal{F}$ is a finite subset of the following class of functions \begin{align*}\mathcal{C}_\alpha
&:=\big\{ x \mapsto  \theta (\gamma-x)^\alpha -\theta \gamma^\alpha \, ; \,  \theta \in \dR_-, \gamma \geq 1 \big\}, \quad \text{ if } \alpha > 1
\end{align*}
 then $\mathcal{F}$ satisfies inductively the \LL inequality with respect to   $\beta = \frac{\alpha}{\alpha-1} $.
\end{enumerate}
\end{proposition}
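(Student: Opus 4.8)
The plan is to first establish the ``closure'' statements (items 1--6), which are structural and follow from stability of the relevant regularity classes under the operation $(f_{\mathrm{left}},f_{\mathrm{right}}) \mapsto \big(x \mapsto \max_{z \le x} f_{\mathrm{left}}(z) + f_{\mathrm{right}}(x-z)\big)$, and then derive the final sharp item~7 from the explicit structure of $\mathcal{C}_\alpha$. The existence claim (that any finite $\mathcal{F}$ satisfies inductively the \LL inequality for \emph{some} $\beta_\mathcal{F}$) reduces, by the inductive definition, to showing that any \emph{two} concave functions satisfy pair-wisely the \LL inequality on a bounded interval, plus the observation that the parent-construction preserves concavity and boundedness of the relevant set; this is essentially item~5 combined with item~1. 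So the real content is items 1--7.

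For item~1, I would check directly that if $f_{\mathrm{left}},f_{\mathrm{right}}$ are concave, non-decreasing and vanish at $0$, so does $h(x) = \max_{z \le x} f_{\mathrm{left}}(z)+f_{\mathrm{right}}(x-z)$: non-negativity of the argument and $h(0)=0$ are immediate; monotonicity follows since enlarging $x$ enlarges the feasible set $\{z \le x\}$; concavity follows because $h$ is the supremum over a parametrized family that is jointly concave after the standard change of variables (infimal convolution of concave functions is concave). For items 2--4, the key fact is that uniform concavity, the global $\kappa$-TNC, and the global $\beta$-\LL inequality are each stable under this parent-construction — for uniform concavity this is again the infimal-convolution argument, and then Proposition~\ref{prop:ucloj} (resp.\ Proposition~\ref{prop:tncloj}) converts the preserved parameter into the \LL exponent $\rho/(\rho-1)$ (resp.\ $\kappa/(\kappa-1)$); since the \emph{inductive} definition only requires that siblings satisfy the inequality \emph{pair-wisely}, and pair-wise verification of the parent's two children is exactly a one-dimensional \LL statement for $z \mapsto f_{\mathrm{left}}(z)+f_{\mathrm{right}}(x-z)$, the stated bounds on $\beta_\mathcal{F}$ follow. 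Items 5 and 6 are the $\kappa \to \infty$ / linear degenerate cases: for concave $f$ the \LL inequality with $\beta=1$ on a bounded domain holds because $f(x^*)-f(x) \le \langle \nabla f(x), x^*-x\rangle \le \mathrm{diam}(\mathcal{X})\,\|\nabla f(x)\|$, and this passes through the tree by item~1; the linear case was already handled in Example~\ref{ex:sum}.

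The crux is item~7, the \emph{sharp} exponent $\beta = \alpha/(\alpha-1)$ for the class $\mathcal{C}_\alpha$. Here I would show the class $\mathcal{C}_\alpha$ is \emph{itself} stable under the parent-construction: given two functions $f_i(x) = \theta_i(\gamma_i - x)^\alpha - \theta_i \gamma_i^\alpha$ with $\theta_i \le 0$, $\gamma_i \ge 1$, one computes $\max_{z \le x}\big(f_{\mathrm{left}}(z)+f_{\mathrm{right}}(x-z)\big)$ explicitly — the first-order condition $\theta_{\mathrm{left}}\alpha(\gamma_{\mathrm{left}}-z)^{\alpha-1} = \theta_{\mathrm{right}}\alpha(\gamma_{\mathrm{right}}-x+z)^{\alpha-1}$ is solved by a $z$ that is affine in $x$, and substituting back yields again a function of the form $\theta'(\gamma'-x)^\alpha - \theta'\gamma'^\alpha$ with $\theta' \le 0$ and (after checking) $\gamma' \ge 1$; one must also deal with the boundary regime where the maximizer is at $z=0$ or $z=x$, which again produces a member of $\mathcal{C}_\alpha$ or a translate handled by the same bound. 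Once closure is established, it suffices to verify the pair-wise \LL inequality for two functions in $\mathcal{C}_\alpha$ directly: the map $\phi(z) = f_{\mathrm{left}}(z) + f_{\mathrm{right}}(x-z)$ on $[0,x]$ has the form $a(\gamma_1-z)^\alpha + b(\gamma_2 - x + z)^\alpha + \text{const}$ with $a,b \le 0$, and a one-variable computation — comparing the suboptimality $\phi^* - \phi(z)$ to a power of $\phi'(z)$, using that $t \mapsto t^\alpha$ has $\alpha$-th-power-type behavior so that the inverse gradient map loses exactly the exponent $\alpha/(\alpha-1)$, as in the proof of Proposition~\ref{prop:ucloj} — gives the \LL inequality with exponent $\alpha/(\alpha-1)$ and a constant depending only on $\alpha$ and bounds on $\gamma_i$. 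I expect the main obstacle to be the bookkeeping in this last closure-and-verification step: handling the interior versus boundary maximizer cases uniformly, and tracking that $\gamma' \ge 1$ and $\theta' \le 0$ are genuinely preserved so that the one-dimensional \LL constant stays uniformly bounded across all nodes of the tree.
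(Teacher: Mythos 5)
Your plan follows the paper's own proof essentially step for step: item~1 via joint concavity of $(z,x)\mapsto f_{\mathrm{left}}(z)+f_{\mathrm{right}}(x-z)$ and partial maximization, items~2--4 via stability of uniform concavity, the TNC and the \LL inequality under the parent construction combined with Propositions~\ref{prop:ucloj} and~\ref{prop:tncloj}, item~5 from the elementary concavity bound $f(x^*)-f(x)\leq \|\nabla f(x)\|\,\|x^*-x\|$, and item~7 by solving the first-order condition explicitly to show that $\mathcal{C}_\alpha$ is (piecewise) closed under the construction, exactly as in the paper. The only cosmetic difference is that you invoke Example~\ref{ex:sum} for the linear case, where the paper simply observes that the max-convolution of two linear functions is again linear; both arguments are correct.
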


\begin{proof}
\begin{enumerate}
\item We just need to prove that the mapping $x \mapsto H(x)= \max_{z \leq x} f_1(z) + f_2(x-z)=\max_{z \leq x} G(z;x)$ satisfies the same assumption as $f_1$ and $f_2$, the main question being concavity.  Given $x_1, x_2, \lambda \in [0,1]$, let us denote by $z_1$ the point where $G(\point ; x_1)$ attains its maximum (and similarly $z_2$ where $G(\point ; x_2)$ attains its maximum). Then the following holds
\begin{align*}
H(\lambda x_1 +(1-\lambda)x_2) & \geq f_1(\lambda z_1 +(1-\lambda)z_2)+ f_1(\lambda x_1 +(1-\lambda)x_2-\lambda z_1 -(1-\lambda)z_2)\\
& \geq \lambda f_1(z_1) + (1-\lambda)f_1(z_2) + \lambda f_2(x_1-z_1) + (1-\lambda)f_2(x_2-z_2)\\
&= \lambda H(x_1) + (1-\lambda)H(x_2)
\end{align*}
so that concavity is ensured. The fact that $H(0)=0$ and $H(\cdot)$ is non-decreasing are trivial. 

\item Let us prove that the mapping $\left(x \mapsto H(x) = \max{0 \leq z \leq x} f_1(z)+f_2(x-z)\right)$ is also $\rho$-uniformly concave.

Let $\alpha \in (0,1)$. Let $(x,y) \in \bR^2$. Let us denote by $z_x$ the point in $(0,x)$ such that $H(x)=f_1(z_x)+f_2(x-z_x)$ and by $z_y$ the point in $(0,y)$ such that $H(y)=f_1(z_y)+f_2(y-z_y)$. We have
\begin{align*}
\alpha H(x) + (1-\alpha)H(y) &= \alpha f_1(z_x) + \alpha f_2(x-z_x) + (1-\alpha) f_1(z_y) + (1-\alpha) f_2(y-z_y) \\
&\leq f_1(\alpha z_x + (1-\alpha) z_y) - \dfrac{\mu}{2}\alpha(1-\alpha)\left(\alpha^{\rho-1}+(1-\alpha)^{\rho-1}\right)\norm{z_x-z_y}^{\rho} \\
&+ f_2(\alpha (x-z_x) + (1-\alpha) (y-z_y)) \\ &- \dfrac{\mu}{2}\alpha(1-\alpha)\left(\alpha^{\rho-1}+(1-\alpha)^{\rho-1}\right)\norm{x-z_x-y+z_y}^{\rho} \\
&\leq H(\alpha x + (1-\alpha)y) - \dfrac{\mu}{2}\alpha(1-\alpha)(\norm{x-y}/2)^{\rho}
\end{align*}
where we used the fact that $f_1$ and $f_2$ are $\rho$-uniformly concave, and the definition of $H(\alpha x + (1-\alpha)y)$, and that $a^\rho+b^{\rho} \geq ((a+b)/2)^{\rho}$, for $a,b \geq 0$.

This proves that $H$ is $(\rho,\mu/2^{\rho})$-uniformly convex. Finally Proposition~\ref{prop:ucloj} shows that $\cF$ satisfies inductively the \LL inequality for $\beta_F \geq \rho/(\rho-1)$.

\item Let us use the same notations as in the previous proof. We want to show that $H$ satisfies the global TNC equation.

Let us suppose that $x \geq y$. We will show first that $z_x \geq z_y$. Let us consider the functions $G_x:z \mapsto f_1(z)+f_2(x-z)$ and $G_y:z \mapsto f_1(z)+f_2(y-z)$.

\begin{itemize}
\item If $z_y=0$, $z_x\geq z_y$.
\item If $z_y=y$, then $\nabla G_y(y)=\nabla f_1(y)-\nabla f_2(0)\geq 0$. Then $\nabla G_x(y)=\nabla f_1(y)-\nabla f_2(x-y)\geq \nabla f_1(y)-\nabla f_2(0) \geq 0$ since $-\nabla f_2$ is non-decreasing by concavity of $f_2$. Consequently the maximum of $G_x$ is reached for $z \geq y$ and $z_x \geq z_y$.
\item If $z_y \in (0,y)$. Then $\nabla G_x(z_y)=\nabla f_1(z_y)-\nabla f_2(x-z_y) \geq \nabla f_1(z_y)-\nabla f_2(y-z_y)$. Consequently $\nabla G_x(z_y) \geq \nabla G_y(z_y)$ and $\nabla G_y(z_y)=0$. Therefore $z_x \geq z_y$. 
\end{itemize}
We use the exact same proof to show that $x-z_x \geq y-z_y$ (by inverting the roles of $f_1$ and $f_2$).

Using the global $\kappa$-TNC for $f_1$ we get, since it is non-decreasing,
\[
f_1(z_x)-f_1(z_y) \geq \mu \norm{z_x-z_y}^\kappa
\]
and similarly for $f_2$:
\[
f_2(x-z_x)-f_2(y-z_y) \geq \mu \norm{x-z_x-y+z_y}^\kappa.
\]Summing these inequalities gives
\[
H(x)-H(y) \geq \mu/2^\kappa \norm{x-y}^\kappa. 
\]
This shows that $H$ satisfies the global TNC equation for parameters $\kappa$ and $\mu/2^\kappa$.

Proposition~\ref{prop:tncloj} finally shows that $\cF$ inductively satisfies the \LL equation for parameter $\beta_F \geq \kappa/(\kappa-1)$.

\item We still use the same notations as before. We want to show that $H$ satisfies the global \LL equation.

\begin{align*}
\abs{H(x)-H(y)} &= \abs{f_1(z_x)+f_2(x-z_x)-f_1(z_y)-f_2(y-z_y)}\\
&\leq \abs{f_1(z_x)-f_1(z_y)}+\abs{f_2(x-z_x)-f_2(y-z_y)} \\
&\leq \mu \norm{\nabla f_1(z_x)-\nabla f_1(z_y)}^\beta + \mu \norm{\nabla f_2(x-z_x)-\nabla f_2(y-z_y)}^\beta
\end{align*}
In the case where $z_x \notin \{0,x\}$, we have $\nabla H(x)=\nabla f_1(z_x)=\nabla f_2(x-z_x)$. If $z_x=0$, $\nabla H(x)=\nabla f_2(x) > \nabla f_1(0)$ and if $z_x=x$, $\nabla H(x)=\nabla f_1(x) > \nabla f_2(0)$.
Let us suppose (without loss of generality) that $x \geq y$.
\begin{itemize}
\item $z_x=0$, then $z_y=0$ (cf previous item) and $\nabla f_2(x-z_x)=\nabla H(x)$ and $\nabla f_2(y-z_y)=\nabla H(y)$ and consequently $\abs{H(x)-H(y)} \leq \mu \norm{\nabla H(x)-\nabla H(y)}^\beta$.
\item $z_x > 0$ so that $\nabla H(x)=\nabla f_1(z_x)$ and $\nabla H(y) \geq \nabla f_1(z_y)$, meaning that $\norm{\nabla f_1(z_x)-\nabla f_1(z_y)}\leq \norm{\nabla H(y)-\nabla H(x)}$, and a similar analysis shows that $\norm{\nabla f_1(x-z_x)-\nabla f_1(y-z_y)}\leq \norm{\nabla H(y)-\nabla H(x)}$.
\end{itemize}
This means finally that $H$ satisfies the \LL equation with parameters $\beta$ and $2\mu$.

\item This point is actually a direct consequence of the following Lemma \ref{lemma:concav}.
\item If $f_1$ and $f_2$ are linear, then $x\mapsto \max_z \leq x f_1(z)+f_2(x-z)$ is either equal to $f_1$ or to $f_2 $ (depending on which one is the biggest). Hence it is linear.
\item Assume that $f_i =\theta_i(\gamma_i -x)^\alpha - \theta_i \gamma_i^\alpha$ for some parameter $\gamma_i>1$ and $\theta_i <0$. Then easy computations show that $H$ is equal to either $f_1$ or $f_2$ on a small interval near 0 (depending on the size of $\nabla f_i(0)$) and then $H(x) = \theta_0(\gamma_0 -x)^\alpha -c_0$ for some parameters $\theta_0<0$ and $\gamma_0>1$. As a consequence, $H$ is defined piecewisely by functions in $\mathcal{C}_\alpha$, a property that will propagate in the binary tree used in the definition of inductive satisfiability of \LL inequality.

The fact that those functions satisfies the \LL inequality with respect to $\beta = \frac{\alpha}{\alpha-1}$ has already been proved in Example \ref{ex:alpha}.
\end{enumerate}
\end{proof}

\subsection{Some properties of the functions of the tree}
We present now some properties of the functions defined in the labeled tree constructed in the previous section.
We begin by a technical and useful lemma.

\begin{lemma}
\label{lemma:orderz}
Let $f$ and $g$ be two differentiable concave functions on $[0,1]$. For $x \in [0,1]$ define $\phi_x:z \in [0, x] \mapsto f(z)+g(x-z)$. And $z_x \doteq \argmax_{z\in[0,x]} \phi_x(z)$. We have the following results:
\begin{itemize}
\item $\phi_x$ is concave;
\item $\forall \ 0\leq x \leq y \leq 1, \ z_x \leq z_y \ \et\  x-z_x \leq y-z_y.$ In particular the function $x\mapsto z_x$ is $1$-Lipschitz continuous.
\end{itemize}
\begin{proof}
The fact that $\phi_x$ is concave is immediate since $f$ and $g$ are concave functions.

If $0\leq x\leq y\leq 1$, we have $g'(y-z_x) \leq g'(x-z_x)$ since $y-z_x \geq x-z_x$ and $g'$ is non-increasing (because $g$ is concave).
Consequently, $\phi_y'(z_x)=f'(z_x)-g'(y-z_x) \geq \phi_x'(z_x)$.
If $z_x=0$, $z_y \geq z_x$ is immediate. Otherwise, $z_x >0$ and $\phi'(z_x) \geq 0$. This shows that $\phi_y'(z_x) \geq 0$ and consequently, that the maximum $z_y$ of the concave function $\phi_y$ is reached after $z_x$. And $z_y \geq z_x$.

The last inequality is obtained in a symmetrical manner by considering the function $\psi_x:z\in[0,x] \mapsto f(x-z)+g(x)$ whose maximum is reached at $z=x-z_x$. This gives $x-z_x \leq y-z_y$.
\end{proof}
\end{lemma}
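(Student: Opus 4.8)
The plan is to prove the two bullet points in turn, the first being immediate and the second being the crux. For the first point, concavity of $\phi_x$ follows directly: for fixed $x$, the map $z \mapsto f(z)$ is concave and $z \mapsto g(x-z)$ is concave (composition of the concave $g$ with the affine map $z \mapsto x-z$), and a sum of concave functions is concave; note the domain $[0,x]$ is convex, so $\phi_x$ is a genuine concave function there.

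For the monotonicity of $x \mapsto z_x$, I would argue via first-order conditions on the concave function $\phi_y$. Fix $0 \le x \le y \le 1$ and let $z_x = \argmax_{z \in [0,x]} \phi_x(z)$. Since $g$ is concave, $g'$ is non-increasing, and because $y - z_x \ge x - z_x$ we get $g'(y - z_x) \le g'(x - z_x)$; hence
\[
\phi_y'(z_x) = f'(z_x) - g'(y - z_x) \ge f'(z_x) - g'(x - z_x) = \phi_x'(z_x).
\]
Now split on the location of $z_x$. If $z_x = 0$ then trivially $z_y \ge 0 = z_x$. If $z_x > 0$, then since $z_x$ maximizes the concave $\phi_x$ over $[0,x]$ and lies in the interior relative to the left endpoint, the optimality condition gives $\phi_x'(z_x) \ge 0$ (with equality unless $z_x = x$, which does not affect the sign), so $\phi_y'(z_x) \ge 0$. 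Because $\phi_y$ is concave, $\phi_y'$ is non-increasing, so the set where $\phi_y' \ge 0$ is an initial segment of $[0,y]$ and its right endpoint is $z_y$; thus $z_y \ge z_x$, which is the first inequality. For the second inequality, $x - z_x \le y - z_y$, I would apply the symmetric argument: introduce $\psi_x : z \in [0,x] \mapsto g(z) + f(x - z)$, observe that its maximizer is exactly $x - z_x$ (by the change of variable $z \mapsto x - z$ in $\phi_x$), and run the identical first-order reasoning with the roles of $f$ and $g$ swapped to conclude that $x - z_x \le y - z_y$. Finally, adding $z_y \ge z_x$ and $(y - z_y) \ge (x - z_x)$ gives $|z_y - z_x| \le y - x$ (both differences are nonnegative and sum to at most $y-x$), i.e.\ $x \mapsto z_x$ is $1$-Lipschitz.

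The main obstacle, such as it is, is being careful at the boundary of the feasible interval: $z_x$ need not be an interior critical point of $\phi_x$, so one cannot simply write $\phi_x'(z_x) = 0$. The clean way around this is to phrase everything in terms of one-sided optimality for concave functions on an interval — $\phi_x'(z_x) \ge 0$ whenever $z_x < x$ and $\phi_x'(z_x) \le 0$ whenever $z_x > 0$ — together with the fact that for a concave function the superlevel set $\{\phi_y' \ge 0\}$ is an interval containing $0$ whose right endpoint is the maximizer. This handles the corner cases uniformly and keeps the argument short.
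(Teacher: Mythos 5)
Your proof is correct and follows essentially the same route as the paper's: comparing $\phi_y'(z_x)$ with $\phi_x'(z_x)$ via monotonicity of $g'$, using one-sided optimality of the concave maximizer to get $z_y \geq z_x$, and a symmetric argument (swapping the roles of $f$ and $g$) for $x - z_x \leq y - z_y$. Your explicit handling of the boundary cases and the final deduction of $1$-Lipschitz continuity are slightly more careful than the paper's write-up, but the argument is the same.
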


We now proove two simples lemmas.
\begin{lemma}
\label{lemma:lip}
If $f$ and $g$ are two concave $L$-Lipschitz continous and differentiable functions, then $H:x \mapsto \max_{z \in [0,x]} f(z)+g(x-z)$ is $L$-Lipschitz continuous.
\end{lemma}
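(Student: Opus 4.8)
The plan is to show that the maximization defining $H$ does not worsen the Lipschitz constant. First I would fix $0 \le x \le y \le 1$ and let $z_x$ (resp. $z_y$) be the maximizers of $z \mapsto f(z) + g(x-z)$ on $[0,x]$ (resp. of $z \mapsto f(z) + g(y-z)$ on $[0,y]$), which exist by continuity and compactness. The goal is to sandwich $H(y) - H(x)$ between $\pm L(y-x)$.

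For the upper bound $H(y) - H(x) \le L(y-x)$, the key observation is that the point $z_x$ is feasible for the problem defining $H(y)$ as well, since $z_x \le x \le y$. Hence
\[
H(y) \ge f(z_x) + g(y - z_x),
\]
while $H(x) = f(z_x) + g(x - z_x)$ exactly. Subtracting gives $H(y) - H(x) \ge g(y-z_x) - g(x-z_x) \ge -L(y-x)$ by the Lipschitz property of $g$; but I actually want the reverse direction, so instead I use $z_y$: since $x - z_y$ may be negative, this naive substitution fails, which is precisely where Lemma~\ref{lemma:orderz} enters. By that lemma, $x - z_x \le y - z_y$ and $z_x \le z_y$. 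So I would compare $H(y)$ and $H(x)$ by moving from the optimal pair $(z_y, y-z_y)$ of the $y$-problem to the feasible pair $(z_y', x - z_y')$ of the $x$-problem with $z_y' \doteq z_y - (y-x) \wedge \text{something} $ — more cleanly, set $z_y' = \min(z_y, x)$. Then $(z_y', x-z_y')$ is feasible for the $x$-problem, so $H(x) \ge f(z_y') + g(x - z_y')$, and one bounds $H(y) - H(x) \le \big(f(z_y) - f(z_y')\big) + \big(g(y-z_y) - g(x-z_y')\big)$; since $|z_y - z_y'| \le y-x$ and $|(y-z_y) - (x - z_y')| \le y-x$ (using $z_y' \le z_y$ and a short case split on whether $z_y \le x$), the $L$-Lipschitz property of $f$ and $g$ yields $H(y) - H(x) \le 2L(y-x)$ — and a tighter bookkeeping, using that the two displacements $|z_y - z_y'|$ and $|(y-z_y)-(x-z_y')|$ cannot both be large simultaneously (their sum is exactly $y-x$ when $z_y \le x$, i.e. $z_y' = z_y$, in which case only the second term moves), gives the sharp constant $L$.

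The lower bound $H(y) - H(x) \ge -L(y-x)$ is the easy direction: by the first substitution above, $H(y) \ge f(z_x) + g(y-z_x) = H(x) + \big(g(y-z_x) - g(x-z_x)\big) \ge H(x) - L|y-x|$. Combining the two bounds gives $|H(y) - H(x)| \le L|y-x|$, which is the claim. The main obstacle is the upper bound: one must transport the $y$-optimal split to a feasible $x$-split without spending more than $y-x$ of total displacement, and this requires the monotonicity $z_x \le z_y$, $x - z_x \le y - z_y$ from Lemma~\ref{lemma:orderz} together with a careful case distinction (whether the constraint $z \le x$ is active at the truncated point) to avoid losing a factor of $2$. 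Everything else is routine application of the Lipschitz hypotheses on $f$ and $g$.
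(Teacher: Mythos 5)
Your proof is correct, and it takes a somewhat different route from the paper's. The paper bounds $\abs{H(x)-H(y)}$ in one shot: writing $H(x)=f(z_x)+g(x-z_x)$ and $H(y)=f(z_y)+g(y-z_y)$, it applies the triangle inequality and the Lipschitz hypothesis to get $\abs{H(x)-H(y)}\leq L\abs{z_x-z_y}+L\abs{(x-z_x)-(y-z_y)}$, and then uses the two monotonicity conclusions of Lemma~\ref{lemma:orderz} ($z_x\leq z_y$ and $x-z_x\leq y-z_y$) to drop the absolute values and observe that the two increments sum to exactly $y-x$. You instead prove the two one-sided inequalities separately via feasibility substitutions: the lower bound by reusing $z_x$ in the $y$-problem, and the upper bound by truncating $z_y'=\min(z_y,x)$ and checking in each case that $\abs{z_y-z_y'}+\abs{(y-z_y)-(x-z_y')}=y-x$, which indeed yields the sharp constant $L$ rather than $2L$. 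One remark: your final argument does not actually use Lemma~\ref{lemma:orderz} at all, even though you invoke it --- the truncation and case split are self-contained. This makes your proof slightly more general than the paper's (it needs neither concavity nor differentiability of $f$ and $g$, only continuity to guarantee the maximizers exist and the Lipschitz bounds), whereas the paper's version is shorter but leans on the monotonicity of $x\mapsto z_x$, which is where concavity is consumed.
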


\begin{proof}
With the notations of the previous lemma, we have $H(x)=\phi_x(z_x)$ for all $x \in [0,1]$.

Let $x,y \in [0,1]$. Without loss of generality we can suppose that $x \leq y$. We have
\begin{align*}
\abs{H(x)-H(y)}&=\abs{f(z_x)+g(x-z_x)-f(z_y)-g(y-z_y)}\\
&\leq L\abs{z_x-z_y}+L\abs{x-z_x-(y-z_y}\\
&\leq L(z_y-z_x)+L(y-z_y-x+z_x)\\
&\leq L|y-x|.
\end{align*}
We have used the conclusion of Lemma~\ref{lemma:orderz} in the third line.
\end{proof}

\begin{lemma}
\label{lemma:smooth}
If $f$ and $g$ are two concave $L'$-smooth and differentiable functions, then $H:x \mapsto \max_{z \in [0,x]} f(z)+g(x-z)$ is $L'$-smooth.
\end{lemma}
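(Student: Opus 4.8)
\textbf{Proof plan for Lemma~\ref{lemma:smooth}.} The plan is to mimic the structure of the proof of Lemma~\ref{lemma:lip}, replacing the Lipschitz estimate on $f$ and $g$ by the smoothness estimate on their gradients, and the control of $|z_x - z_y|$ by the $1$-Lipschitz continuity of the map $x \mapsto z_x$ established in Lemma~\ref{lemma:orderz}. Recall $L'$-smoothness means $\nabla f$ and $\nabla g$ are $L'$-Lipschitz. First I would use the envelope theorem (as in Lemma~\ref{Envelop}) to express $\nabla H(x)$: if $z_x \in (0,x)$ then $\nabla H(x) = f'(z_x) = g'(x - z_x)$, and if $z_x \in \{0,x\}$ then $\nabla H(x)$ equals $g'(x)$ or $f'(x)$ respectively. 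In every case $\nabla H(x)$ is the value of $f'$ or $g'$ at some point, and the relevant argument ($z_x$ or $x - z_x$) is $1$-Lipschitz in $x$ by Lemma~\ref{lemma:orderz}.

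Next, fix $x \le y$ in $[0,1]$ and bound $|\nabla H(y) - \nabla H(x)|$ by case analysis on which ``branch'' of the envelope formula is active at $x$ and at $y$. The clean case is $z_x \in (0,x)$ and $z_y \in (0,y)$: then $\nabla H(x) = f'(z_x)$ and $\nabla H(y) = f'(z_y)$, so $|\nabla H(y) - \nabla H(x)| = |f'(z_y) - f'(z_x)| \le L' |z_y - z_x| \le L' |y - x|$ using $L'$-smoothness of $f$ and the $1$-Lipschitz bound on $x \mapsto z_x$. (Alternatively one could write both as values of $g'$ to get the same estimate.) When $z_x$ or $z_y$ hits an endpoint, one argues similarly using whichever of $f'$ or $g'$ represents $\nabla H$, together with the monotonicity facts $z_x \le z_y$ and $x - z_x \le y - z_y$ from Lemma~\ref{lemma:orderz}; the key point is that each representing function is evaluated at a $1$-Lipschitz function of the variable, so the $L'$-smoothness transfers directly with the same constant $L'$.

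The main obstacle — really the only one requiring care — is handling the boundary cases cleanly, in particular a ``mixed'' configuration where $z_x$ is interior but $z_y = y$ (or vice versa), so that $\nabla H(x) = f'(z_x)$ while $\nabla H(y) = f'(y)$. Here one still has $z_x \le y$ and both are arguments of $f'$, and $y - z_x \le y - x + (x - z_x)$ combined with $z_y = y$ means we can compare through intermediate points; but to keep the constant exactly $L'$ rather than $2L'$ one should exploit that at a boundary $z_y = y$ we have $\nabla H(y) = f'(y) \ge g'(0)$, and the interior condition at $x$ gives $f'(z_x) = g'(x - z_x)$, so the increment $\nabla H(x) - \nabla H(y)$ can be squeezed between $f'$-increments and $g'$-increments over intervals of total length at most $|y - x|$. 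I would dispatch this by noting that $\nabla H$ is itself monotone (since $H$ is concave by Proposition~\ref{PR:Model}), so it suffices to bound $\nabla H(x) - \nabla H(y) \ge 0$ from above, and in each branch this quantity is dominated by $L'$ times a $1$-Lipschitz increment, giving $|\nabla H(x) - \nabla H(y)| \le L'|x - y|$ in all cases. Hence $H$ is $L'$-smooth.
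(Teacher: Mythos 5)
Your proposal is correct and follows essentially the same route as the paper: express $\nabla H$ via the envelope theorem and combine the $L'$-Lipschitz continuity of $f'$ (or $g'$) with the $1$-Lipschitz continuity of $x \mapsto z_x$ from Lemma~\ref{lemma:orderz}. The only remark worth making is that the ``mixed'' configuration you single out as delicate ($z_x$ interior but $z_y = y$) cannot actually occur, since Lemma~\ref{lemma:orderz} gives $x - z_x \leq y - z_y = 0$, contradicting $z_x < x$; so the remaining boundary cases all reduce directly to the clean estimate with constant $L'$.
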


\begin{proof}
Let $x,y \in [0,1]$. Without loss of generality we can suppose that $x \leq y$. We treat the case where $\phi_x \in (0,x)$ and $\phi_y \in (0,y)$. The other (extremal) cases can be treated similarly. The envelop theorem gives that $\nabla H(x)=\nabla f(z_x)$ and $\nabla H(y)=\nabla f(z_y)$. Therefore $\abs{\nabla H(x)-\nabla H(y)}=\abs{\nabla f(z_x)-\nabla f(z_y)}\leq L'\abs{z_x-z_y}\leq L'\abs{x-y}$ with Lemma~\ref{lemma:orderz}.
\end{proof}

Proposition~\ref{PR:Model} and Lemmas~\ref{lemma:lip} and~\ref{lemma:smooth} show directly the following proposition:
\begin{proposition}
\label{prop:lipsmoothtree}
If the functions $f_1, \dots, f_K$ are concave differentiable $L$-Lipschitz continuous and $L'$-smooth then all functions created in the tree are also concave differentiable $L$-Lipschitz continuous and $L'$-smooth.
\end{proposition}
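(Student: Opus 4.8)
The statement to prove is Proposition~\ref{prop:lipsmoothtree}: if $f_1,\dots,f_K$ are concave, differentiable, $L$-Lipschitz and $L'$-smooth, then every function created in the binary tree is also concave, differentiable, $L$-Lipschitz and $L'$-smooth.

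\textbf{Plan of the proof.} The proof is a straightforward induction on the tree, propagating from the leaves upward. Since every non-leaf function in the tree is, by construction, of the form $H(x) = \max_{z \in [0,x]} f_{\mathrm{left}}(z) + f_{\mathrm{right}}(x-z)$ where $f_{\mathrm{left}}$ and $f_{\mathrm{right}}$ are the two children, it suffices to show that the operation $(f,g) \mapsto H$ preserves the four properties, and then conclude by induction on the depth of the tree. The base case is the hypothesis on the leaves $f_1,\dots,f_K$; the inductive step is exactly the stability of the ``max-convolution'' operation.

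\textbf{Key steps.} First I would invoke Proposition~\ref{PR:Model}, item~1, which already establishes that $H$ is concave, non-decreasing and zero at $0$ whenever $f_{\mathrm{left}},f_{\mathrm{right}}$ are; in particular concavity is inherited. Differentiability of $H$ follows from the envelope theorem together with the fact that the maximizer $z_x$ is unique (by strict considerations) or, more robustly, from Lemma~\ref{lemma:orderz}, which shows $x \mapsto z_x$ is $1$-Lipschitz, so that $\nabla H(x) = \nabla f_{\mathrm{left}}(z_x)$ (or $\nabla f_{\mathrm{right}}(x)$ or $\nabla f_{\mathrm{left}}(x)$ in the boundary cases) is continuous. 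Next, $L$-Lipschitz continuity of $H$ is precisely Lemma~\ref{lemma:lip}, and $L'$-smoothness of $H$ is precisely Lemma~\ref{lemma:smooth}. Having all four properties stable under one level of the construction, I would then run the induction: the leaves satisfy the four properties by hypothesis, and if all functions at depth $i+1$ satisfy them, then each function $\F{i}{j}$-associated node function $\H{i}{j}$ at depth $i$ is the max-convolution of two depth-$(i+1)$ functions and hence also satisfies them, until we reach the root. This yields the claim for all nodes of the tree.

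\textbf{Main obstacle.} There is essentially no deep obstacle: all the real work (the $1$-Lipschitz control of $z_x$ via Lemma~\ref{lemma:orderz}, the Lipschitz bound via Lemma~\ref{lemma:lip}, the smoothness bound via Lemma~\ref{lemma:smooth}, and concavity via Proposition~\ref{PR:Model}) has already been carried out in the preceding lemmas. The only point requiring a little care is differentiability of $H$ at points where the maximizer $z_x$ hits the boundary of $[0,x]$, i.e.\ $z_x \in \{0,x\}$: there one must check that the one-sided derivatives match up, which follows because the envelope formula gives $\nabla H$ in terms of $\nabla f_{\mathrm{left}}$ or $\nabla f_{\mathrm{right}}$ evaluated at a continuous (indeed $1$-Lipschitz) argument, and the transition between regimes is continuous by concavity. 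Hence the proposition follows immediately by assembling Proposition~\ref{PR:Model} and Lemmas~\ref{lemma:lip} and~\ref{lemma:smooth}, exactly as the excerpt states.
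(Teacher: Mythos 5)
Your proof is correct and follows exactly the paper's own route: the paper derives Proposition~\ref{prop:lipsmoothtree} directly from Proposition~\ref{PR:Model} (concavity) together with Lemmas~\ref{lemma:lip} and~\ref{lemma:smooth}, propagated up the tree by induction just as you describe. Your added remark about differentiability at the boundary cases $z_x\in\{0,x\}$ is a welcome bit of extra care that the paper leaves implicit.
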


\subsection{Proof of Proposition~\ref{prop:analytic}}
We begin by proving the following lemma:
\begin{lemma}
\label{lemma:analytic}
If $f$ and $g$ are strictly concave real analytic functions then $H:x \mapsto \max_{0\leq z\leq x} f(z)+g(x-z)$ is also a strictly concave real analytic function.
\end{lemma}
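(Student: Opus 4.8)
The goal is to show that real analyticity and strict concavity are preserved by the operation $H(x) = \max_{0\le z\le x} f(z)+g(x-z)$. The strategy is to identify the maximizer $z_x$ as an implicit function and then invoke the analytic implicit function theorem. First I would record that, by Lemma~\ref{lemma:orderz}, the map $\phi_x : z \mapsto f(z)+g(x-z)$ is concave, so its maximizer $z_x$ on $[0,x]$ is unique; moreover $x \mapsto z_x$ is $1$-Lipschitz, hence continuous. I would then split $[0,1]$ into the (relatively open) region where $z_x$ is interior, i.e.\ $z_x \in (0,x)$, and the two boundary regions $z_x = 0$ and $z_x = x$. On the boundary regions $H$ coincides locally with $g$ or with $f$ respectively (up to an affine reparametrization $x\mapsto x$), so analyticity and strict concavity are inherited directly; the only real work is the interior region.

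**The interior region.** On the open set where $z_x\in(0,x)$, the first-order condition reads $\Phi(x,z):=f'(z)-g'(x-z)=0$. Since $f$ and $g$ are real analytic, so is $\Phi$, and $\partial_z \Phi(x,z) = f''(z)+g''(x-z) < 0$ by strict concavity, so it is nonzero. The analytic implicit function theorem then yields that $z_x$ is a real analytic function of $x$ on this region. Consequently $H(x) = f(z_x) + g(x-z_x)$ is a composition of analytic functions, hence analytic. For strict concavity one differentiates: by the envelope theorem $H'(x) = f'(z_x) = g'(x-z_x)$, and differentiating again (or using the standard formula for the value function of a parametric concave maximization) gives
\[
H''(x) = \frac{f''(z_x)\,g''(x-z_x)}{f''(z_x)+g''(x-z_x)} < 0,
\]
since numerator is a product of two negative numbers and denominator is negative. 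This is strictly negative, giving strict concavity on the interior region, and together with continuity of $H'$ across the (at most two) junction points and the already-established concavity of $H$ from Lemma~\ref{lemma:orderz} / Proposition~\ref{PR:Model}, strict concavity holds on all of $[0,1]$.

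**Main obstacle.** The delicate point is not any single computation but the \emph{gluing} along the boundary between the interior region and the two extremal regions. One must check that $H$ is analytic \emph{at} the transition points $x_0$ where $z_{x_0} \in \{0, x_0\}$ but $z_x$ becomes interior on one side. Here the clean argument is that near such a point $H$ agrees on one side with an analytic function ($f$ or $g$ composed with an affine map) and on the other with the analytic function produced by the implicit function theorem, and these two analytic germs have matching value and matching derivative (by the envelope theorem, $\nabla H$ is continuous — indeed this is exactly Lemma~\ref{lemma:smooth} with $L'$ the common smoothness bound). Two analytic functions on an interval agreeing to first order at an endpoint of the subinterval where one of them is defined need not coincide in general, so the cleanest route is to observe that the set of $x$ with $z_x\in\{0,x\}$ is itself a closed interval (a consequence of the monotonicity in Lemma~\ref{lemma:orderz}: $x\mapsto z_x$ and $x\mapsto x-z_x$ are both non-decreasing), so there are at most two transition points and on a neighbourhood of each, $H$ is piecewise analytic with $C^1$ matching; invoking that $f'$ (resp.\ $g'$) is strictly monotone near the relevant boundary value then forces $z_x$ to stay analytic through the transition, because the defining equation $f'(z_x) = g'(x-z_x)$ extends analytically with a non-vanishing $z$-derivative. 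Once Lemma~\ref{lemma:analytic} is in place, Proposition~\ref{prop:analytic} follows immediately by induction over the binary tree: each parent node is obtained from its two analytic strictly concave children by exactly this operation, so all tree functions are analytic and strictly concave, hence (by the result of~\citet{loja}, in the form stated for analytic functions) every pair of siblings satisfies the \LL inequality pair-wisely for some exponent $\beta>1$, and taking the largest exponent over the finitely many sibling pairs gives the global $\beta_{\mathcal F}>1$.
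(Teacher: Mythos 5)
Your main argument is the same as the paper's: apply the analytic implicit function theorem to the first-order condition $f'(z)-g'(x-z)=0$, using strict concavity to guarantee that the $z$-derivative $f''(z)+g''(x-z)$ does not vanish, and deduce that $z_x$, hence $H(x)=f(z_x)+g(x-z_x)$, is analytic. You go further than the paper by explicitly treating the boundary regimes $z_x\in\{0,x\}$ and the transition points between regimes; the paper's proof only argues the interior case and is silent on the gluing.

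However, your resolution of the gluing is where the argument breaks, and in fact no resolution is possible, because the statement fails at transition points. Take $f(z)=z-z^2$ and $g(y)=2y-y^2$, both strictly concave and real analytic. The first-order condition gives $z_x=0$ for $x\le 1/2$ and $z_x=(2x-1)/4$ for $x\ge 1/2$, so $H(x)=2x-x^2$ (hence $H''=-2$) on $[0,1/2]$ while $H''(x)=f''g''/(f''+g'')=-1$ on $(1/2,1]$: the second derivative jumps at $x=1/2$, and $H$ is $C^1$ but not $C^2$ there, let alone analytic. In general, on the left of such a transition one has $H''=g''$ and on the right $H''=f''g''/(f''+g'')$, which differ by $-(g'')^2/(f''+g'')$, positive whenever $g''<0$. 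The analytic continuation of the interior branch through the transition, which the implicit function theorem does provide, is simply not equal to the value function on the other side, because the unconstrained critical point exits $[0,x]$ there; so your claim that the non-vanishing $z$-derivative ``forces $z_x$ to stay analytic through the transition'' is incorrect. This is a genuine gap, but it is one you share with (and, unlike the paper's own proof, at least made visible) the published argument, which never addresses the boundary cases. What does survive, and what Proposition~\ref{prop:analytic} actually requires, is that $H$ is strictly concave and piecewise analytic with finitely many pieces (by the monotonicity of $x\mapsto z_x$ and $x\mapsto x-z_x$ from Lemma~\ref{lemma:orderz}, once $z_x$ is interior it stays interior), so that the \LL inequality still holds near the maximizer with the worst exponent over the pieces.
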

\begin{proof}
The fact that $H$ is strictly concave comes from Proposition~\ref{PR:Model}.
Since $f$ and $g$ are real analytic functions we can write
\[
f(x)=\sum_{n\geq 0}a_n x^n \quad \et \quad g(x)=\sum_{n \geq 0}b_n x^n.
\]
Let us consider the function $\phi_x:z\mapsto f(z)+g(x-z)$ for $z\in [0,x]$.
Now, for all $0\leq z\leq x$, we have
\begin{align*}
\phi_x(z)&=f(z)+g(x-z)\\
&=\sum_{n\geq 0}a_n z^n+\sum_{n\geq0}b_n (x-z)^n \\
&=\sum_{n\geq 0}a_n z^n+\sum_{n\geq0}b_n  \sum_{k=0}^n{n\choose k}x^{n-k}(-1)^k z^k\\
&=\sum_{k\geq 0}a_k z^k+\sum_{k\geq0}\left(\sum_{n\geq k}b_n (-1)^k x^{n-k}\right) z^k\\
&=\sum_{k \geq 0} c_k(x)z^k, 
\end{align*}
with $c_k(x)=a_k+\sum_{n\geq k}b_n (-1)^k x^{n-k}$.

Since $f$ and $g$ are concave, $\phi_x$ is also concave. Let $z_x \doteq \argmax_{z \in [0,x]} \phi_x(z)$. We have $H(x)=\phi_x(z_x)$
If $z_x \in (0,x)$ then $\nabla \phi_x(z_x)=0$ because $\phi_x$ is concave. Consequently $\sum_{k\geq 0} c_{k+1}(x)(k+1)z_x^k =0$.

Let us consider the function $\Psi:(x,z) \mapsto \sum_{k\geq 0} c_{k+1}(x)(k+1)z_x^k = \nabla \phi_x(z)$. Provided that $\nabla_z \Psi(x,z_x)$ is invertible then $z_x$ is unique and is an analytic function of $x$ thanks to the analytic implicit function theorem~\citep{berger}.
Since $f$ and $g$ are strictly concave the invertibility condition is satisfied since $\nabla_z \Psi(x,z_x)=f''(z)+g''(x-z)$, and the result is proved.

\end{proof}

\begin{proof}{of Proposition~\ref{prop:analytic}.}
Let us show that $\cF$ satisfies inductively the \LL inequality. Let $f$ and $g$ be two siblings of the tree defined in Appendix~\ref{SE:complex}. Inductively applying Lemma~\ref{lemma:analytic} shows that $(x\mapsto \max_{0\leq z\leq x} f(z)+g(x-z)$ is a strictly concave real analytic function. Since a real analytic function verifies the \LL inequality~\citep{loja}, the result is proved. We set $\beta$ to be the maximum of all \LL exponents in the tree.
\end{proof}


\section{Analysis of the algorithm with $K=2$ resources}\label{Appendix:K2}

\subsection{Proof of Theorem \ref{TH:K2Strongly}}
\begin{proof}
Let $j \in [\jm]$. By concavity of $g$, we have that $-g(x_j) \leq \abs{g'(x_j)}\abs{x\st-x_j}$. Since $g$ is negative, this means that $\abs{g(x_j)}\leq \abs{g'(x_j)}\abs{x\st-x_j}$.

Since $g$ is of class $\mathcal{C}^2$ and $\alpha$-strongly concave,
\begin{align*}
\la g'(x_j) - g'(x\st) | x_j-x\st \ra &\leq -\alpha \norm{x_j-x\st}^2 \\
-\alpha \norm{x_j-x\st}^2 &\geq \la g'(x_j) - g'(x\st) | x_j-x\st \ra \geq - \abs{g'(x_j)}\norm{x_j-x\st} \\
\abs{g'(x_j)} &\geq \alpha \norm{x_j-x\st}.
\end{align*}
Then
\[
\dfrac{\abs{g(x_j)}}{g'(x_j)^2}\leq\dfrac{\abs{g'(x_j)}\abs{x\st-x_j}}{g'(x_j)^2}=\dfrac{\abs{x\st-x_j}}{\abs{g'(x_j)}}\leq\dfrac{1}{\alpha}.
\]

Consequently we have
\[
R(T) \leq \dfrac{\jm}{T\alpha}.
\]

We have for all $j \in [\jm]$, $N_j = 2\log(2T/\delta)\dfrac{1}{g'(x_j)^2}$.
Then
\begin{align*}
T&=8\log(2T/\delta)\sum_{j=1}^{\jm} \dfrac{1}{g'(x_j)^2} \\
&\geq8\log(2T/\delta)\sum_{j=1}^{\jm}\dfrac{1}{L'^2(x_j-x\st)^2} \\
&\geq 8\log(2T/\delta)\dfrac{1}{L'^2(x_{\jm}-x\st)^2} \\
&\geq 8\log(2T/\delta)\dfrac{4^{\jm}}{L'^2}.
\end{align*}
where we used the fact that $g'$ is $L'$-Lipschitz continuous.
Therefore $\jm \leq \log_4\left(\dfrac{TL'^2}{8\log(2T/\delta)}\right)\lesssim \log(T)$.
And finally \[R(T)=\bigo\left(\dfrac{1}{\alpha}\dfrac{\log(T)}{T}\right).\]
\end{proof}

\subsection{Proof of Theorem \ref{TH:K2Beta}, when $\beta >2$}

\begin{proof}
Let $x\in[0,1]$. We know that $\abs{g(x)} \leq c \abs{g'(x)}^{\beta}$.

Then $\dfrac{1}{\abs{g'(x)}^2}\leq \dfrac{c^{2/\beta}}{\abs{g(x)}^{2/\beta}}$, and $\dfrac{\abs{g(x)}}{\abs{g'(x)}^2}\leq c^{2/\beta} \abs{g(x)}^{1-2/\beta}$.

Since $g$ is $L$-Lipschitz on $[0,1]$, we have $\abs{g(x)-g(x\st)}\leq L\abs{x-x\st}$.
Since $g(x\st)=0$ then $\dfrac{\abs{g(x)}}{\abs{g'(x)}^2}\leq c^{2/\beta} L^{1-2/\beta}\abs{x-x\st}^{1-2/\beta}$.

For $j \in [\jm]$, $\dfrac{\abs{g(x_j)}}{\abs{g'(x_j)}^2}\leq c^{2/\beta} L^{1-2/\beta}\left(\dfrac{1}{2^{1-2/\beta}}\right)^{j}$, because $\abs{x\st-x_j}\leq 2^{-j}$, as a consequence of the binary search.
Since $1-2/\beta>0$, 
\[
\sum_{j=1}^{\jm}\left(\dfrac{1}{2^{1-2/\beta}}\right)^{j} < \dfrac{1}{1-2^{2/\beta-1}}.
\]
Finally we have, using that $\delta=2/T^2$,
\begin{align*}
R(T)&=\dfrac{8}{T}\log(2T/\delta)\sum_{j=1}^{\jm}\dfrac{\abs{g(x_j)}}{\abs{g'(x_j)}^2} \\
&\leq\dfrac{24 c^{2/\beta} L^{1-2/\beta}}{1-2^{2/\beta-1}}\dfrac{\log(T)}{T}.
\end{align*}

\end{proof}

\subsection{Proof of Theorem \ref{TH:K2Beta}, when $\beta <2$}

\begin{proof}
We know that
\begin{align*}
R(T)&=\dfrac{1}{T}\sum_{j=1}^{\jm}\abs{g(x_j)}N_j \\
&=8\log(2T/\delta) \dfrac{1}{T}\sum_{j=1}^{\jm} \dfrac{\abs{g(x_j)}}{h_j^2} \\
&\leq 8\log(2T/\delta) \dfrac{1}{T}\sum_{j=1}^{\jm} \dfrac{\abs{g(x_j)}}{g'(x_j)^2}.
\end{align*}
where $h_j \geq g_j$ is such that $N_j=\dfrac{8\log(2/\delta)}{h_j^2}$.
We note
\[
R \doteq \dfrac{TR(T)}{8\log(2T/\delta)}=\sum_{j=1}^{\jm}\dfrac{\abs{g(x_j)}}{h_j^2}.
\]
By hypothesis, $\forall x \in [0,1], \ \abs{g(x)}\leq c \abs{g'(x)}^{\beta}$.
Moreover Lemma~\ref{lemma:concav} gives $\abs{g(x_j)}\leq \abs{g'(x_j)}\abs{x_j-x\st}\leq \abs{g'(x_j)}2^{-j}$.

If we note $g_j \doteq \abs{g'(x_j)}$ we obtain
\[
R \leq \sum_{j=1}^{\jm}\min\left(cg_j^{\beta},\dfrac{g_j}{2^j}\right)\dfrac{1}{h_j^2}.
\]
Let us now note \[T'\doteq \dfrac{T}{8\log(2T/\delta)}.\] We have the constraint \[T'=\sum_{j=1}^{\jm}\dfrac{1}{h_j^2}.\]
Our goal is to bound $R$. In order to do that, one way is to consider the functional \[\cF:(g_1,\dots,g_{\jm}) \in {\bRpp}^{\jm} \mapsto \sj \min\left(cg_j^{\beta},\dfrac{g_j}{2^j}\right)/h_j^2\] and to maximize it under the constraints \[T'=\sum_{j=1}^{\jm}\dfrac{1}{h_j^2} \quad \et \, g_j \leq h_j.\]
Therefore the maximum of the previous problem is smaller than the one of maximizing \[\hat{\cF}:(h_1,\dots,h_{\jm}) \in {\bRpp}^{\jm} \mapsto \sj \min\left(ch_j^{\beta-2},\dfrac{1}{h_j 2^j}\right)\] and to maximize it under the constraints \[T'=\sum_{j=1}^{\jm}\dfrac{1}{h_j^2}.\]
For the sake of simplicity we identify $g_j$ with $h_j$.
The maximization problem can be done with Karush-Kuhn-Tucker conditions: introducing the Lagrangian \[\sL(g_1,\dots,g_{\jm},\lambda)=\cF(g_1,\dots,g_{\jm})+\lambda\left(T'-\sum_{j=1}^{\jm}\dfrac{1}{h_j^2}\right)\] we obtain
\[
\dfrac{\partial \sL}{\partial g_j}=
\begin{cases}
c(\beta-2)g_j^{\beta-3}+\dfrac{2\lambda}{g_j^3}, & \textrm{if } g_j < \hgj \\
-\dfrac{1}{2^j g_j}+\dfrac{2\lambda}{g_j^3}, & \textrm{if } g_j > \hgj
\end{cases}\quad,\textrm{where } \hgj=\left(\dfrac{1}{2^jc}\right)^{1/(\beta-1)}.
\]
$\hgj$ is the point where the two quantities in the $\min$ are equal.
And finally
\[
\begin{cases}
g_j=\left(\dfrac{2\lambda}{c(2-\beta)}\right)^{1/\beta}, & \textrm{if } g_j < \hgj \\
g_j = 2\lambda \cdot 2^j, & \textrm{if } g_j > \hgj.
\end{cases}
\]
We note $\cJ_1\doteq \{j \in [\jm], g_j>\hgj \}$ and $\cJ_2\doteq \{j \in [\jm], g_j<\hgj \}$.
We have
\[
\FF=\underbrace{\sJ \dfrac{1}{2^j g_j}}_{\cF_1}+\underbrace{\sJJ c g_j^{\beta-2}}_{\cF_2}.
\]
We note as well \[T_1\doteq \sJ \dfrac{1}{g_j^2} \quad \textrm{and} \quad T_2 \doteq \sJJ \dfrac{1}{g_j^2} \quad \textrm{such that }T'=T_1+T_2.\]

\noindent{}
\textbf{on $\cJ_2$:}

Since $g_j < \hgj$ on $\cJ_2$, noting $g_2\doteq \left(\dfrac{2\lambda}{c(2-\beta)}\right)^{1/\beta}=g_j$,
\begin{align*}
T_2=\sJJ \dfrac{1}{g_j^2}=\abs{\cJ_2}\dfrac{1}{g_2^2}>\abs{\cJ_2}\dfrac{1}{\hgj^2} \textrm{ for all } j \in \cJ_2
\end{align*}
In particular,
\begin{align*}
T' \geq T_2 > \abs{\cJ_2} \left(\dfrac{1}{c^2 4^{\jdm}}\right)^{-1/(\beta-1)} \geq \abs{\cJ_2} \left(c^2 4^{\abs{\cJ_2}}\right)^{1/(\beta-1)}\geq \left(4^{\abs{\cJ_2}}\right)^{1/(\beta-1)}
\end{align*}
because $c$ can be chosen greater than $1$.
This gives $\abs{\cJ_2} \leq \dfrac{\beta-1}{\log(4)}\log(T)$.

And we know that \[T_2=\sJJ \dfrac{1}{g_j^2}=\abs{\cJ_2}\left(\dfrac{2\lambda}{c(2-\beta)}\right)^{-2/\beta}.\] This gives
\[
\dfrac{2\lambda}{c(2-\beta)}=\left(\dfrac{T_2}{\abs{\cJ_2}}\right)^{-\beta/2}.
\]
We can now compute the cost of $\cJ_2$:
\begin{align*}
\cF_2&=\sJJ c g_j^{\beta-2} \\
&=\abs{\cJ_2}c\left(\dfrac{2\lambda}{c(2-\beta)}\right)^{(\beta-2)/\beta} \\
&=\abs{\cJ_2}c\left(\dfrac{T_2}{\abs{\cJ_2}}\right)^{1-\beta/2} \\
&=cT_2^{1-\beta/2}\abs{\cJ_2}^{\beta/2} \\
&\leq cT_2^{1-\beta/2}\left(\dfrac{\beta-1}{\log(4)}\log(T')\right)^{\beta/2} \\
&\lesssim cT'\left(\dfrac{\log(T')}{T'}\right)^{\beta/2}.
\end{align*}

\textbf{on $\cJ_1$:}

We know that $\forall j \in \cJ_1, \ g_j=2\lambda \ 2^j$.
This gives
\begin{align*}
T_1&=\sJ \dfrac{1}{g_j^2}=\dfrac{1}{4\lambda^2}\sJ\dfrac{1}{4^j} \\
2\lambda&=\sqrt{\dfrac{\sJ 4^{-j}}{T_1}}\\
2\lambda&\leq\sqrt{\dfrac{4\cdot4^{-\jum}}{3T_1}}.
\end{align*}
Since $j \in \cJ_1$, we know that $g_j \geq \hgj$ and $2\lambda\ 2^j \geq \left(\dfrac{1}{2^j c}\right)^{1/(\beta-1)}$, and $2\lambda \geq c^{-1/(\beta-1)}(2^j)^{-\beta/(\beta-1)}$.
With $j=\jum$ we obtain
\begin{align*}
c^{-1/(\beta-1)}(2^{\jum})^{-\beta/(\beta-1)}&\leq\sqrt{\dfrac{4\cdot4^{-\jum}}{3T_1}} \\
\dfrac{\sqrt{3}}{2}\left(2^{\jum}\right)^{-1/(\beta-1)}c^{-1/(\beta-1)}&\leq\dfrac{1}{\sqrt{T_1}} \\
c^{-2}4^{-\jum} &\lesssim T_1^{1-\beta}.
\end{align*}
And we have
\begin{align*}
\cF_1&=\sJ \dfrac{1}{2^j \ 2 \lambda \ 2^j}=\dfrac{1}{2\lambda}\sJ \dfrac{1}{4^j}=2\lambda T_1 \\
&\lesssim \sqrt{T_1} 2^{-\jum} \\
&\lesssim cT_1^{1-\beta/2}
\lesssim cT'^{1-\beta/2}.
\end{align*}
Finally we have shown that $R \lesssim cT'\left(\dfrac{\log(T')}{T'}\right)^{\beta/2}$ and consequently
\begin{align*}
\dfrac{TR(T)}{8\log(2T/\delta)}&\lesssim c\dfrac{T}{8\log(2T/\delta)}\left(\dfrac{\log(T')}{T'}\right)^{\beta/2} \\
R(T)&\lesssim c\left(8\log(2T/\delta)\right)^{\beta/2}\left(\dfrac{\log(T)}{T}\right)^{\beta/2}.
\end{align*}
And using the fact that $\beta<2$ and $\delta=2/T^2$, we have
\[
R(T)\lesssim c\left(\dfrac{\log(T)^2}{T}\right)^{\beta/2}.
\]
\end{proof}

\subsection{Proof of Theorem \ref{TH:K2Beta}, when $\beta=2$}
\begin{proof}
As in the previous proof, we want to bound
\[
R=\sum_{j=1}^{\jm}\dfrac{\abs{g(x_j)}}{g'(x_j)^2}\leq \sj \min\left(c, \dfrac{1}{g_j 2^j}\right).
\]
Let us note $\hgj\doteq \dfrac{1}{c2^j}$, we have to distinguish two cases:
\[
\begin{cases}
\textrm{if }g_j>\hgj, & \textrm{then } \min\left(c, \dfrac{1}{g_j 2^j}\right)=\dfrac{1}{2^j g_j} \\
\textrm{if }g_j<\hgj, & \textrm{then } \min\left(c, \dfrac{1}{g_j 2^j}\right)=c.
\end{cases}
\]
We note $\cJ_1\doteq \{j \in [\jm], g_j>\hgj \}$ and $\cJ_2\doteq \{j \in [\jm], g_j<\hgj \}$.

We have
\[
R\leq\underbrace{\sJ \dfrac{1}{2^j g_j}}_{R_1}+\underbrace{\sJJ c}_{R_2}.
\]
We note as well \[T_1\doteq \sJ \dfrac{1}{g_j^2} \quad \textrm{and} \quad T_2 \doteq \sJJ \dfrac{1}{g_j^2} \quad \textrm{such that }T'=T_1+T_2.\]

\noindent{}
\textbf{on $\cJ_2$:}

\[T_2=\sJJ \dfrac{1}{g_j^2} > \sJJ \dfrac{1}{\hgj^2} \geq \sJJ c^2 4^j \geq 4^{\jdm}.\]
Which gives $\jdm \leq \log(T)$.
Finally, \[R_2=\sJJ c \leq c\jdm \leq c \log(T).\]

\noindent{}
\textbf{on $\cJ_1$:}

We want to maximize $R_1=\displaystyle \sJ \dfrac{1}{2^jg_j}$ under the constraint $T_1=\displaystyle \sJ \dfrac{1}{g_j^2}$.

Karush-Kuhn-Tucker conditions give the existence of $\lambda>0$ such that for all $j \in \cJ_1$, $g_j=2\lambda\cdot 2^j$. As in the previous proof this shows that $R_1=2\lambda T_1$. We can show as well that, if $j \in \cJ_1$,
\[
2\lambda \leq \dfrac{2}{\sqrt{3}}\dfrac{2^{-\jum}}{\sqrt{T_1}}.
\]
And since $j \in \cJ_1$, $g_j > \dfrac{1}{c2^j}$ and then $2\lambda \ 2^j > \dfrac{1}{c2^j}$ which means
\[
2\lambda>\dfrac{1}{c4^{\jum}}.
\]
Putting these inequalities together gives
\[
\sqrt{T_1}\leq \dfrac{2c}{\sqrt{3}}2^{\jum}.
\]
Finally,
\[
R_1=2\lambda T_1\leq\dfrac{2}{\sqrt{3}}\dfrac{2^{-\jum}}{\sqrt{T_1}}T_1\leq \dfrac{4c}{3}.
\]

This shows that
\[
R(T)\lesssim c\log(2T/\delta)\dfrac{\log(T)}{T} \lesssim c \dfrac{\log(T)^2}{T}.
\]
\end{proof}

\subsection{Proof of Theorem \ref{TH:lower}}
\begin{proof}
The proof is very similar to the one of \cite{Shamir13} (see also \citep{BacPer16}) so we only provide the main different ingredients.

Given $T$ and $\beta$, we are going to construct 2 pairs of functions $f_1, f_2$ and $\widetilde{f}_1, \widetilde{f}_2$ such that 
\[
\|f_i -\widetilde{f}_i\|_\infty \leq \frac{c_\beta}{\sqrt{T}} \quad \text{and } \ \|\nabla f_i -\nabla\widetilde{f}_i\|_\infty \leq \frac{c_\beta}{\sqrt{T}}\ .
\]
As a consequence, using only $T$ samples\footnote{Formally, we just need to control the $\ell_\infty$ distance between the gradients, as we assume that the feedbacks of the decision maker are noisy gradients. But we could have assumed that he also observes noisy evaluations of $f_1(x_1)$ and $f_2(x_2)$. This is why we also want to control the $\ell_\infty$ distance between the functions $f_i$ and $\widetilde{f}_i$.}, it is impossible to distinguish between the pair $f_1,f_2$ and the pair $\widetilde{f}_1,\widetilde{f}_2$. And the regret incurred by any algorithm is then lower-bounded (up to some constant) by $$\min_x \max\{ g\st-g(x)\, ; \, \widetilde{g}\st-\widetilde{g}(x)\}$$ where we have defined $g(x) = f_1(x)+f_2(1-x)$ and $g\st= \max_x g(x)$ and similarly for $\widetilde{g}$.

\medskip To define all those functions, we first introduce  $g$ and $\widetilde{g}$  defined as follows, where $\gamma$ is a parameter to be fixed later.
\begin{align*}
g:x\mapsto 
&\begin{cases}
-x^{\beta/(\beta-1)} \quad \textrm{if } x \leq \gamma \\
- \dfrac{\beta}{\beta-1}\gamma^{1/(\beta-1)}x+\dfrac{1}{\beta-1}\gamma^{\beta/(\beta-1)}\quad\textrm{otherwise}
\end{cases}
\\
&\et \\
\widetilde{g}:x\mapsto
&
\begin{cases}
- \abs{x-\gamma}^{-\beta/(\beta-1)} \quad \textrm{if } x \leq 2\gamma \\
- \dfrac{\beta}{\beta-1}\gamma^{1/(\beta-1)}x + \dfrac{\beta+1}{\beta-1}\gamma^{\beta/(\beta-1)}\quad\textrm{otherwise.}
\end{cases}
\end{align*}
The functions have the form of Proposition~\ref{PR:Model} near $0$ and then are linear with the same slope.
Proposition~\ref{PR:Model} ensures that $g_1$ and $g_2$ verify the \LL inequality for the parameter $\beta$. The functions $g_1$ and $g_2$ are concave non-positive functions, reaching their respective maxima at $0$ and $\gamma$.

We also introduce a third function $h$ defined by
\begin{align*}
h:x\mapsto 
&\begin{cases}
(\gamma-x)^{\beta/(\beta-1)}-x^{\beta/(\beta-1)} \quad \textrm{if }\ \frac{\gamma}{2} \leq x \leq \gamma \\
 2\dfrac{\beta}{\beta-1}(\frac{\gamma}{2})^{1/(\beta-1)}(\frac{\gamma}{2}-x)\quad\textrm{if }\ x \leq \frac{\gamma}{2}
 \\
- \dfrac{\beta}{\beta-1}\gamma^{1/(\beta-1)}x+\dfrac{1}{\beta-1}\gamma^{\beta/(\beta-1)}\quad\textrm{if }\ x \geq \gamma 
\end{cases}
\end{align*}

The functions $f_i$ and $\widetilde{f}_i$ are then defined as 
\begin{align*}
f_1(x)=0 \quad & \text{ and } \ \widetilde{f}_1(x)= \widetilde{g}(x)-g(x)+h(x)-\widetilde{g}(0)-g(0)+h(0) \\
 f_2(x) = g(1-x)-g(1) \quad & \text{ and } \ \widetilde{f}_2(x)=g(1-x)-h(1-x)-g(1)+h(1)
\end{align*}
It immediately follows that $f_1(x)+f_2(1-x)$ is equal to $g(x)$ and similarly $\widetilde{f}_1(x) + \widetilde{f}_2(1-x)$ is equal to $\widetilde{g}(x)$ (both up to some additive constant).
\medskip

We observe that for all $x\in[0,1]$:
\begin{align*}
\nabla g(x)=&
\begin{cases}
- \dfrac{\beta}{\beta-1}x^{1/(\beta-1)} \quad \textrm{if } x \leq \gamma \\
- \dfrac{\beta}{\beta-1}\gamma^{1/(\beta-1)}\quad\textrm{otherwise}
\end{cases} \\
&\et \\
\nabla \widetilde{g}(x)=&
\begin{cases}
-\dfrac{\beta}{\beta-1}\sign(x-\gamma)\abs{x-\gamma}^{1/(\beta-1)} \quad \textrm{if } x \leq 2\gamma \\
- \dfrac{\beta}{\beta-1}\gamma^{1/(\beta-1)}\quad\textrm{otherwise}
\end{cases}
\end{align*}
Similarly, we can easily compute the gradient of $h$:
\begin{align*}
\nabla h(x)=
&\begin{cases}
-\frac{\beta}{\beta-1}\Big( (\gamma-x)^{1/(\beta-1)}+x^{1/(\beta-1)}\Big) \quad \textrm{if }\ \frac{\gamma}{2} \leq x \leq \gamma \\
 -2\dfrac{\beta}{\beta-1}(\frac{\gamma}{2})^{1/(\beta-1)}\quad\textrm{if }\ x \leq \frac{\gamma}{2}
 \\
- \dfrac{\beta}{\beta-1}\gamma^{1/(\beta-1)}\quad\textrm{if }\ x \geq \gamma 
\end{cases}
\end{align*}

We want to bound $\normi{\nabla g-\nabla \widetilde{g}}$ as it is clear that $\normi{\nabla h}\leq  \frac{\beta}{\beta-1}\gamma^{1/(\beta-1)}$.

\begin{itemize}
\item For $x \leq \gamma$,
\begin{align*}
\abs*{\nabla g(x)-\nabla \widetilde{g}(x)}&=\dfrac{\beta}{\beta-1}\abs*{-x^{1/(\beta-1)}-(\gamma-x)^{1/(\beta-1)}}\\
&=\dfrac{\beta}{\beta-1}\abs*{x^{1/(\beta-1)}+(\gamma-x)^{1/(\beta-1)}}\\
&\leq\dfrac{\beta}{\beta-1}\left(x^{1/(\beta-1)}+(\gamma-x)^{1/(\beta-1)}\right)\\
&\leq 2\dfrac{\beta}{\beta-1}\gamma^{1/(\beta-1)}.
\end{align*}
\item For $\gamma\leq x\leq2\gamma$, 
\begin{align*}
\abs*{\nabla g(x)-\nabla \widetilde{g}(x)}&=\dfrac{\beta}{\beta-1}\abs*{(x-\gamma)^{1/(\beta-1)}-x^{1/(\beta-1)}}\\
&\leq\dfrac{\beta}{\beta-1}\abs*{(x-\gamma)^{1/(\beta-1)}}+ \abs*{x^{1/(\beta-1)}}\\
&\leq(1+2^{1/(\beta-1)})\dfrac{\beta}{(\beta-1)}\gamma^{1/(\beta-1)}
\end{align*}
\item For $x \geq 2\gamma$, $\abs*{\nabla g(x)-\nabla \widetilde{g}(x)}=0$.
\end{itemize}
Finally we also have that $\normi{\nabla g-\nabla \widetilde{g}}\lesssim \gamma^{1/(\beta-1)}$ , where the notation $\lesssim$ hides a multiplicative constant factor.

Combining the control on $\normi{\nabla g-\nabla \widetilde{g}}$ and $\normi{\nabla h}$, we finally get that 
\[
\normi{\nabla f_1-\nabla \widetilde{f}_1}\lesssim \gamma^{1/(\beta-1)}  \quad \text{ and } \ \normi{\nabla f_2-\nabla \widetilde{f}_2} \lesssim \gamma^{1/(\beta-1)}.
\]
As a consequence, the specific choice of $\gamma=T^{(1-\beta)/2}$ ensures that $\gamma^{1/(\beta-1)} \leq 1/\sqrt{T}$ and thus the mappings $f_i$ are indistinguishable from $\widetilde{f}_i$,

Finally, we get \[
R(T) \geq T \min_x \max(\abs{g(x)},\abs{\widetilde{g}(x)}) \geq T g(\gamma/2) \gtrsim \gamma^{\beta/(\beta-1)} \gtrsim T^{-\beta/2}.
\]
\end{proof}


\section{Analysis of the algorithm with $K>2$ resources} \label{Appendix:K4}

In this section we present a detailed description of the maximization algorithm introduced in Section~\ref{sec:k4} as well as a thorough analysis of its complexity.

\subsection{Detailed description of the algorithm}

The goal of the algorithm is to maximize the following function on the simplex $\Delta^K$:
\[
F(x)=\sum_{k=1}^K f_k(x_k) \quad \mbox{with }x=(x_1,\dots,x_K)\in \Delta^K.
\]
As mentioned in the main text, the idea is to use a divide and conquer strategy in order to be able to use the procedure of $K=2$ resources explained in Section~\ref{sec:k2}. The overall idea is to separate arms recursively into two bundles, creating the aforementioned tree whose root is $F$ and whose leaves are the $f_k$. We explain in this section the algorithm with more details, introducing the relevant definitions and notations for the proof.  

We will denote  by $\F{i}{j}$ the function created at the nodes of depth $i$, with $j$ an increasing index from the left to the right of the tree; in particular $\F{0}{1}=F=\sum_{k=1}^K f_k(x_k)$. This is the function we want to maximize.

\begin{definition}\label{Def:Tree}
Starting from $\F{0}{1}=F=\sum_{k=1}^K f_k(x_k)$, the functions $\F{i}{j}$ are constructed inductively as follows. If  $\F{i}{j}(x)=\sum_{k=k_1}^{k_2} f_k(x_k)$ is not a leaf (\ie~$k_1 < k_2$) we define
\[
\F{i+1}{2j-1}(x)=\sum_{k=k_1}^{\lfloor (k_1+k_2)/2 \rfloor} f_k(x_k) \quad \et \quad \F{i+1}{2j}(x)=\sum_{k=\lfloor (k_1+k_2)/2 \rfloor+1	}^{k_2}f_k(x_k).
\]
\end{definition}

The optimization of $\F{i}{j}$ can be done recursively since 
\[
\max_{\normun{x}=z_n}\F{i}{j}(x)=\max_{z_{n+1} \in [0,z_n]} \left(\max_{\normun{x}=z_{n+1}} \F{i+1}{2j-1}(x)+\max_{\normun{x}=z_n-z_{n+1}} \F{i+1}{2j}(x)\right).
\]
The recursion ends at nodes that are parents of leaves, where the optimization problem is reduced to the case of $K=2$ resources studied in the previous section.

For the sake of notations, we introduce the following functions.

\begin{definition}
\label{def:gh}
For every $i$ and $j$ in the constructed binary tree of functions,
\begin{align*}
\H{i}{j}(z) \triangleq \max_{\normun{x}=z} \F{i}{j}(x) \quad\et\quad
\G{i}{j}(z;y) \triangleq \H{i+1}{2j-1}(z)+\H{i+1}{2j}(y-z).
\end{align*}
With these notations, it holds that for all $z_n \in [0,1]$,
\[
\H{i}{j}(z_n)=\max_{z_{n+1} \in [0,z_n]} \G{i}{j}(z_{n+1};z_n)=\max_{z_{n+1} \in [0,z_n]} \H{i+1}{2j-1}(z_{n+1})+\H{i+1}{2j}(z_n-z_{n+1}).
\]
\end{definition}

The computation of $\H{i}{j}(z_n)$ is made with similar techniques than in the case $K=2$ of Section~\ref{sec:k2}. The idea will be to imbricate several binary searches to get estimates of the functions $\nabla \H{i}{j}$.

More precisely, to maximize the function $\left(u \mapsto \G{i}{j}(u;z_{n})\right)$ a binary search is run over $[0,z_n]$, starting at $u_1=z_n/2$:

\begin{definition}
We note $\Dij(v)$ the binary search run to maximize $\left(w \mapsto \G{i}{j}(w;v)\right)$. We define ${z\st}^{(i)}_j(v)$ as $\argmax \ \Gij(\cdot\,;v)$ and we also call $\Tij(v)$ the total number of queries used by $\D{i}{j}(v)$.
\end{definition}

Inductively, the binary search $\Dij(v)$ searches on the left or on the right of $u_m$, depending on the sign of $\nabla \G{i}{j}(u_m;z_n)$. As it holds that, by definition, $\nabla \Gij(u_m;z_n) =\nabla \H{i+1}{2j-1}(u_m)-\nabla \H{i+1}{2j}(z_n-u_m)$, we need to further estimate $\nabla \H{i+1}{2j-1}(u_m)$ and $\nabla \H{i+1}{2j}(z_n-u_m)$.

This is done using the following properties relating the different gradients of $\H{i}{j}$. They are direct consequences of the envelop theorem (see Lemma~\ref{Envelop}) because
\[
\Hij(u)=\max_{\normun{x}=u} \Fij(x)=\max_{v \in [0,u]} \H{i+1}{2j-1}(v)+\H{i+1}{2j}(u-v).
\]
As a consequence, if $v\st$ denotes  the point where the maximum is reached  and it belongs to $(0,u)$, then
\[
\label{eq:grad}
\nabla \Hij(u)=\nabla \H{i+1}{2j-1}(v\st)=\nabla \H{i+1}{2j}(u-v\st). \numberthis
\]
If $v\st = 0$, then $\nabla \Hij(u)=\nabla \H{i+1}{2j}(u-v\st)$, \ie~the first inequality might only be an inequality. On the other hand, if $v\st = u$, then $\nabla \Hij(u)=\nabla \H{i+1}{2j-1}(v\st)$. This is the central tool that will let us compute the gradients of all the nodes of the tree, from the leaves to the root.
\bigskip

Thanks to the envelop theorem and Equation~\eqref{eq:grad} we are able to compute the gradients $\nabla \Gij(v;u)$ for all nodes in the tree. This is done recursively by imbricating dichotomies.

The goal of the binary searches $\D{i+1}{2j-1}(v)$ and $\D{i+1}{2j}(u-v)$ is to compute an approximate value of $\nabla \Gij(v;u)$. Indeed we have
\[
\nabla \Gij(v;u) = \nabla \H{i+1}{2j-1}(v)-\nabla \H{i+1}{2j}(u-v),
\]
and to compute $\H{i+1}{2j-1}(v)$ (respectively $\nabla \H{i+1}{2j}(u-v)$) we need to run the binary search $\D{i+1}{2j-1}(v)$ (respectively $\D{i+1}{2j}(u-v)$).
Let us denote by $\hnabla \Gij(v;u)$ the approximate value of $\nabla \Gij(v;u)$ computed at the end of the binary searches $\D{i+1}{2j-1}(v)$ and $\D{i+1}{2j}(u-v)$, that compute themselves $\hnabla \H{i+1}{2j-1}(v)$, approximation of $\nabla \H{i+1}{2j-1}(v)$ and $\hnabla \H{i+1}{2j}(u-v)$, approximation of $\nabla \H{i+1}{2j}(u-v)$.

The envelop theorem gives that $\nabla \H{i+1}{2j-1}(v)=\nabla \H{i+2}{4j-3}(w\st)=\nabla \H{i+2}{4j-2}(v-w\st)$ where $w\st=\argmax \G{i+1}{2j-1}(w;v)$.
Therefore in order to compute $\hnabla \H{i+1}{2j-1}(v)$ we run the binary search $\D{i+1}{2j-1}(v)$ that aims at maximizing the function $\left(w\mapsto \G{i+1}{2j-1}(w;v)\right)$. At iteration $N$ of $\D{i+1}{2j-1}(v)$, we have
\[
\abs{\nabla \G{i+1}{2j-1}(w_N;v)}=\abs{\nabla \H{i+2}{4j-3}(w_N)-\nabla \H{i+2}{4j-2}(v-w_N)}.
\]
We use the following estimate for $\nabla \H{i+1}{2j-1}(v)$:
\[
\hnabla \H{i+1}{2j-1}(v) \doteq \dfrac{1}{2}\left(\nabla \H{i+2}{4j-3}(w_N)+\nabla \H{i+2}{4j-2}(v-w_N)\right).
\]
Since $w\st \in (w_N,v-w_N)$ (or $(v-w_N,w_N)$), we have that \[\abs{\hnabla \H{i+1}{2j-1}(v)-\nabla \H{i+1}{2j-1}(v)} \leq \dfrac{1}{2}\abs{\nabla \G{i+1}{2j-1}(w_N;v)}.\]
Consequently we can say that with high probability,
\[
\nabla \Gij(v;u) \in \left[\hnabla \Gij(v;u)-\alpha, \hnabla \Gij(v;u)+\alpha\right]
\]
where\[\alpha=\dfrac{1}{2}\left(\abs{\nabla \G{i+1}{2j-1}(w_N;v)}+\abs{\nabla \G{i+1}{2j}(v-w_N;v)}\right).\]
In order to be sure that the algorithm does not make an error on the sign of $\nabla \Gij(v;u)$ (as in Section~\ref{sec:k2}) we have to run the binary searches $\D{i+1}{2j-1}(v)$ and $\D{i+1}{2j}(u-v)$ until $0 \notin \left[\hnabla \Gij(v;u)-\alpha, \hnabla \Gij(v;u)+\alpha\right]$ which is the case as soon as $\alpha < \abs{\nabla \Gij(v;u)}$. Therefore we decide to stop the binary $\D{i+1}{2j-1}(v)$ when $\abs{\nabla \G{i+1}{2j-1}(w_N;v)} < \abs{\nabla \Gij(v;u)}$ and to stop the binary $\D{i+1}{2j}(u-v)$ when $\abs{\nabla \G{i+1}{2j}(v-w_N;v)} < \abs{\nabla \Gij(v;u)}$.

This leads to the following lemma:
\begin{lemma}
\label{lemma:gradient_order}
During the binary search $\D{i+1}{2j-1}(v)$ we have, for all point $w$ tested by this binary search,
\[
\abs{\nabla \G{i+1}{2j-1}(w;v)} \geq \abs{\nabla \Gij(v)}.
\]
And during the binary search $\D{i+1}{2j}(v)$ we have, for all point $w$ tested by this binary search,
\[
\abs{\nabla \G{i+1}{2j}(v-w;v)} \geq \abs{\nabla \Gij(v)}.
\]
\end{lemma}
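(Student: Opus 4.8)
The plan is to read the two inequalities straight off the stopping rule that governs the nested binary searches, as set up in the construction above. First I would recall \emph{why} the sub-search $\D{i+1}{2j-1}(v)$ is launched at all: it is run by its parent $\Dij$ --- which is currently sitting at the query point $v$ --- in order to produce an estimate $\hnabla\H{i+1}{2j-1}(v)$ of $\nabla\H{i+1}{2j-1}(v)$ accurate enough to reveal the sign of the parent gradient $\nabla\Gij(v)$. The quantitative fact that makes this work, already derived above from the envelope theorem, is that once $\D{i+1}{2j-1}(v)$ has reached a query point $w$, its estimate $\hnabla\H{i+1}{2j-1}(v)$ is the midpoint of an interval that contains $\nabla\H{i+1}{2j-1}(v)$, so that $\abs{\hnabla\H{i+1}{2j-1}(v)-\nabla\H{i+1}{2j-1}(v)}\le\tfrac12\abs{\nabla\G{i+1}{2j-1}(w;v)}$; the analogous bound holds for the right child.

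Combining this error bound with the stopping rule is the second step. Since the error contributed to $\hnabla\Gij(v)$ by each child is at most half of the magnitude of that child's current gradient, the confidence interval around $\nabla\Gij(v)$ excludes $0$ --- so its sign is decided --- as soon as both children's gradient magnitudes have dropped below $\abs{\nabla\Gij(v)}$; this is exactly the rule ``terminate $\D{i+1}{2j-1}(v)$ once $\abs{\nabla\G{i+1}{2j-1}(w;v)}<\abs{\nabla\Gij(v)}$'' used in the algorithm. Consequently, as long as $\D{i+1}{2j-1}(v)$ is still running --- in particular at every point $w$ that it actually tests (queries and then branches on) before terminating --- we still have $\abs{\nabla\G{i+1}{2j-1}(w;v)}\ge\abs{\nabla\Gij(v)}$, which is the first claim. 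The second claim is the identical argument with the roles of the left child $\H{i+1}{2j-1}$ and the right child $\H{i+1}{2j}$ exchanged.

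The step I expect to be the only delicate one is making precise what ``a point tested by the binary search'' means, and in particular excluding the point at which the search finally \emph{terminates}, which is the one that \emph{violates} the inequality; I would therefore formalise the binary search as emitting a finite sequence of branch points $w_1,\dots,w_{\rmax}$ and state the lemma for exactly those. Two auxiliary points, both routine, then remain. One must check that the midpoint-estimate bound $\tfrac12\abs{\nabla\G{i+1}{2j-1}(w;v)}$ still holds when the inner maximizer $w^\star$ is an endpoint of $[0,v]$: there the envelope identity $\nabla\H{i+1}{2j-1}(v)=\nabla\H{i+2}{4j-3}(w^\star)=\nabla\H{i+2}{4j-2}(v-w^\star)$ degenerates into a one-sided relation, but $\nabla\H{i+1}{2j-1}(v)$ still lies between $\nabla\H{i+2}{4j-3}(w)$ and $\nabla\H{i+2}{4j-2}(v-w)$ by monotonicity of the children's gradients, so the midpoint estimate is still within $\tfrac12\abs{\nabla\G{i+1}{2j-1}(w;v)}$ of it. And the gradient magnitudes compared in the stopping test are themselves known only up to the sampling confidence radius of Lemma~\ref{lemma:precision}, so the comparisons above should be read on the high-probability event on which the entire proof of Theorem~\ref{TH:K4Beta} is conducted. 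This lemma is then precisely the tool that lets one charge the queries of each sub-search against the parent gradient in Proposition~\ref{prop:subregret}.
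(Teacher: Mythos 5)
Your proof is correct and follows essentially the same route as the paper, which in fact gives no separate proof of this lemma: it is presented there as an immediate consequence of the stopping rule for the child searches (terminate $\D{i+1}{2j-1}(v)$ once $\abs{\nabla \G{i+1}{2j-1}(w_N;v)} < \abs{\nabla \Gij(v;u)}$), combined with the midpoint-estimate error bound $\abs{\hnabla \H{i+1}{2j-1}(v)-\nabla \H{i+1}{2j-1}(v)} \leq \tfrac{1}{2}\abs{\nabla \G{i+1}{2j-1}(w_N;v)}$ --- exactly the contrapositive reading you give. Your extra care about the terminating query point, the endpoint cases of the envelope identity, and the high-probability event goes slightly beyond what the paper writes down, but does not change the argument.
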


\subsection{Analysis of the Algorithm}

The rates of convergence of Theorem~\ref{TH:K4Beta} are obtained by introducing a decomposition of the overall  regret incurred by each binary search recursively run by the algorithm.
\begin{definition}
We define $\Rij(v)$ the regret induced by the binary search $\Dij(v)$ as the regret suffered when optimizing the function $\left(w \mapsto \Gij(w;v)\right)$.
\end{definition}
This notion of subregret is crucial for our induction since the regret of the algorithm after $T$ samples satisfies $R(T)=\R{0}{0}(1)/T$.

The main ingredient of the proof of Theorem~\ref{TH:K4Beta} is the following Proposition which gives a bound on the subregret $\Rij(v)$ depending on the subregrets below it.

\begin{proposition}
\label{prop:subregret2}
If $\Dij(v)$ is a node at distance $p$ from the bottom of the binary tree we have:
\[
\Rij(v) \leq \sum_{r=1}^{\rmax}8\log(2T/\delta) \dfrac{\abs*{\gij(w_r;v)}}{\abs*{\nabla \gij(w_r;v)}^2} \log(T)^p + \R{i+1}{2j-1}(w_r)+\R{i+1}{2j}(v-w_r).
\]
Where $\rmax$ the number of different samples of $\Dij(v)$ and $\gij(\point;v)\doteq\Gij(\point;v)-\Gij({z\st}^{(i)}_j(v);v)$. We have also $\rmax \leq \log_2(T)$.
\end{proposition}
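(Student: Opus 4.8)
~\textbf{Proof plan for Proposition~\ref{prop:subregret2}.}
The plan is to argue by induction on the distance $p$ from the bottom of the tree, carefully tracking where the budget spent by $\Dij(v)$ goes. First I would fix a query point $w_r$ of $\Dij(v)$ and note that the algorithm keeps sampling (and running the child binary searches $\D{i+1}{2j-1}(w_r)$ and $\D{i+1}{2j}(v-w_r)$) until the stopping rule of Lemma~\ref{lemma:gradient_order} fires, i.e.~until the estimated gradient $\hnabla \Gij(w_r;v)$ is separated from $0$. As in the $K=2$ analysis (Lemma~\ref{lemma:precision}), the number of ``top-level'' samples devoted to $w_r$ is at most $8\log(2T/\delta)/\abs{\nabla\gij(w_r;v)}^2$, since by Lemma~\ref{lemma:gradient_order} the relevant gradient magnitude controlling the concentration is at least $\abs{\nabla\Gij(w_r;v)}=\abs{\nabla\gij(w_r;v)}$. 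Each such sample incurs instantaneous regret $\abs{\gij(w_r;v)}$ (the suboptimality of querying $w_r$ for the mapping $\Gij(\cdot;v)$), which yields the ``direct'' contribution $8\log(2T/\delta)\,\abs{\gij(w_r;v)}/\abs{\nabla\gij(w_r;v)}^2$ per point.

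Next I would account for the regret incurred \emph{inside} the child binary searches. Running $\D{i+1}{2j-1}(w_r)$ once contributes $\R{i+1}{2j-1}(w_r)$ by definition, and similarly for the right child. The subtlety is that, while resolving the sign of $\nabla\Gij(w_r;v)$ at point $w_r$, the child searches may be invoked several times — once per top-level sample of $w_r$. Here is where the $\log(T)^p$ factor enters: by the induction hypothesis applied at depth $p-1$, each invocation of a child costs a subregret already carrying a $\log(T)^{p-1}$ factor, and since $w_r$ is sampled at most $\bigo(\log(T))$ times (because, as recorded at the end of the statement, each binary search has at most $\rmax \le \log_2(T)$ distinct query points and the number of repetitions of a fixed point is likewise $\bigo(\log(2T/\delta))$-bounded via Lemma~\ref{lemma:precision}), multiplying gives the $\log(T)^p$ bound. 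I would make this precise by showing that the same child instance can be \emph{reused} (its samples are not wasted between successive top-level samples of the same $w_r$), so that the multiplicative blow-up is only by the number of distinct points higher up, not by the number of repetitions.

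Then I would sum over the $\rmax$ distinct query points $w_1,\dots,w_{\rmax}$ of $\Dij(v)$, which produces exactly the stated bound, and check the base case $p=0$: a node just above the leaves is handled directly by the $K=2$ machinery of Section~\ref{sec:k2}, where the child ``searches'' are just direct noisy gradient evaluations of the $f_k$, so $\R{i+1}{2j-1}=\R{i+1}{2j}=0$ and the inequality reduces to Equation~\eqref{eq:master} with $\log(T)^0=1$. The bound $\rmax \le \log_2(T)$ follows because the binary search halves the interval at each step and $\abs*{x_j - x\st} \le 2^{-j}$ cannot drop below the resolution implied by the horizon $T$ (any point queried fewer than a polylogarithmic number of times cannot be distinguished, so the search effectively stops after $\bigo(\log T)$ halvings).

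The main obstacle I anticipate is the bookkeeping of the second paragraph: one must argue that the cost of the child binary searches does not get multiplied by the number of \emph{repeated} samples of each $w_r$ (which would give a $\log(T)$ factor too many and break the exponent), but only by the number of \emph{distinct} query points. This requires a careful description of the algorithm showing that the estimates $\hnabla \H{i+1}{2j-1}(w_r)$ produced by a child search persist across the repeated top-level samples of $w_r$ — equivalently, that $\D{i+1}{2j-1}(w_r)$ is a single object whose total query count $\T{i+1}{2j-1}(w_r)$ is what appears, not a fresh search per top-level sample. Everything else is a direct recursion on the template already established for $K=2$.
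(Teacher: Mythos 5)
There is a genuine gap in your accounting of where the $\log(T)^p$ factor comes from, and it makes the decomposition inconsistent with the statement you are trying to prove. In the bound, $\log(T)^p$ multiplies the \emph{direct} term $8\log(2T/\delta)\abs*{\gij(w_r;v)}/\abs*{\nabla \gij(w_r;v)}^2$, while the child subregrets $\R{i+1}{2j-1}(w_r)$ and $\R{i+1}{2j}(v-w_r)$ appear exactly once per distinct point. Your plan does the opposite: you bound the number of samples devoted to $w_r$ by $8\log(2T/\delta)/\abs*{\nabla\gij(w_r;v)}^2$ \emph{without} the $\log(T)^p$, and then try to generate the $\log(T)^p$ by multiplying the children's contributions by a number of ``invocations.'' That mechanism is wrong on two counts. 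First, the quantity that must carry the $\log(T)^p$ is the number of \emph{elementary} (leaf-level) queries attributed to the point $w_r$: resolving the sign of $\nabla\Gij(w_r;v)$ requires running the child searches $\D{i+1}{2j-1}(w_r)$ and $\D{i+1}{2j}(v-w_r)$ to completion, and the total elementary query count of those searches is $8\log(2T/\delta)\log(T)^{p}/\abs*{\nabla \Gij(w_r;v)}^2$ (this is the paper's Lemma~\ref{lemma:gradient_samples}, proved by induction on $p$: each child search has at most $\logd(T)$ distinct points, and each of those points needs at most $8\log(2T/\delta)\log(T)^{p-1}/\abs*{\nabla \Gij(w_r;v)}^2$ queries because Lemma~\ref{lemma:gradient_order} guarantees the child's gradients dominate $\abs*{\nabla \Gij(w_r;v)}$). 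Every one of those elementary time steps incurs the suboptimality $\abs*{\gij(w_r;v)}$, which is exactly why the direct term carries $\log(T)^p$. The extra $\log(T)$ per level thus comes from the number of \emph{distinct} points in each child search, not from repeated invocations of the children. Second, your own final paragraph — correctly — insists that $\D{i+1}{2j-1}(w_r)$ is a single object run once per distinct $w_r$, which contradicts the ``once per top-level sample'' multiplication you rely on in the second paragraph; if you follow the final paragraph's accounting and keep the direct term without $\log(T)^p$, you end up claiming a bound strictly stronger than the proposition, and it is false.

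The repair is to separate the two lemmas the paper uses. First prove the exact regret decomposition (Lemma~\ref{lemma:subregret}): $\Rij(v)$ equals the sum over all $\Tij(v)$ elementary time steps of $\abs*{\Gij(\zij(t);v)-\Gij({z\st}^{(i)}_j(v);v)}$, plus, for each \emph{distinct} queried point $z$, the subregrets $\R{i+1}{2j-1}(z)+\R{i+1}{2j}(v-z)$ — a telescoping of the suboptimality across the levels of the tree. Then, separately, bound the number of elementary time steps spent at each distinct $w_r$ by $8\log(2T/\delta)\log(T)^p/\abs*{\nabla \Gij(w_r;v)}^2$ by induction on $p$ as sketched above. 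Your base case $p=0$ and your justification of $\rmax\leq\logd(T)$ (precision floor $1/(LT)$ from Lipschitz continuity, so at most $\logd(T)$ halvings are useful) are fine and match the paper.
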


This proposition is a direct consequence of the following two lemmas that are proven in Appendix~\ref{Appendix:Tech}: Lemma~\ref{lemma:subregret} gives an expression to compute the subregret $\Rij(v)$ and Lemma~\ref{lemma:gradient_samples} gives a bound on the number of samples  needed to compute $\nabla \Gij(w;v)$ at a given precision.

\begin{lemma} \label{lemma:subregret}
The subregret $\Rij(v)$ verifies
\begin{align*}
\Rij(v)&=\sum_{t=1}^{\Tij(v)} \left( \abs*{\Gij(\zij(t);v)-\Gij({z\st}^{(i)}_j(v);v))}\right) \\ &\qquad +\sum_{z \in \{\zij(t), t=1, \dots, \Tij(v) \}} \R{i+1}{2j-1}(z)+\R{i+1}{2j}(v-z)
\end{align*}
where ${z\st}^{(i)}_j(v)$ is the point where $\Gij(\cdot\,;v)$ reaches its maximum and where the successive points tested by the binary search $\Dij(v)$ are the (not necessarily distinct) $\zij(t)$.
\end{lemma}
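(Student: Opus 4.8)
Lemma~\ref{lemma:subregret} is a bookkeeping identity: it unpacks the definition of the subregret $\Rij(v)$ — the regret accumulated while the binary search $\Dij(v)$, together with all the searches it recursively launches to estimate the gradients $\nabla\Gij(\cdot\,;v)$, is active — into a ``local'' part produced by $\Dij(v)$ itself and the inherited parts of its two children $\D{i+1}{2j-1}$ and $\D{i+1}{2j}$ in the tree. The plan is to go sample by sample, record which searches are active and what split of the current budget they have committed to, split each sample's instantaneous regret accordingly, and then regroup the terms; the argument is an induction on the height of the node in the tree.

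\textbf{Base case.} When the two children of the node $(i,j)$ are leaves, i.e.\ $\Gij(\cdot\,;v) = f_{\mathrm{left}}(\cdot) + f_{\mathrm{right}}(v-\cdot)$ with $f_{\mathrm{left}},f_{\mathrm{right}}\in\mathcal F$, the search $\Dij(v)$ is exactly an instance of Algorithm~\ref{Algo:K2} run on $\Gij(\cdot\,;v)$: it never launches nested searches (the leaf gradients are observed directly, up to noise), and each query point $w$ is sampled for a block of $N_w$ consecutive steps. Hence $\Rij(v)=\sum_{w} N_w\,\abs*{\Gij(w;v)-\Gij({z\st}^{(i)}_j(v);v)}$, which is exactly the first sum of the statement — the points $\zij(t)$ being the query points listed with their multiplicities $N_w$ — while the second sum is empty. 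This is just the decomposition~\eqref{eq:master} transcribed for $\Gij(\cdot\,;v)$.

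\textbf{Induction step.} At an internal node, index by $t=1,\dots,\Tij(v)$ the samples taken while $\Dij(v)$ is active, and let $\zij(t)\in[0,v]$ be the query point of $\Dij(v)$ at the time of sample $t$ (it is constant over each maximal block of samples during which $\Dij(v)$ is determining the sign of $\nabla\Gij(\zij(t);v)$, possibly across several refinement rounds). Once $\Dij(v)$ has committed to the split $(\zij(t),\,v-\zij(t))$, the best value attainable over the leaves below the node $(i,j)$ is, by separability of $F$ and the definition of $\H{i+1}{2j-1},\H{i+1}{2j}$ as maxima over allocations, equal to $\H{i+1}{2j-1}(\zij(t))+\H{i+1}{2j}(v-\zij(t))=\Gij(\zij(t);v)$. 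Therefore, writing $V_t$ for the value realized over the leaves below $(i,j)$ at sample $t$, the instantaneous regret of sample $t$ splits as
\[
\Gij({z\st}^{(i)}_j(v);v)-V_t
=\underbrace{\abs*{\Gij(\zij(t);v)-\Gij({z\st}^{(i)}_j(v);v)}}_{\text{(a): cost of the split currently committed to}}
\;+\;\underbrace{\Big(\Gij(\zij(t);v)-V_t\Big)}_{\text{(b): cost of realizing the two sub-budgets}}.
\]
The boundary cases ${z\st}^{(i)}_j(v)\in\{0,v\}$ of Lemma~\ref{Envelop} are irrelevant here: they concern gradients, whereas the identity above is purely at the level of values and holds for any split, since $\H{i+1}{2j-1},\H{i+1}{2j}$ are maxima.

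\textbf{Regrouping and the main obstacle.} Summing (a) over $t=1,\dots,\Tij(v)$ yields precisely $\sum_{t=1}^{\Tij(v)}\abs*{\Gij(\zij(t);v)-\Gij({z\st}^{(i)}_j(v);v)}$. For (b) I would group the samples by the distinct query points $z$ visited by $\Dij(v)$: the samples charged to a given $z$ are exactly those consumed — across all refinement rounds of that query — by the two children searches $\D{i+1}{2j-1}(z)$ and $\D{i+1}{2j}(v-z)$ used to estimate $\nabla\H{i+1}{2j-1}(z)$ and $\nabla\H{i+1}{2j}(v-z)$, and their aggregate contribution to (b) is, by the same definition of subregret applied one level down, $\R{i+1}{2j-1}(z)+\R{i+1}{2j}(v-z)$. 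Summing over the distinct $z$ gives the second sum of the statement, which together with the base case closes the induction. There is essentially no analytic content; the only place an error can enter is this regrouping — one must be sure every sample is charged to exactly one query point $z$ of $\Dij(v)$ and, within that, to the correct child-search instance (the same instance being continued, not restarted, when $\Dij(v)$ re-queries $z$ during a refinement round), so that the grouped sums match $\R{i+1}{2j-1}(z)$ and $\R{i+1}{2j}(v-z)$ with neither double counting nor omission — and that the committed budgets always sum to $v$, so that these children searches collectively visit exactly the samples attributed to $\Dij(v)$.
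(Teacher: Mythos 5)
Your proof is correct and follows essentially the same route as the paper, which disposes of this lemma in two sentences as a definitional unpacking of the subregret into the local value-gap terms of $\Dij(v)$ plus the inherited subregrets of its two children. Your version is more careful on the one point where the bookkeeping could go wrong --- grouping the terms of part (b) by \emph{distinct} query points $z$ so that the second sum matches the statement without double counting --- which is a worthwhile clarification but not a different argument.
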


\begin{lemma}
\label{lemma:gradient_samples}
A point $w$ tested by the binary search $\Dij(v)$ has to be sampled at most a number of times equal to
\[
8\log(2T/\delta)\dfrac{\log(T)^{p}}{\abs*{\nabla \Gij(w;v)}^2},
\]
where $p$ is the distance of the node $\Dij(v)$ to the bottom of the binary tree: $p=\log_2(K)-1-i$.
\end{lemma}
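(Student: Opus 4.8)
The plan is to prove the lemma by induction on $p = \log_2(K) - 1 - i$, the distance of the node $\Dij(v)$ to the bottom of the tree; the quantity to control is the number $M$ of noisy leaf-gradient queries needed to learn the sign of $\nabla\Gij(w;v)$ with confidence $1-\delta$.

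For the base case $p = 0$, the node $\Dij(v)$ is a parent of two leaves $f_{k_1}, f_{k_2}$, so $\Gij(\point;v) = f_{k_1}(\point) + f_{k_2}(v-\point)$ and $\nabla\Gij(w;v) = f'_{k_1}(w) - f'_{k_2}(v-w)$ is observed directly, with bounded noise. Lemma~\ref{lemma:precision} then gives $M \le 8\log(2T/\delta)/\abs{\nabla\Gij(w;v)}^2$, which is exactly the claimed bound since $\log(T)^0 = 1$.

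For the inductive step, I would assume the bound at distance $p-1$ and use that, to decide the sign of $\nabla\Gij(w;v) = \nabla\H{i+1}{2j-1}(w) - \nabla\H{i+1}{2j}(v-w)$, the algorithm runs the two child searches $\D{i+1}{2j-1}(w)$ and $\D{i+1}{2j}(v-w)$: the left one is halted as soon as $\abs{\nabla\G{i+1}{2j-1}(w'_N;w)} < \abs{\nabla\Gij(w;v)}$ at its last query point $w'_N$ (and symmetrically for the right one), so that the induced midpoint estimate of $\nabla\Gij(w;v)$ is accurate to within $\abs{\nabla\Gij(w;v)}$ and hence sign-determining. Two ingredients then combine: each child search queries at most $\rmax \le \log_2(T)$ distinct points (the a priori bisection bound already used in Proposition~\ref{prop:subregret2}), and by Lemma~\ref{lemma:gradient_order} every queried point $w'$ of $\D{i+1}{2j-1}(w)$ satisfies $\abs{\nabla\G{i+1}{2j-1}(w';w)} \ge \abs{\nabla\Gij(w;v)}$, and likewise for the right child. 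The induction hypothesis, applied at distance $p-1$ to such a point, bounds the samples spent there by $8\log(2T/\delta)\log(T)^{p-1}/\abs{\nabla\G{i+1}{2j-1}(w';w)}^2 \le 8\log(2T/\delta)\log(T)^{p-1}/\abs{\nabla\Gij(w;v)}^2$. Summing over the at most $\log_2(T)$ queried points of the two child searches multiplies this by a factor $\log(T)$ (up to the universal constant absorbing the factor two from the two children), which yields $M \le 8\log(2T/\delta)\log(T)^{p}/\abs{\nabla\Gij(w;v)}^2$ and closes the induction.

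I expect the main obstacle to be making the recursive stopping rule rigorous and checking that the lower bound of Lemma~\ref{lemma:gradient_order} genuinely holds at every point queried by a child search: this is precisely where the envelope theorem (Lemma~\ref{Envelop}) and its boundary cases (where $\nabla\H{i}{j}$ reduces to the gradient of a single child) must be handled carefully, and where one must verify that the final, stopping query point of a child search does not spoil the estimate. Keeping the constant and the powers of $\log(T)$ consistent, so that the exponent grows by exactly one per level and ends at $p$, is the remaining and essentially routine bookkeeping.
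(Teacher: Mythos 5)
Your proof is correct and follows essentially the same route as the paper's: an induction on the distance $p$ to the bottom of the tree, with the base case handled by Lemma~\ref{lemma:precision} and the inductive step combining the bound of at most $\log_2(T)$ distinct query points per child search with the gradient-ordering inequality of Lemma~\ref{lemma:gradient_order} to keep the denominator $\abs{\nabla \Gij(w;v)}^2$ and gain exactly one factor of $\log(T)$ per level. The bookkeeping issues you flag (the factor of two from the two children, the stopping rule at the last query point) are glossed over in the paper's own argument as well, so nothing essential is missing relative to it.
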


Finally it now remains  to control the different ratios $\Big|\gij(w_r;v)\Big|/\Big|\nabla \gij(w_r;v)\Big|^2$, using the \LL inequality and techniques similar to the case of $K=2$. The main difference is the binary tree we construct that imbricates binary searches. The overall idea is that each layer of that tree adds a multiplicative factor of $\log(T)$.

\subsection{Proof of Theorem \ref{TH:K4Beta} with $\beta >2$}
\label{ssec:betag2}
\begin{proof}
Let us first bound a sub-regret $\Rij(v)$ for $i \neq 0$.
Proposition~\ref{prop:subregret2} gives with $p$ the distance from $\Dij(v)$ to the bottom of the tree,
\[
\begin{split}
\Rij(v) \leq \sum_{m=1}^{\logd(T)} 8\log(2T/\delta)\dfrac{\abs*{\gij(w_m;v)}}{\abs*{\nabla \gij(w_m;v)}^2} \logd(T)^p \\+ \R{i+1}{2j-1}(w_m)+\R{i+1}{2j}(v-w_m).
\end{split}
\]
For the sake of simplicity we will note $g=\gij(\point;v)$, and we will begin by bounding
\[
R=\log(T)^p\sum_{m=1}^{\logd(T)} \dfrac{\abs*{g(w_m)}}{\abs*{\nabla g(w_m)}^2}.
\]
We use the \LL inequality to obtain that $\abs{g(w_m)}\leq c \abs{\nabla g(w_m)}^{\beta}$.
This gives
\begin{align*}
R &\leq c^{2/\beta}\logd(T)^p\sum_{m=1}^{\logd(T)} \abs*{g(w_m)}^{1-2/\beta}
\end{align*}

We are now in a similar situation as in the proof of Theorem~\ref{TH:K2Beta} in the case where $\beta>2$. Using the fact that $\abs{g(w_m)} \leq L2^{-m}$, we have
\[
R \leq \dfrac{1}{1-2^{2/\beta-1}}c^{2/\beta}L^{1-2/\beta}\logd(T)^p.
\]
Let us note $C\doteq \dfrac{1}{1-2^{2/\beta-1}}c^{2/\beta}L^{1-2/\beta}$. We have $R\leq C \logd(T)^p$.

We use now Proposition~\ref{prop:subregret2} which shows that
\[
\Rij(v) \leq 8\log(2T/\delta) \cdot C\logd(T)^p + \sum_{m=1}^{\log_2(T)} \R{i+1}{2j-1}(w_m) + \sum_{m=1}^{\logd(T)}\R{i+1}{2j}(v-w_m).
\]
Let us now define the sequence $A_p=2A_{p-1}+1$ for $p\geq 1$, and $A_0=1$.
The bound we have just shown let us show by recurrence that 
\[
\Rij(v) \leq 8\log(2T/\delta) \cdot A_p C  \log(T)^p.
\]
Lemma~\ref{lemma:arith} shows that $A_p=2^{p+1}-1\leq 2^{p+1}$. Moreover for $i=0$, we have $p=\logd(K)-1$. Consequently for $i=0$, $A_p \leq K$.

With the choice of $\delta=2/T^2$ we have finally that \[R(T)=\dfrac{\R{0}{1}(1)}{T}\leq 8 \cdot K C \dfrac{\log(T)^{\log_2(K)}}{T}\lesssim \dfrac{1}{1-2^{2/\beta-1}}c^{2/\beta}L^{1-2/\beta} K \dfrac{\log(T)^{\log_2(K)}}{T} .\]

\end{proof}

\subsection{Proof of Theorem \ref{TH:K4Beta} with $\beta =2$}

\begin{proof}

Let us first bound a sub-regret $\Rij(v)$ for $i \neq 0$.
Proposition~\ref{prop:subregret2} gives with $p$ the distance from $\Dij(v)$ to the bottom of the tree,
\[
\Rij(v) \leq \sum_{m=1}^{\logd(T)} 8\log(2T/\delta)\dfrac{\abs*{\gij(w_m;v)}}{\abs*{\nabla \gij(w_m;v)}^2} \logd(T)^p + \R{i+1}{2j-1}(w_m)+\R{i+1}{2j}(v-w_m).
\]
For the sake of simplicity we will note $g=\gij(\point;v)$ and and we will begin by bounding
\[
R=\sum_{m=1}^{\log(T)} \dfrac{\abs*{g(w_m)}}{\abs*{\nabla g(w_m)}^2}\log(T)^p.
\]
\LL inequality gives $\abs{g(w_m)}\leq c\abs{\nabla g(w_m)}^2$, leading to
\[
R \leq \sum_{m=1}^{\log(T)} c \log(T)^p \leq c \log(T)^{p+1}.
\]
\\
An immediate recurrence gives that, as in the case where $\beta>2$,
\[
\Rij(v) \leq 8 A_p c \log(2T/\delta) \log(T)^{p+1}.
\]
And finally we have, noting $g \doteq \g{0}{1}(\point;1)$ and $p=\log_2(K)-1$
\[
\R{0}{1}(1) \leq 8 A_p c \log(2T/\delta) \log(T)^{\log_d(K)}.
\]

Giving finally, with the choice $\delta=2/T^2$ and since $A_p \leq K$ for $p=\logd(K)-1$,
\[
R(T)=8 A_p c\log(2T/\delta)\dfrac{R}{T} \leq 24c K \dfrac{\log(T)^{\logd(K)+1}}{T}.
\]
\end{proof}

\subsection{Proof of Theorem \ref{TH:K4Beta} with $\beta <2$}

\begin{proof}
Let us first bound a sub-regret $\Rij(v)$ for $i \neq 0$.
Proposition~\ref{prop:subregret2} gives with $p$ the distance from $\Dij(v)$ to the bottom of the tree,
\[
\Rij(v) \leq \sum_{m=1}^{\log(T)} 8\log(2T/\delta)\dfrac{\abs*{\gij(w_m;v)}}{\abs*{\nabla \gij(w_m;v)}^2} \log(T)^p + \R{i+1}{2j-1}(w_m)+\R{i+1}{2j}(v-w_m).
\]
For the sake of simplicity we will note $g=\gij(\point;v)$ and  we will begin by bounding
\[
R=\sum_{m=1}^{\logd(T)} \dfrac{\abs*{g(w_m)}}{\abs*{\nabla g(w_m)}^2}\log(T)^p.
\]
\LL inequality gives $\abs{g(w_m)}\leq c\abs{\nabla g(w_m)}^{\beta}$, leading to
\[
R \leq \sum_{m=1}^{\logd(T)} c \abs*{\nabla g(w_m)}^{\beta-2} \logd(T)^p.
\]
We want to prove by recurrence that, with $p=\logd(K)-1-i$ and $A_p$ defined in Section~\ref{ssec:betag2}.
\[
\label{eq:rec}
\numberthis
\Rij(v) \leq 8\log(2T/\delta)cA_p\sum_{r=1}^{\rmax} \abs*{\nabla \Gij(w_r; v)}^{\beta-2} \logd(T)^p.
\]
The result is true for $p=0$ using what has be done previously.
Suppose that it holds at level $i+1$ in the tree. Then, Proposition~\ref{prop:subregret2} shows that
\begin{align*}
\Rij(v) &\leq \sum_{r=1}^{\rmax} 8\log(2T/\delta)\dfrac{\abs*{\gij(w_r;v)}}{\abs*{\nabla \Gij(w_r;v)}^2} \log(T)^p + \R{i+1}{2j-1}(w_r)+\R{i+1}{2j}(v-w_r) \\
&\leq 8\log(2T/\delta)\bigg(\logd(T)^p\sum_{r=1}^{\rmax} c \abs*{\nabla \Gij(w_r)}^{\beta-2} + \sum_{r=1}^{\rmax} cA_{p-1}\sum_{s=1}^{\smax} \abs*{\nabla \G{i+1}{2j-1}(x_s; w_r)}^{\beta-2} \logd(T)^{p-1} \\&\phantom{aaaaaaaaaaaaaaaaaaaaaaaaaaaaaaaaa}+ \sum_{r=1}^{\rmax} cA_{p-1}\sum_{s=1}^{\smax} \abs*{\nabla \G{i+1}{2j}(\tilde{x}_s; v-w_r)}^{\beta-2} \logd(T)^{p-1}\bigg).
\end{align*}
We have noted by $x_s$ and $\tilde{x}_s$ the points tested by the binary searches $\D{i+1}{2j-1}(w_r)$ and $\D{i+1}{2j}(v-w_r)$ and $\smax\leq \logd(T)$ the number of points tested by those binary searches.
We now use the fact that $\beta-2<0$ and Lemma~\ref{lemma:gradient_order} shows that $\abs*{\nabla \G{i+1}{2j-1}(x_s; w_m)} \geq \abs*{\nabla \Gij(w_r)}$, giving
\[
\Rij(v) \leq (1+2A_{p-1}) c \cdot 8 \log(2T/\delta) \sum_{r=1}^{\rmax} \abs*{\nabla \Gij(w_r;v)}^{\beta-2}\logd(T)^p,
\]
proving Equation~\eqref{eq:rec}.
And finally we have, as in the proof of Theorem~\ref{TH:K2Beta}, noting $g \doteq \g{0}{1}(\point;1)$,
\[
\R{0}{1}(1) \leq K c \cdot 8 \log(2T/\delta)\sum_{r=1}^{\rmax} \abs*{\nabla g(u_r)}^{\beta-2} \log(T)^{\log_2(K)-1}.
\]
We note now $g_r\doteq \abs*{\nabla g(u_r)}$ and we have the constraint, with $T'=\dfrac{T}{8\log(2T/\delta)\log(T)^{\logd(K)-1}}$
\[
T'=\sum_{r=1}^{\rmax}\dfrac{1}{g_r^2}.
\]
We want to maximize $R\doteq\sum_{r=1}^{\rmax} g_r^{\beta-2}$ under the above constraint.

In order to do that we introduce the following Lagrangian function:
\[
\lagr:(g_1,\dots,g_{\rmax},\lambda) \mapsto \sum_{r=1}^{\rmax} g_r^{\beta-2} + \lambda \left(T'-\sum_{r=1}^{\rmax}\dfrac{1}{g_r^2}\right).
\]
The Karush-Kuhn-Tucker theorem gives
\begin{align*}
0&=\dpart{\lagr}{g_r}(g_1,\dots,g_{\rmax},\lambda) \\
0&=(\beta-2)g_r^{\beta-3}+\lambda\left(2g_r^{-3} \right) \\
0&=(\beta-2)g_r^{\beta}+2\lambda \\
g_r&=\left(\dfrac{2\lambda}{2-\beta}\right)^{1/\beta}.
\end{align*}

The expression of $T'$ gives
\begin{align*}
T'&=\sum_{r=1}^{\rmax}g_r^{-2} \\
T'&=\sum_{r=1}^{\rmax} \left(\dfrac{2\lambda}{2-\beta}\right)^{-2/\beta}\\
\lambda^{-2/\beta}&=\dfrac{T'}{\sum_{r=1}^{\rmax} (1-\beta/2)^{2/\beta}}\\
\lambda&=T'^{-\beta/2}\rmax^{\beta/2}(1-\beta/2).
\end{align*}

We can now bound $R$:
\begin{align*}
R&\leq \sum_{r=1}^{\rmax}g_r^{\beta-2} \\
&\leq\sum_{r=1}^{\rmax}\left(\dfrac{2\lambda}{2-\beta}\right)^{1-2/\beta} \\
&\leq \rmax (1-\beta/2)^{2/\beta-1}\lambda^{1-2/\beta} \\
&\leq \rmax (1-\beta/2)^{2/\beta-1} \left(T'^{-\beta/2}\rmax^{\beta/2}(1-\beta/2)\right)^{1-2/\beta} \\
&\leq \rmax^{\beta/2} T'^{1-\beta/2}.
\end{align*}

Now we use the fact that $R(T)=\dfrac{\R{0}{1}(1)}{T}$ and $\R{0}{1}(1) \leq K c \cdot 8 \log(2T/\delta) \log(T)^{\logd(K)-1} R$.

Taking $\delta=2/T^2$, we have $\log(2T/\delta)=3\log(T)$. We have, since $\rmax \leq \log(T)$,
\begin{align*}
R(T) &\leq \dfrac{1}{T}K c \cdot 8 \log(2T/\delta) \log(T)^{\logd(K)-1} R \\
&\leq \dfrac{24 Kc}{T} \log(T)^{\logd(K)} R \\
&\leq \dfrac{24 Kc}{T} \log(T)^{\logd(K)} \rmax^{\beta/2} T'^{1-\beta/2} \\
&\leq \dfrac{24 Kc}{T} \log(T)^{\logd(K)} \log(T)^{\beta/2} \left(\dfrac{T}{24 \log(T)^{\logd(K)}}\right)^{1-\beta/2} \\
&\leq 24^{\beta/2}Kc \left(\dfrac{\log(T)^{\logd(K)+1}}{T}\right)^{\beta/2}.
\end{align*}

\end{proof}


\section{Proof of Technical and Simple Results }\label{Appendix:Tech}

\subsection{A simple lemma}
We start with a simple lemma that will be useful in the following proofs.
\begin{lemma}
\label{lemma:concav}
For all $x \in [0,1]$, we have $\abs{g(x)}\leq \abs{g'(x)}\abs{x\st-x}$.
\end{lemma}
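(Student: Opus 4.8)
The plan is to use concavity of $g$ directly. Recall that $g$ is a non-positive concave function on $[0,1]$ with $g(x\st)=0$, where $x\st$ is the maximizer of $g$. Since $g$ is differentiable and concave, its graph lies below any of its tangent lines; in particular, evaluating the tangent at the point $x$ at the abscissa $x\st$ gives
\[
g(x\st) \leq g(x) + g'(x)(x\st - x).
\]

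First, I would rewrite this inequality using $g(x\st)=0$, which yields $0 \leq g(x) + g'(x)(x\st-x)$, i.e. $-g(x) \leq g'(x)(x\st - x)$. Since $g$ is non-positive, $-g(x) = \abs{g(x)}$, so $\abs{g(x)} \leq g'(x)(x\st - x)$. Next, I would bound the right-hand side by its absolute value: $g'(x)(x\st-x) \leq \abs{g'(x)}\abs{x\st-x}$. Chaining these gives $\abs{g(x)} \leq \abs{g'(x)}\abs{x\st - x}$, which is exactly the claim.

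There is essentially no obstacle here; the only mild subtlety is handling the sign of $g'(x)(x\st-x)$, but this is automatically taken care of by passing to absolute values. One could alternatively phrase the argument via monotonicity of $g'$ (as is done in the strongly concave case in Appendix~\ref{Appendix:K2}), but the tangent-line inequality is the most direct route and avoids any case distinction on whether $x \leq x\st$ or $x \geq x\st$.

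Here is the proof I would write.
\begin{proof}
Since $g$ is concave and differentiable on $[0,1]$, for every $x \in [0,1]$ the graph of $g$ lies below its tangent at $x$, so that
\[
g(x\st) \leq g(x) + g'(x)(x\st - x).
\]
Using $g(x\st) = 0$ we get $-g(x) \leq g'(x)(x\st - x)$, and since $g \leq 0$ this reads $\abs{g(x)} \leq g'(x)(x\st-x) \leq \abs{g'(x)}\abs{x\st - x}$, which is the desired inequality.
\end{proof}
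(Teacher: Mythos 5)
Your proof is correct. The paper proves this lemma differently: it writes $\abs{g(x)} = \int_x^{x\st} g'(y)\,\mathrm{d}y$ and then bounds the integrand using the monotonicity of $g'$ (which is non-increasing by concavity), splitting into the two cases $x<x\st$ and $x>x\st$ to keep track of signs. You instead invoke the first-order characterization of concavity, $g(x\st) \leq g(x) + g'(x)(x\st-x)$, which delivers $-g(x) \leq g'(x)(x\st-x)$ in one line and lets the final passage to absolute values absorb all sign considerations. The two arguments rest on the same underlying fact (concavity of $g$), but yours is the more streamlined route: it needs only differentiability and concavity, requires no case distinction, and does not use the integral representation. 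The paper's version makes the sign of $g'(x)$ on each side of $x\st$ explicit, which is mildly informative but not needed for the stated inequality. Either proof is acceptable here.
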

\begin{proof}
Since $g$ is non-positive and $g(x\st)=0$, we have for all $x \in [0,1]$:
\[
\abs{g(x)}=-g(x)=g(x\st)-g(x)=\int_x^{x\st}g'(y) \dd y.
\]
Let us distinguish two cases depending on $x<x\st$ or $x>x\st$.
\begin{itemize}
\item $x<x\st$: since $g'$ is non-increasing (because $g$ is concave) we have for all $y \in [x,x\st]$, $g'(y)\leq g'(x)$ and therefore
\[
\abs{g(x)} \leq \abs{x\st-x}g'(x)=\abs{g'(x)}\abs{x\st-x}.
\]
We have indeed $g'(x)\geq 0$ because $x<x\st$.
\item $x>x\st$: similarly we have for all $y \in [x\st,x]$, $g'(y)\geq g'(x)$ and therefore
\[
\abs{g(x)}=-\int_{x\st}^{x}g'(y) \dd y \leq \abs{x\st-x}(-g'(x))=\abs{g'(x)}\abs{x\st-x}.
\]
We have indeed $g'(x)\leq 0$ because $x>x\st$ and $g'$ non-increasing.
\end{itemize}
\end{proof}

\subsection{Proof of Lemma~\ref{lemma:precision}}
\begin{proof}This lemma is just a consequence of Hoeffding inequality. Indeed, it implies that, at stage $n \in \mathds{N}$,
\[
\bP\Big\{ \Big| \frac{1}{n} \sum_{t=1}^n X_t -x \Big| \geq \sqrt{\frac{2\log(\frac{2T}{\delta})}{n}} \Big\} \leq \frac{\delta}{T}\ ,
\]
thus with probability at least $1-\delta$, $x$ belongs to $\Big[\frac{1}{N_x} \sum_{t=1}^{N_x} X_t \pm \sqrt{\frac{2\log(\frac{2T}{\delta})}{N_x}}\Big]$ and the sign of $x$ is never mistakenly determined.

On the other hand, at stage $N_x$, it holds on the same event that $\frac{1}{N_x} \sum_{t=1}^{N_x} X_t$ is $\frac{x}{2}$-close to $x$, thus 0 no longer belongs to the interval  $\Big[\frac{1}{N_x} \sum_{t=1}^{N_x} X_t \pm \sqrt{\frac{2\log(\frac{2T}{\delta})}{N_x}}\Big]$. 
\end{proof}

\subsection{A simple arithmetic lemma}
We state and prove here a simple arithmetic lemma useful in the proof of Theorem~\ref{TH:K4Beta}.
\begin{lemma}
\label{lemma:arith}
Let $(u_n)_{n\in \bN} \in \bN^{\bN}$ defined as follows: $u_0=1$ and $u_{n+1}=2u_{n}+1$.
Then \[\forall n \in \bN, \ u_n=2^{n+1}-1.\]
\end{lemma}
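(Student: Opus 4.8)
The plan is to proceed by a straightforward induction on $n \in \bN$, or equivalently by reducing the recurrence to a purely geometric one. I will present the induction route, which is the cleaner of the two.

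First I would check the base case: by definition $u_0 = 1 = 2^{0+1} - 1$, so the claimed formula holds at $n=0$. Next, for the inductive step, I would assume that $u_n = 2^{n+1} - 1$ for some $n \in \bN$ and compute
\[
u_{n+1} = 2 u_n + 1 = 2\left(2^{n+1} - 1\right) + 1 = 2^{n+2} - 2 + 1 = 2^{n+2} - 1 = 2^{(n+1)+1} - 1,
\]
which is exactly the claimed formula at index $n+1$. By the principle of induction, $u_n = 2^{n+1} - 1$ for all $n \in \bN$, which concludes the argument.

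As an alternative (and as a sanity check one could include in a remark rather than in the proof proper), observe that setting $v_n \doteq u_n + 1$ turns the affine recurrence into the homogeneous one $v_{n+1} = 2 u_n + 2 = 2 v_n$ with $v_0 = 2$; hence $v_n = 2^{n+1}$ and $u_n = v_n - 1 = 2^{n+1} - 1$. I do not anticipate any genuine obstacle here: the only thing to be careful about is bookkeeping the exponents correctly (writing $2^{n+2}-1 = 2^{(n+1)+1}-1$ so that the induction hypothesis is matched verbatim at the next index), and making sure the base case is stated at $n=0$ rather than $n=1$ to cover all of $\bN$ as used in the proof of Theorem~\ref{TH:K4Beta}.
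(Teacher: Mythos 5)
Your proof is correct. The paper itself takes the route you relegate to a remark: it sets $v_n = u_n + 1$, observes $v_0 = 2$ and $v_{n+1} = 2v_n$, and concludes $v_n = 2^{n+1}$, hence $u_n = 2^{n+1}-1$. Your primary argument by induction is equally valid and just as short; the substitution argument has the minor advantage of explaining \emph{where} the closed form comes from rather than merely verifying it, but for a lemma of this simplicity the two are interchangeable, and since you supply both, there is nothing to add.
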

\begin{proof}
Let consider the sequence $v_n=u_n+1$. We have $v_0=2$ and $v_{n+1}=2v_n$. Consequently $v_n=2\cdot 2^n=2^{n+1}$.
\end{proof}

\subsection{Proof of Proposition~\ref{prop:subregret2}}

\begin{proof}
The statement of Proposition~\ref{prop:subregret2} is a restatement of Lemma~\ref{lemma:subregret} using the fact that each different point of the binary search $\Dij(v)$ is sampled a number of times equal to $8\log(2T/\delta)\dfrac{\log(T)^{p}}{\abs*{\nabla \Gij(w;v)}^2}$ thanks to Lemma~\ref{lemma:gradient_samples}. The fact that $\rmax\leq\logd(T)$ comes from the fact that running a binary search to a precision smaller than $1/LT$ does not give improved bound on the regret since the reward functions are $L$-Lipschitz continuous. Therefore the binary searches are stopped after more than $\logd(T)$ samples.
\end{proof}

\subsubsection{Proof of Lemma~\ref{lemma:subregret}}
\begin{proof}
The regret of the binary search $\Dij(v)$ is the sum for all steps $t \in [\Tij(v)]$ of the sum of two terms: the difference of the function values of $\Gij(\cdot\,;v)$ between the optimal value ${z\st}^{(i)}_j(v)$ and $\zij(t)$ and the sub-regrets $\R{i+1}{2j-1}(\zij(t))$ and $\R{i+1}{2j}(v-\zij(t))$ of the binary searches that are the children of $\Dij(v)$.
\end{proof}

\subsubsection{Proof of Lemma~\ref{lemma:gradient_samples}}

\begin{proof}
The binary search $\Dij(v)$ aims at minimizing the function $(w \mapsto \Gij(w;v))$. Let us note $w_1, \dots, w_m, \dots$ the values that are tested by this binary search. During the binary search the signs of the values of $\nabla \Gij(w_m;v)$ are needed. In order to compute them the algorithm runs sub-binary searches (unless $\Dij(v)$ is a leaf) $\D{i+1}{2j-1}(w_m)$ and $\D{i+1}{2j}(v-w_m)$.

Let us now prove the result by recurrence on the distance $p$ of $\Dij$ to the closest leaf of the tree.
\begin{itemize}
\item $p=0$: $\Dij$ is a leaf. The point $w_m$ needs to be sampled $8\log(2T/\delta)/\abs*{\nabla\gij(w_m)}^2$ (this has been shown in Section~\ref{sec:k2}).

\item $p \in \bN^*$: the point $w_m$ has to be sampled a number of times equal to the number of iterations of $\D{i+1}{2j-1}(w_m)$ and $\D{i+1}{2j}(v-w_m)$. Let us therefore compute the number of samples used by $\D{i+1}{2j-1}(w_m)$. This binary search is at distance $p-1$ of the closest leaf. Therefore by hypothesis recurrence each point $x_k$ will be sampled a number of times equal to
\[N_k=8\log(2T/\delta)\dfrac{\log(T)^{p-1}}{\abs*{\nabla\G{i+1}{2j-1}(x_k)}^2}.\]
Now Lemma~\ref{lemma:gradient_order} shows that $\abs*{\nabla\G{i+1}{2j-1}(x_k)} \geq \abs*{\nabla \Gij(w_m)}$. This gives
\[
N_k \leq 8\log(2T/\delta)\dfrac{\log(T)^{p-1}}{\abs*{\nabla \Gij(w_m)}}.
\]
The same reasoning applies for the binary search $\D{i+1}{2j}(v-w_m)$, which is run in parallel to $\D{i+1}{2j-1}(w_m)$.
Since there are at most $\log_2(T)$ different points $x_k$ that are tested during the binary search $\D{i+1}{2j-1}(w_m)$, we have a final number of iterations for $w_m$ which is
\[
8\log(2T/\delta)\dfrac{\log(T)^{p}}{\abs*{\nabla \Gij(w_m)}}.
\]
This proves the result for the step $p$.
\item Finally the recurrence is complete and the result is shown.
\end{itemize}
\end{proof}


\section{Experiments}
\label{Appendix:Expe}
In this section, we illustrate the performances of our algorithm on generated data with $K=2$ resources. We have considered different possible values for the parameter $\beta \in [1, \infty)$.

In the case where $\beta=2$ we have considered the following functions:
\[
f_1:x\mapsto \dfrac{5}{6}-\dfrac{5}{48}(2-x)^3 \quad\et\quad f_2:x\mapsto \dfrac{6655}{384}-\dfrac{5}{48}\left(\dfrac{11}{5}-x\right)^3
\]
such that $g(x)=-(x-0.4)^2$. $g$ verifies the \LL inequality with $\beta=2$ and the functions $f_1$ and $f_2$ are concave, non-decreasing and take value $0$ at $0$.

We have computed the cumulated regret of our algorithm in various settings corresponding to different values of $\beta$ and we have plotted the two references rates: the lower bound $T^{-\beta/2}$ (even if the functions considered in our examples are not those used to prove the lower bound), and the upper bound $(T/\log^2(T))^{-\beta/2}$.

Our experimental results on Figures~\ref{fig:regret_1.5}, \ref{fig:regret_1.75}, \ref{fig:regret_2} and~\ref{fig:regret_2.5} indicate that our algorithm has the correct expected behavior, as its regret is ``squeezed'' between $T^{-\beta/2}$ and $(T/\log^2(T))^{-\beta/2}$ for $\beta \leq 2$ and between $T^{-1}$ and $\log(T)/T$ for $\beta \geq 2$. Moreover, the $\log-\log$ scale also illustrates that $-\beta/2$ is indeed the correct speed of convergence for functions that satisfy the \LL inequality with respect to $\beta \in [1,2]$.

\begin{figure}[H]
 \begin{minipage}[b]{.46\linewidth}
\begin{center}
\begin{tikzpicture}[domain=0:5]
\begin{axis}
        [
        ,axis x line=bottom
  		,axis y line=center
		,ylabel near ticks,
		,xlabel near ticks,
        ,width=7cm
        ,xlabel=$T$
        ,xmin=0
        ,xmax=2000000
        ,ymin=0
        ,ymax=0.003
        ,scaled ticks=false
        ,legend style={at={(0.25,1)},anchor=north west}
        ]
        \addplot[each nth point={20}, smooth,red,line width=1.2pt,mark=triangle*, mark size=2pt] table [x index={1}, y index={4}, col sep=comma] {csv/file_1.50.csv};
        \addlegendentry{$\left(T/\log(T)^2\right)^{-\beta/2}$}
        \addplot[each nth point={20}, smooth,blue,line width=1.2pt,mark=*, mark size=2pt] table [x index={1}, y index={2}, col sep=comma] {csv/file_1.50.csv};
        \addlegendentry{$R(T)$};
        \addplot[each nth point={20},smooth, green,line width=1.2pt,mark=square*, mark size=2pt] table [x index={1}, y index={3}, col sep=comma] {csv/file_1.50.csv};
        \addlegendentry{$T^{-\beta/2}$}
    \end{axis}
\end{tikzpicture}
\subcaption{Regret as a function of $T$ \label{sub:plot}}
\end{center}
\end{minipage} \hfill
 \begin{minipage}[b]{.46\linewidth}
 \begin{center}
\begin{tikzpicture}[domain=0:5]
\begin{axis}
        [
        ,axis x line=bottom
  		,axis y line=center
		,ylabel near ticks,
		,xlabel near ticks,
        ,width=7cm
        ,xlabel=$T$
        ,scaled ticks=false
        ,legend style={at={(0.05,1)},anchor=north west}
        ]
        \addplot[smooth,red,line width=1.5pt] table [x index={5}, y index={8}, col sep=comma] {csv/file_1.50.csv};
        \addplot[smooth,blue,line width=1.5pt] table [x index={5}, y index={6}, col sep=comma] {csv/file_1.50.csv};
        \addplot[smooth,green,line width=1.5pt] table [x index={5}, y index={7}, col sep=comma] {csv/file_1.50.csv};
    \end{axis}
\end{tikzpicture}
\subcaption{Regret in $\log-\log$ scale \label{sub:log}}
\end{center}
\end{minipage}
\caption{Regret, Upper-bound and Lower bound for $\beta=1.5$\label{fig:regret_1.5}}
\end{figure}
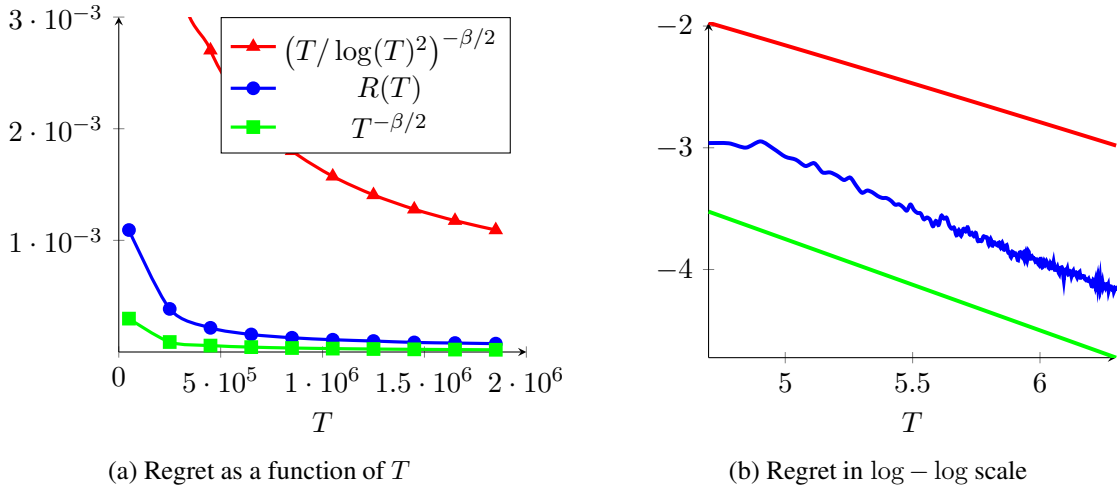

\begin{figure}[H]
 \begin{minipage}[b]{.46\linewidth}
\begin{center}
\begin{tikzpicture}[domain=0:5]
\begin{axis}
        [
        ,axis x line=bottom
  		,axis y line=center
		,ylabel near ticks,
		,xlabel near ticks,
        ,width=7cm
        ,xlabel=$T$
        ,xmin=0
        ,xmax=2000000
        ,ymin=0
        ,ymax=0.003
        ,scaled ticks=false
        ,legend style={at={(0.25,1)},anchor=north west}
        ]
        \addplot[each nth point={20}, smooth,red,line width=1.2pt,mark=triangle*, mark size=2pt] table [x index={1}, y index={4}, col sep=comma] {csv/file_1.75.csv};
        \addlegendentry{$\left(T/\log(T)^2\right)^{-\beta/2}$}
        \addplot[each nth point={20}, smooth,blue,line width=1.2pt,mark=*, mark size=2pt] table [x index={1}, y index={2}, col sep=comma] {csv/file_1.75.csv};
        \addlegendentry{$R(T)$};
        \addplot[each nth point={20},smooth, green,line width=1.2pt,mark=square*, mark size=2pt] table [x index={1}, y index={3}, col sep=comma] {csv/file_1.75.csv};
        \addlegendentry{$T^{-\beta/2}$}
    \end{axis}
\end{tikzpicture}
\subcaption{Regret as a function of $T$}
\end{center}
\end{minipage} \hfill
 \begin{minipage}[b]{.46\linewidth}
 \begin{center}
\begin{tikzpicture}[domain=0:5]
\begin{axis}
        [
        ,axis x line=bottom
  		,axis y line=center
		,ylabel near ticks,
		,xlabel near ticks,
        ,width=7cm
        ,xlabel=$T$
        ,scaled ticks=false
        ,legend style={at={(0.05,1)},anchor=north west}
        ]
        \addplot[smooth,red,line width=1.5pt] table [x index={5}, y index={8}, col sep=comma] {csv/file_1.75.csv};
        \addplot[smooth,blue,line width=1.5pt] table [x index={5}, y index={6}, col sep=comma] {csv/file_1.75.csv};
        \addplot[smooth,green,line width=1.5pt] table [x index={5}, y index={7}, col sep=comma] {csv/file_1.75.csv};
    \end{axis}
\end{tikzpicture}
\subcaption{Regret in $\log-\log$ scale}
\end{center}
\end{minipage}
\caption{Regret, Upper-bound and Lower bound for $\beta=1.75$\label{fig:regret_1.75}}
\end{figure}

\begin{figure}[H]
 \begin{minipage}[b]{.46\linewidth}
\begin{center}
\begin{tikzpicture}[domain=0:5]
\begin{axis}
        [
        ,axis x line=bottom
  		,axis y line=center
		,ylabel near ticks,
		,xlabel near ticks,
        ,width=7cm
        ,xlabel=$T$
        ,xmin=0
        ,xmax=2000000
        ,ymin=0
        ,ymax=0.003
        ,scaled ticks=false
        ,legend style={at={(0.25,1)},anchor=north west}
        ]
        \addplot[each nth point={20}, smooth,red,line width=1.2pt,mark=triangle*, mark size=2pt] table [x index={1}, y index={4}, col sep=comma] {csv/file_2.00.csv};
        \addlegendentry{$\log(T)^2/T$}
        \addplot[each nth point={20}, smooth,blue,line width=1.2pt,mark=*, mark size=2pt] table [x index={1}, y index={2}, col sep=comma] {csv/file_2.00.csv};
        \addlegendentry{$R(T)$};
        \addplot[each nth point={20},smooth, green,line width=1.2pt,mark=square*, mark size=2pt] table [x index={1}, y index={3}, col sep=comma] {csv/file_2.00.csv};
        \addlegendentry{$T^{-1}$}
    \end{axis}
\end{tikzpicture}
\subcaption{Regret as a function of $T$}
\end{center}
\end{minipage} \hfill
 \begin{minipage}[b]{.46\linewidth}
 \begin{center}
\begin{tikzpicture}[domain=0:5]
\begin{axis}
        [
        ,axis x line=bottom
  		,axis y line=center
		,ylabel near ticks,
		,xlabel near ticks,
        ,width=7cm
        ,xlabel=$T$
        ,scaled ticks=false
        ,legend style={at={(0.05,1)},anchor=north west}
        ]
        \addplot[smooth,red,line width=1.5pt] table [x index={5}, y index={8}, col sep=comma] {csv/file_2.00.csv};
        \addplot[smooth,blue,line width=1.5pt] table [x index={5}, y index={6}, col sep=comma] {csv/file_2.00.csv};
        \addplot[smooth,green,line width=1.5pt] table [x index={5}, y index={7}, col sep=comma] {csv/file_2.00.csv};
    \end{axis}
\end{tikzpicture}
\subcaption{Regret in $\log-\log$ scale}
\end{center}
\end{minipage}
\caption{Regret, Upper-bound and Lower bound for $\beta=2$\label{fig:regret_2}}
\end{figure}

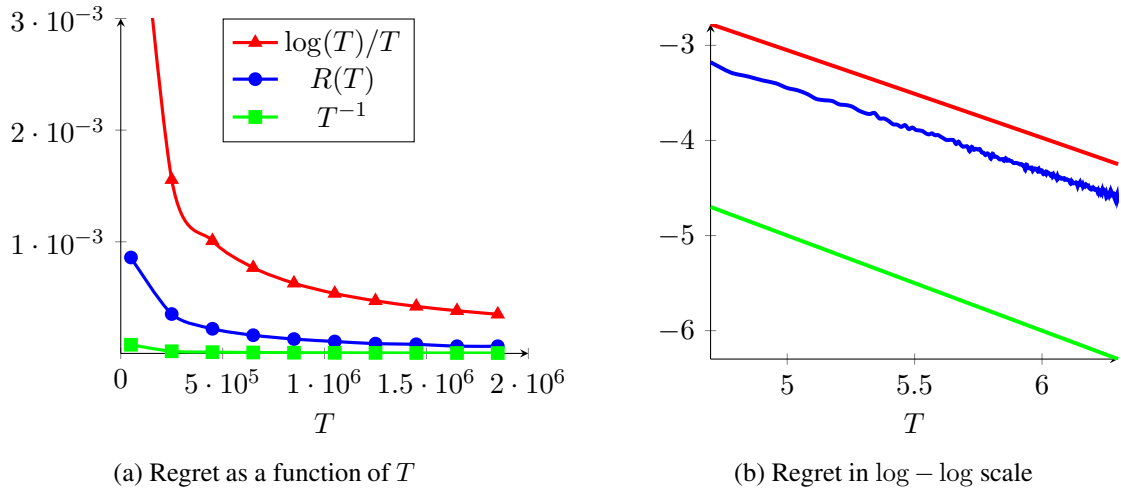
\begin{figure}[H]
 \begin{minipage}[b]{.46\linewidth}
\begin{center}
\begin{tikzpicture}[domain=0:5]
\begin{axis}
        [
        ,axis x line=bottom
  		,axis y line=center
		,ylabel near ticks,
		,xlabel near ticks,
        ,width=7cm
        ,xlabel=$T$
        ,xmin=0
        ,xmax=2000000
        ,ymin=0
        ,ymax=0.003
        ,scaled ticks=false
        ,legend style={at={(0.25,1)},anchor=north west}
        ]
        \addplot[each nth point={20}, smooth,red,line width=1.2pt,mark=triangle*, mark size=2pt] table [x index={1}, y index={4}, col sep=comma] {csv/file_1.75.csv};
        \addlegendentry{$\log(T)/T$}
        \addplot[each nth point={20}, smooth,blue,line width=1.2pt,mark=*, mark size=2pt] table [x index={1}, y index={2}, col sep=comma] {csv/file_1.75.csv};
        \addlegendentry{$R(T)$};
        \addplot[each nth point={20},smooth, green,line width=1.2pt,mark=square*, mark size=2pt] table [x index={1}, y index={3}, col sep=comma] {csv/file_1.75.csv};
        \addlegendentry{$T^{-1}$}
    \end{axis}
\end{tikzpicture}
\subcaption{Regret as a function of $T$}
\end{center}
\end{minipage} \hfill
 \begin{minipage}[b]{.46\linewidth}
 \begin{center}
\begin{tikzpicture}[domain=0:5]
\begin{axis}
        [
        ,axis x line=bottom
  		,axis y line=center
		,ylabel near ticks,
		,xlabel near ticks,
        ,width=7cm
        ,xlabel=$T$
        ,scaled ticks=false
        ,legend style={at={(0.05,1)},anchor=north west}
        ]
        \addplot[smooth,red,line width=1.5pt] table [x index={5}, y index={8}, col sep=comma] {csv/file_2.50.csv};
        \addplot[smooth,blue,line width=1.5pt] table [x index={5}, y index={6}, col sep=comma] {csv/file_2.50.csv};
        \addplot[smooth,green,line width=1.5pt] table [x index={5}, y index={7}, col sep=comma] {csv/file_2.50.csv};
    \end{axis}
\end{tikzpicture}
\subcaption{Regret in $\log-\log$ scale}
\end{center}
\end{minipage}
\caption{Regret, Upper-bound and Lower bound for $\beta=2.5$\label{fig:regret_2.5}}
\end{figure}

We plot in Figure~\ref{fig:comparison} the regret curves obtained for different values of the parameter $\beta$. This validates the fact that the convergence rates increase with the value of $\beta$ as proved theoretically.

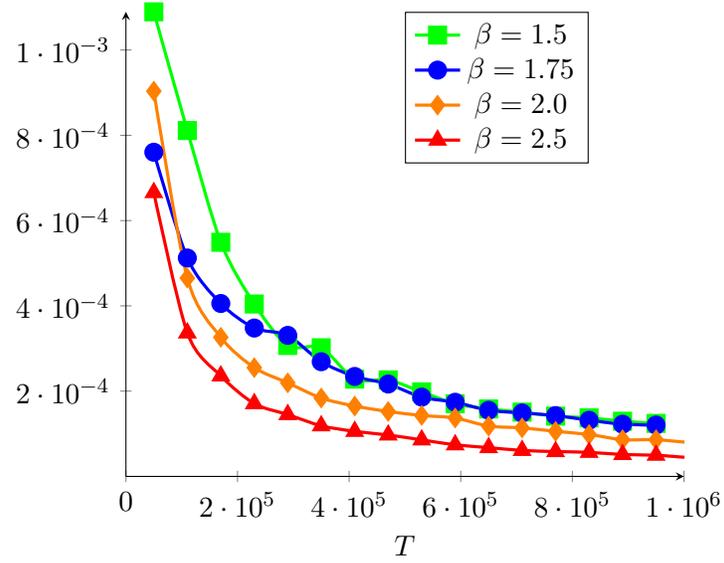
\begin{figure}[H]
\begin{center}
\begin{tikzpicture}[domain=0:5]
\begin{axis}
        [
        ,axis x line=bottom
  		,axis y line=center
		,ylabel near ticks,
		,xlabel near ticks,
        ,width=9cm
        ,xlabel=$T$
        ,xmin=0
        ,xmax=1000000
        ,ymin=0
        ,scaled ticks=false
        ,legend style={at={(.5,1)},anchor=north west}
        ]
        \addplot[each nth point={6},smooth, green,line width=1.2pt,mark=square*, mark size=3pt] table [x index={1}, y index={2}, col sep=comma] {csv/file_1.50_50.csv};
        \addlegendentry{$\beta=1.5$}
        \addplot[each nth point={6}, smooth,blue,line width=1.2pt,mark=*, mark size=3pt] table [x index={1}, y index={2}, col sep=comma] {csv/file_1.75_50.csv};
        \addlegendentry{$\beta=1.75$};
        \addplot[each nth point={6}, smooth,orange,line width=1.2pt,mark=diamond*, mark size=3pt] table [x index={1}, y index={2}, col sep=comma] {csv/file_2.00_50.csv};
        \addlegendentry{$\beta=2.0$}
        \addplot[each nth point={6}, smooth,red,line width=1.2pt,mark=triangle*, mark size=3pt] table [x index={1}, y index={2}, col sep=comma] {csv/file_2.50_50.csv};
        \addlegendentry{$\beta=2.5$}
    \end{axis}
\end{tikzpicture}
\end{center}
\caption{Regret as a Function of $T$ for different values of $\beta$ \label{fig:comparison}}
\end{figure}

\end{document}